\documentclass[final,12pt,3p]{elsarticle}

\usepackage[colorlinks=true]{hyperref}

\usepackage{amssymb}

\usepackage{lipsum}
\usepackage[labelsep=period,singlelinecheck=off,textformat=period]{caption}
\usepackage{subcaption}
\usepackage{fancyhdr} 
\usepackage{epsfig}
\usepackage{timesmt}
\usepackage{mathtools,nccmath}
\usepackage{mathrsfs}
\usepackage{wrapfig}
\usepackage[noabbrev]{cleveref}
\creflabelformat{equation}{#2\textup{#1}#3}

\usepackage{pdflscape}
\usepackage{afterpage}
\usepackage{float}
\usepackage{url}
\usepackage{enumitem}
\usepackage{accents}
\usepackage{cuted}
\usepackage{dirtytalk}
\usepackage{multirow}
\usepackage{multicol}
\usepackage{booktabs}
\usepackage{array}
\usepackage{makecell}
\usepackage{colortbl}
\usepackage{soul}
\newcolumntype{n}{>{\columncolor{blue!5}}c}

\usepackage{pbox}
\usepackage[ruled,commentsnumbered,linesnumbered,boxed]{algorithm2e}
\SetArgSty{textnormal}
\SetKwComment{Comment}{/* }{ */}

\usepackage[flushleft]{threeparttable}
\usepackage{algpseudocode}
\usepackage{setspace}
\usepackage{courier}
\usepackage{psfrag,pstool}

\newcommand{\comment}[1]{}

\Crefname{equation}{Eq.}{Eqs.}
\Crefname{figure}{Fig.}{Figs.}
\Crefname{tabular}{Tab.}{Tabs.}
\crefname{algocf}{alg.}{algs.}
\Crefname{algocf}{Algorithm}{Algorithms}
\Crefname{fct}{Fact}{Facts }

\usepackage{amsmath}
\usepackage{amssymb}
\usepackage{amsthm}
\usepackage{mathrsfs}

\usepackage{bm}
\usepackage{mathtools}
\usepackage{nicefrac}

\theoremstyle{plain}
\newtheorem{thm}{Theorem}[section]

\newtheorem{prop}[thm]{Proposition}

\theoremstyle{definition}

\theoremstyle{remark}
\newtheorem{rem}{Remark}

\newcommand{\prob}[1]{\rho_{#1}}

\newcommand{\latin}[1]{\textit{#1}}
\renewcommand{\Re}{\mathbb{R}}

\newcommand{\bmm}{{\mathbf b}}

\newcommand{\jm}{\bm{j}}

\newcommand{\uu}{{\mathbf u }}

\newcommand{\vm}{{\mathbf v }}

\newcommand{\x}{{\mathbf x }}

\newcommand{\X}{{\mathbf X}}

\newcommand{\y}{{\mathbf y}}
\newcommand{\Y}{{\mathbf Y}}

\newcommand{\z}{{\mathbf z}}
\newcommand{\Z}{{\mathbf Z}}

\newcommand{\xii}{\boldsymbol{\xi}}

\newcommand{\ie}{i.e., }

\newcommand{\supth}[1]{\ensuremath{{#1}^{\text{th}}}}

\usepackage{pgfplots}
\pgfplotsset{compat=1.16}
\usepackage{tikz}

\usetikzlibrary{shapes.misc}
\usetikzlibrary{arrows,arrows.meta,calc,backgrounds}

\tikzstyle{block} = [draw,rectangle,thick,minimum height=2em,minimum width=2em]
\tikzstyle{sum} = [draw,circle,inner sep=0mm,minimum size=2mm]
\tikzstyle{connector} = [->,thick]
\tikzstyle{line} = [thick]
\tikzstyle{branch} = [circle,inner sep=0pt,minimum size=1mm,fill=black,draw=black]
\tikzstyle{guide} = []
\tikzset{>=latex}

\makeatletter
\newcommand{\hathat}[1]{%
	\begingroup%
	\let\macc@kerna\z@%
	\let\macc@kernb\z@%
	\let\macc@nucleus\@empty%
	\hat{\raisebox{.35ex}{\vphantom{\ensuremath{#1}}}\smash{\hat{#1}}}%
	\endgroup%
}
\makeatother

\creflabelformat{equation}{#2(#1)#3}
\crefrangelabelformat{equation}{#3(#1)#4 to #5(#2)#6}

\labelcrefformat{subequation}{#2(#1)#3}
\labelcrefrangeformat{subequation}{#3(#1)#4 to #5(#2)#6}

\renewcommand{\mathbf}[1]{\bm{#1}}

\graphicspath{{./tikzpics/}{./figures/}}

\usepackage{lineno}

\journal{Computer Methods in Applied Mechanics and Engineering}

\begin{document}

\begin{frontmatter}



\title{Conditional flow matching for physics-constrained inverse problems with finite training data}


\author[label1,label2]{Agnimtra Dasgupta} 
\author[label1]{Ali Fardisi}
\author[label1]{Mehrnegar Aminy}
\author[label1]{Brianna Binder}
\author[label1]{Bryan Shaddy}
\author[label1]{Saeed Moazami}
\author[label1]{Assad A Oberai\corref{cor1}}
\cortext[cor1]{Corresponding author}
\ead{aoberai@usc.edu}

\affiliation[label1]{organization={{Department of Aerospace \& Mechanical Engineering, University of Southern California}},
city={Los Angeles},
postcode={90089}, 
state={California},
country={USA}}

\affiliation[label2]{organization={{Optimization and Uncertainty Quantification, Sandia National Laboratories}},
city={Albuquerque},
postcode={87123}, 
state={New Mexico},
country={USA}}

\begin{abstract}
This study presents a conditional flow matching framework for solving physics-constrained Bayesian inverse problems. In this setting, samples from the joint distribution of inferred variables and measurements are assumed available, while explicit evaluation of the prior and likelihood densities is not required. We derive a simple and self-contained formulation of both the unconditional and conditional flow matching algorithms, tailored specifically to inverse problems. In the conditional setting, a neural network is trained to learn the velocity field of a probability flow ordinary differential equation that transports samples from a chosen source distribution directly to the posterior distribution conditioned on observed measurements. This black-box formulation accommodates nonlinear, high-dimensional, and potentially non-differentiable forward models without restrictive assumptions on the noise model. We further analyze the behavior of the learned velocity field in the regime of finite training data. Under mild architectural assumptions, we show that overtraining can induce degenerate behavior in the generated conditional distributions, including variance collapse and a phenomenon termed selective memorization, wherein generated samples concentrate around training data points associated with similar observations. A simplified theoretical analysis explains this behavior, and numerical experiments confirm it in practice. We demonstrate that standard early-stopping criteria based on monitoring test loss effectively mitigate such degeneracy. The proposed method is evaluated on a range of problems, including conditional density estimation benchmarks, an inverse problem motivated by data assimilation for the Lorenz-63 system, physics-based inverse problems governed by partial differential equations, and problems where the data is measured experimentally. We investigate the impact of different choices of source distributions, including Gaussian and data-informed priors, and quantify performance using optimal transport–based metrics, posterior statistics, and sampling efficiency. Across these examples, conditional flow matching accurately captures complex, multimodal posterior distributions while maintaining computational efficiency. 
\end{abstract}





\begin{keyword}


Probabilistic learning, generative modeling, flow matching, inverse problems, Bayesian inference, likelihood-free inference
\end{keyword}

\end{frontmatter}


\section{Introduction}
Inverse problems are ubiquitous across a wide range of scientific and engineering disciplines. They are typically ill-posed and are often driven by measurements contaminated with noise. A principled way to address these challenges is through a Bayesian formulation of the inverse problem \cite{stuart2010inverse,calvetti2018inverse}. Within this framework, a prior distribution is specified for the quantities to be inferred, and this distribution is updated {to obtain the posterior upon observing measurements via the likelihood. The likelihood incorporates the forward model, which maps the unknown quantities to the observed measurements and encodes the assumed measurement noise model.

Solving Bayesian inverse problems becomes particularly challenging when the dimensions of the inferred variables and measurements are large, when either the prior or the likelihood is non-Gaussian, when the forward model is nonlinear and computationally complex, when the prior distribution is available only through samples, (\ie the prior is \emph{data-driven}), or when either the prior or likelihood cannot be evaluated but sampled from (\ie the prior or likelihood are \emph{intractable}). Under such conditions, posterior approximation techniques encounter significant difficulties. These include methods that rely on Gaussian approximations of the posterior distribution~\cite{tierney1986accurate}, as well as sampling-based approaches such as Markov chain Monte Carlo (MCMC) and its variants \cite{brooks1998markov,neal2011mcmc}, which may suffer from poor scalability and slow convergence in high-dimensional settings.

To address these challenges, a class of novel methods has emerged that casts Bayesian inverse problems as conditional generative modeling tasks. These approaches adapt and leverage deep generative models, including generative adversarial networks (GANs) \cite{dimakis2022deep, lunz2018adversarial, ray2022efficacy, duff2024regularising, patel2022solution,ray2023solution}, normalizing flows (NFs) \cite{whang2021composing, hagemann2022stochastic,dasgupta2024dimension}, diffusion models \cite{daras2024survey,chung2022improving,wang2024dmplug, batzolis2021conditional, jacobsen2025cocogen, dasgupta2025conditional,dasgupta2026unifying}, and, more recently, methods based on flow matching and stochastic interpolants \cite{zhang2024flow,pourya2025flower, tauberschmidt2025physics,utkarsh2025physics}. In all of these approaches, samples are first drawn from a simple source distribution and are then transformed so as to approximate samples from the target posterior distribution. 

In GAN-based methods \cite{goodfellow2014generative, arjovsky2017wasserstein}, the source distribution is typically a low-dimensional Gaussian or uniform distribution, and the transformation is implemented via a single pass through a generator network. The generator is trained through a min–max optimization problem, which can render the training process unstable and difficult to interpret.  In contrast, normalizing flow–based methods \cite{kobyzev2020normalizing} usually transform a Gaussian distribution of the same dimensionality as the inferred variables. Discrete normalizing flows~\cite{dinh2014nice,dinh2016density} require specially designed invertible networks for such transformations, which carry large memory footprints in high-dimensional problems. Alternatively, continuous normalizing flows~\cite{chen2018neural} transform samples from a Gaussian distribution by integrating an ODE, where a non-invertible neural network approximates the vector field. However, the objective used to train continuous normalizing flows requires evaluating the log-determinant of the Jacobian of the transformation and its gradients (using backpropagation), which are both computationally expensive in high dimensions. Diffusion-based models \cite{ho2020denoising, sohl2015deep, song2020score} also transform samples from a Gaussian distribution, but do so by integrating an ODE or stochastic differential equation (SDE) whose drift term involves a neural network approximation of the score function. This network can be implemented using standard architectures and, unlike continuous normalizing flows, is trained using a regression-based loss, which makes diffusion models relatively easy to train and scalable to high-dimensional settings. Flow matching–based methods \cite{lipman2022flow, liu2022flow,albergo2022building} share several similarities with diffusion models, including simple network architectures, regression-based training objectives, and ODE-based sampling procedures. However, instead of learning a score function, flow matching methods learn the velocity field that transports samples from the source distribution to the target distribution. An important advantage of these methods is their flexibility in the choice of source distribution: rather than requiring a specific parametric form, they only require access to samples from the source.  This work investigates flow matching for Bayesian inference in physics-constrained inverse problems.

There are two broad strategies for applying flow matching methods to probabilistic inverse problems. In the first approach \cite{zhang2024flow,pourya2025flower, tauberschmidt2025physics, utkarsh2025physics,parikh2026d}, an unconditional flow matching algorithm is used to learn a velocity field that maps a Gaussian source distribution to the prior distribution. During posterior sampling, a modification to the velocity field corresponding to the likelihood term is incorporated. This likelihood-induced velocity field is typically not learned via a neural network; instead, it is derived analytically by penalizing the residual of the forward operator or by imposing it as a hard constraint. A key advantage of this approach is that once the prior velocity field has been learned, it can be reused for arbitrary likelihoods and forward models. However, this approach relies on restrictive assumptions about the noise model and requires the forward operator to be differentiable and sufficiently simple to permit efficient computation of its derivatives.

The second approach, which is adopted in this paper and  remains relatively unexplored for physics-based inverse problems~\cite{wildberger2023flow}, 
employs the conditional variant of the flow matching algorithm. In this setting, a neural network is trained to learn the velocity field that maps samples from the source distribution directly to the distribution of the inferred variables conditioned on the observed measurements. This approach does not impose explicit assumptions on the measurement noise model and interfaces with the forward model in a black-box manner, allowing it to accommodate highly complex and potentially non-differentiable forward models. 
Its primary limitation is that the learned velocity field is specific to the forward model used during training; consequently, changes to the forward model necessitate retraining the conditional flow matching network, even if the prior distribution remains unchanged.

The contributions of the present work, which are rooted in the second approach, extend the existing literature in the following ways:
\begin{enumerate}
\item We consider the application of the conditional version of the flow matching algorithm to a variety of physics-driven inverse problems, and quantify the performance of this approach.  Our presentation includes a simple and self-contained derivation of the conditional flow matching algorithm inspired by the general framework introduced in \cite{albergo2022building} but tailored specifically to inverse problems.

\item We provide a theoretical analysis of the conditional flow matching algorithm in the regime of finite training data. We show that, for velocity networks with sufficient expressive capacity and prolonged training, the learned velocity field can induce degenerate behavior in the generated samples. This includes vanishing variance in some cases and what we refer to as conditional memorization in some other cases. We also present numerical evidence illustrating the emergence of such degenerate behavior in practice. 
\item We demonstrate empirically that terminating training the velocity field according to standard early-stopping criteria used to prevent overfitting in regression problems can effectively mitigate this degeneracy.
\item In addition to applying the proposed method to complex, high-dimensional inverse problems, we also consider simplified settings for which the true posterior distribution is known, enabling a quantitative assessment of the performance of conditional flow matching. 
\end{enumerate}

The remainder of this paper is organized as follows. In \Cref{sec:flow-matching-intro}, we introduce the probabilistic inverse problem, and present a simple and self-contained derivation of the flow-matching algorithm and its conditional variant. In 
\Cref{sec:overfitting}, we further analyze the velocity field learned by the algorithm under finite-data constraints and examine the impact of this learned velocity on the generated samples. In \Cref{sec:results}, we apply the proposed method to a range of inverse problems. These include both synthetic examples, for which the true distribution is known, and more complex cases driven by real-world experimental data. Through these studies, we investigate the effects of overfitting the velocity field and demonstrate that monitoring the test loss provides an effective strategy for mitigating this issue. We conclude in \Cref{sec:conclusions} with a summary of our findings and directions for future work.

\section{Derivation of the flow matching algorithm}\label{sec:flow-matching-intro}

We denote the vector of variables to be inferred by $\X \in \Re^d$ and the vector of measurements by $\Y \in \Re^D$. Both are treated as random vectors. The prior density of $\X$ is denoted by $\prob{\X}(\x)$. The forward model is allowed to be probabilistic and defines the conditional distribution of $\Y$ given $\X = \x$, \ie it specifies the conditional density $\prob{\Y|\X}(\y \mid \x)$. We do not assume explicit knowledge of the functional forms of $\prob{\X}$ or $\prob{\Y|\X}$. Instead, we impose weaker assumptions. For the prior, we assume access to samples drawn from $\prob{\X}$. For the forward model, we assume that, given any value $\X = \x$, we can generate a sample of the measurement from $\prob{\Y|\X}$. Since the joint density satisfies
\begin{equation}
\prob{\X\Y}(\x, \y) = \prob{\Y|\X}(\y \mid \x)\prob{\X}(\x),
\end{equation}
these assumptions imply that we can generate samples from the joint distribution.

The inverse problem we consider is therefore the following: given samples from the joint distribution of $(\X, \Y)$, construct an algorithm capable of generating samples from the conditional density $\prob{\X|\Y}(\x \mid \y)$. Moreover, if the trained algorithm can subsequently be applied to arbitrary measurement values $\Y = \y$, then the training cost is amortized across multiple inference tasks.

In the following sections, we show that a conditional variant of the flow-matching algorithm can be used to solve similar inverse problems. The presentation proceeds in two stages. In \Cref{subsec:derive-flow-matching}, we derive the flow-matching algorithm for the unconditional generative problem: given samples of $\X$ drawn from an unknown density $\prob{\X}$, generate additional samples from the same distribution. Although this derivation is well established in the literature \cite{lipman2022flow, liu2022flow, albergo2022building}, we provide a simple and self-contained treatment for completeness. Subsequently, in \Cref{subsec:conditional-flow-matching}, we extend the method to the conditional setting. Specifically, given samples of $(\X, \Y)$ drawn from an unknown joint density $\prob{\X\Y}(\x, \y)$, we construct an algorithm that generates samples of $\X$ from the conditional density $\prob{\X|\Y}(\x \mid \y)$ for arbitrary values $\Y = \y$.

\subsection{A simple derivation of the flow matching loss}\label{subsec:derive-flow-matching}


In flow matching the generative problem is solved by generating samples of $\Z$ from a simple source density $\prob{\Z}$ and transforming these to samples from $\prob{\X}$. It begins with the definition of a stochastic interpolant given by 
\begin{equation}
    \X_t = \bm{I}_t(\Z,\X) \label{eq:sint},
\end{equation}
where $t \in (0,1)$, $\bm{I}_0 (\Z,\X) = \Z$ and $\bm{I}_1 (\Z,\X) = \X$. That is, at any given time $t$, the random vector $\X_t$ is a mixture of the random variables $\Z$ and $\X$ with probability densities $\prob{\Z}$ and $\prob{\X}$, respectively. Due to the interpolating property of $\bm{I}_t$, we have $\X_0 \sim  \prob{\Z}$, and $\X_1 \sim \prob{\X}$.

Let $\prob{t}$ be the probability density associated with the random vector $\X_t$. Since $\prob{t}$ defines a time-dependent probability density, it can be shown to satisfy the continuity equation and the velocity that appears in this equation (denoted by $\vm_t$ here) is such that for any $t \in (0,1)$, if $\X_0$ is sampled from $\prob{\Z}$, and $\X_t$ is evaluated by integrating the probability flow ode,
\begin{equation}\label{eq:ode}
    \frac{\mathrm{d} \X_t}{\mathrm{d}t } = \vm_t(\X_t), 
\end{equation}
then $\X_t \sim \prob{t}$. In particular 
then at $t = 1$ this yields $\X_1 \sim \prob{\X}$. This is the essence of the generative procedure in flow  matching. That is, first generate samples from an easy to sample source density $\prob{\Z}$, and then evolve them according to \Cref{eq:ode} to transform them to samples from $\prob{\X}$. The task is to compute this velocity field $\vm_t(\x)$ using  samples from $\prob{\Z}$ and $\prob{\X}$. This task is achieved in two steps. First, find an expression for the velocity field using the continuity equation. Second, define a loss function that can be computed using samples from $\prob{\Z}$ and $\prob{\X}$ whose minimizer is equal to this velocity field.

\begin{prop} The density $\prob{t}$ for the random vector $\X_t$ is defined as 
\begin{eqnarray}\label{eq:defrhot}
     \prob{t} (\xii)  &=& \int_{\Re^d \times \Re^d} \delta (\xii - \bm{I}_t(\z,\x))  \prob{\Z}(\z) \prob{\X} (\x) \mathrm{d}\z \mathrm{d}\x, 
\end{eqnarray}
and this density satisfies the continuity equation, wherein the velocity field is given by 
\begin{equation}\label{eq:defvel}
    \vm_t(\xii) = \frac{\jm_t(\xii)}{\prob{t} (\xii)}, 
\end{equation}
with the flux $\jm_t$ defined as 
\begin{equation}\label{eq:defflux}
    \jm_t(\xii) = \int_{\Re^d \times \Re^d} \delta (\xii - \bm{I}_t(\z,\x))  \frac{\partial \bm{I}_t(\z,\x)}{\partial t}  \prob{\Z}(\z) \prob{\X}(\x) \mathrm{d}\z \mathrm{d}\x. 
\end{equation}
\end{prop}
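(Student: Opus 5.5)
The plan is to work in the weak (distributional) sense throughout: test every density identity against a smooth, compactly supported function $\phi \in C_c^\infty(\Re^d)$. This turns the Dirac-delta expressions into ordinary expectations over the independent pair $(\Z,\X)$.

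\emph{Step 1: the density formula \Cref{eq:defrhot}.} For any test function $\phi$, independence of $\Z$ and $\X$ gives
\begin{equation*}
\mathbb{E}[\phi(\X_t)] = \int_{\Re^d\times\Re^d} \phi(\bm{I}_t(\z,\x))\,\prob{\Z}(\z)\prob{\X}(\x)\,\mathrm{d}\z\,\mathrm{d}\x .
\end{equation*}
Writing $\phi(\bm{I}_t(\z,\x)) = \int_{\Re^d} \phi(\xii)\,\delta(\xii-\bm{I}_t(\z,\x))\,\mathrm{d}\xii$ and exchanging the order of integration (Fubini, interpreted distributionally) gives $\mathbb{E}[\phi(\X_t)] = \int \phi(\xii)\,\prob{t}(\xii)\,\mathrm{d}\xii$, with $\prob{t}$ as in \Cref{eq:defrhot}. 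This identifies $\prob{t}$ as the law of $\X_t$. The same computation applied to the vector-valued integrand shows that $\jm_t$ in \Cref{eq:defflux} is characterized by
\begin{equation*}
\int \bm{\psi}(\xii)\cdot\jm_t(\xii)\,\mathrm{d}\xii = \mathbb{E}\big[\bm{\psi}(\X_t)\cdot\partial_t\bm{I}_t(\Z,\X)\big]
\end{equation*}
for vector test functions $\bm{\psi}$.

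\emph{Step 2: the continuity equation.} Differentiate $\mathbb{E}[\phi(\X_t)]$ in $t$. We may move the derivative inside the expectation, assuming $\bm{I}_t$ is $C^1$ in $t$ and $\partial_t\bm{I}_t$ is integrable; since $\phi$ and $\nabla\phi$ are bounded, dominated convergence applies. The chain rule then gives
\begin{equation*}
\frac{\mathrm{d}}{\mathrm{d}t}\int \phi\,\prob{t}\,\mathrm{d}\xii = \mathbb{E}\big[\nabla\phi(\X_t)\cdot\partial_t\bm{I}_t(\Z,\X)\big] = \int \nabla\phi(\xii)\cdot\jm_t(\xii)\,\mathrm{d}\xii ,
\end{equation*}
where the last equality is Step 1 with $\bm{\psi}=\nabla\phi$. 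Integrating by parts, which is legitimate because $\phi$ has compact support, yields $\int \phi\,(\partial_t\prob{t} + \nabla\cdot\jm_t)\,\mathrm{d}\xii = 0$ for all $\phi$, that is, $\partial_t \prob{t} + \nabla\cdot \jm_t = 0$ weakly. Wherever $\prob{t}>0$, define $\vm_t = \jm_t/\prob{t}$ as in \Cref{eq:defvel}. Then $\jm_t = \prob{t}\vm_t$ and the equation becomes the standard continuity equation $\partial_t\prob{t} + \nabla\cdot(\prob{t}\vm_t)=0$. On the set where $\prob{t}=0$, $\jm_t$ also vanishes almost everywhere, because it is a $\prob{t}$-absolutely continuous vector measure. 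So $\vm_t$ can be defined arbitrarily there without affecting the equation. For interpretation, $\vm_t(\xii) = \mathbb{E}[\partial_t\bm{I}_t(\Z,\X)\mid \X_t=\xii]$.

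\emph{Main obstacle.} The delicate part is rigor, not the algebra. First, $\prob{t}$ need only be a measure, not a density: for instance, $\bm{I}_t$ may be degenerate at particular times. Second, the division in \Cref{eq:defvel} needs $\prob{t}>0$, or at least the absolute-continuity argument above. Third, moving $\mathrm{d}/\mathrm{d}t$ inside the expectation needs integrability of $\partial_t\bm{I}_t(\Z,\X)$. I would state these as standing regularity assumptions, namely smooth $\bm{I}_t$ and finite second moments (for example, linear interpolants with Gaussian $\Z$ give a smooth positive $\prob{t}$ for $t<1$), and keep the argument at the level of weak solutions, consistent with the paper's formal use of $\delta$. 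The additional claim that integrating \Cref{eq:ode} pushes $\prob{\Z}$ to $\prob{t}$ would follow from uniqueness of solutions of the continuity equation when $\vm_t$ is Lipschitz. I would cite this as a standard fact rather than prove it.
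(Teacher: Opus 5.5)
Your proposal is correct and follows the same route as the paper. Both arguments identify $\prob{t}$ by writing $\mathbb{E}[\phi(\X_t)]$ with the delta representation and swapping integrals, then differentiate in $t$ and recognize the flux $\jm_t$; your Step~2 is the weak (test-function) form of the paper's formal step, which differentiates $\delta(\xii-\bm{I}_t(\z,\x))$ under the integral and pulls the divergence outside. Your extra points are also sound and are left implicit in the paper: $\jm_t \ll \prob{t}$ justifies the division on $\{\prob{t}=0\}$, and differentiating under the expectation needs a bound on $\partial_t\bm{I}_t$ that is locally uniform in $t$, not just integrability at each fixed $t$ (trivially available for the linear interpolant).
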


\begin{proof}
The overall approach is to derive an expression for the density $\rho_t$, use this to derive the continuity equation, and identify the velocity field in this equation. Let $\rho_t$ be the density for $\X_t$. Now consider an arbitrary function $\phi(\xii)$. Then 
\begin{equation}\label{eq:temp1}
    \mathbb{E}[\phi(\X_t)] = \int_{\Re^d} \phi (\xii) \prob{t} (\xii) \mathrm{d} \xii. 
\end{equation}
However, since $\X_t$ is related to $\Z$ and $\X$ through \Cref{eq:sint}, we also have 
\begin{eqnarray}
    \mathbb{E}[\phi(\X_t)] &=& \mathbb{E}[\phi(\bm{I}_t(\Z,\X)] \nonumber \\
        &=& \int_{\Re^d \times \Re^d} \phi (\bm{I}_t(\z,\x)) \prob{\Z}(\z) \prob{\X} (\x)  \mathrm{d}\z \mathrm{d}\x \nonumber \\
        &=& \int_{\Re^d} \int_{\Re^d \times \Re^d} \delta (\xii - \bm{I}_t(\z,\x)) \phi(\xii) \prob{\X}(\z) \prob{\X} (\x) \mathrm{d}\z \mathrm{d}\x \mathrm{d}\xii \nonumber \\
        &=& \int_{\Re^d} \phi(\xii) \Big( \int_{\Re^d \times \Re^d} \delta (\xii - \bm{I}_t(\z,\x))  \prob{\X}(\z) \prob{\X}(\x) \mathrm{d}\z \mathrm{d}\x \Big) \mathrm{d}\xii \label{eq:temp2}
\end{eqnarray}
Comparing \Cref{eq:temp1} and \Cref{eq:temp2}, we arrive at \Cref{eq:defrhot}.
Taking the time derivative of $\prob{t}$ in the \Cref{eq:defrhot} we have 
\begin{eqnarray}
\frac{\partial \prob{t} (\xii)}{\partial t}  &=& - \int_{\Re^d \times \Re^d} \nabla \delta (\xii - \bm{I}_t(\z,\x)) \cdot \frac{\partial \bm{I}_t(\z,\x)}{\partial t}  \prob{\Z}(\z) \prob{\X}(\x) \mathrm{d}\z \mathrm{d}\x \nonumber \\
    &=& - \nabla \cdot \Big( \int_{\Re^d \times \Re^d} \delta (\xii - \bm{I}_t(\z,\x))  \frac{\partial \bm{I}_t(\z,\x)}{\partial t}  \rho_Z(\z) \prob{\X} (\x) \mathrm{d}\z \mathrm{d}\x \Big) \nonumber \\
    &=& - \nabla \cdot  \jm_t(\xii), \label{eq:drhodt}
\end{eqnarray}
where we have used the definition of flux from \Cref{eq:defflux} in the last step. Finally, using the definition of the velocity field from \Cref{eq:defvel}, we arrive at the continuity equation
\begin{equation}
\frac{\partial \prob{t} (\xii)}{\partial t} +  \nabla \cdot  ( \rho_t (\xii) \vm_t(\xii)) = 0, \label{eq:continuity}    
\end{equation}
which completes our proof. 
\end{proof}
%



An important question is whether we can learn this velocity field purely from samples of $Z$ and $X$. This is answered next. 

\begin{prop}
The minimizer of the loss function, 
\begin{equation}
L(\bmm_t) = \int_0^1 \mathbb{E} \bigg[ \Big\lvert \bmm_t(\bm{I}_t(\Z,\X))  -   \frac{\partial \bm{I}_t(\Z,\X)}{\partial t} \Big\rvert^2 \bigg] \mathrm{d}t, \label{eq:loss1}
\end{equation}
is equal to $\vm_t$. 
\end{prop}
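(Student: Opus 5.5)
The plan is to rewrite the loss \Cref{eq:loss1} as an integral over the interpolant variable $\xii$ using the delta-function representation from the previous proposition. Once the loss is written this way, it splits into a term that vanishes exactly when $\bmm_t = \vm_t$ and a term that does not depend on $\bmm_t$. This is the usual bias--variance (or "completing the square") argument for regression onto a conditional expectation.

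First, I will expand the square inside the expectation:
\[
\Big\lvert \bmm_t(\bm{I}_t) - \partial_t \bm{I}_t \Big\rvert^2 = \lvert \bmm_t(\bm{I}_t)\rvert^2 - 2\, \bmm_t(\bm{I}_t)\cdot \partial_t \bm{I}_t + \lvert \partial_t \bm{I}_t\rvert^2 .
\]
The last term does not involve $\bmm_t$, so I will denote its contribution to $L$ by a constant $C$.

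For the first term, I will insert $1=\int \delta(\xii-\bm{I}_t(\z,\x))\,\mathrm{d}\xii$, exactly as in the proof of the previous proposition. Using \Cref{eq:defrhot}, this turns $\mathbb{E}[\lvert \bmm_t(\bm{I}_t)\rvert^2]$ into $\int \lvert\bmm_t(\xii)\rvert^2 \prob{t}(\xii)\,\mathrm{d}\xii$. The same manipulation applied to the cross term, now using \Cref{eq:defflux}, gives $\int \bmm_t(\xii)\cdot \jm_t(\xii)\,\mathrm{d}\xii$. By \Cref{eq:defvel} this equals $\int \bmm_t(\xii)\cdot \vm_t(\xii)\prob{t}(\xii)\,\mathrm{d}\xii$.

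Next, I will complete the square:
\[
L(\bmm_t) = \int_0^1\!\!\int_{\Re^d} \lvert \bmm_t(\xii)-\vm_t(\xii)\rvert^2 \prob{t}(\xii)\,\mathrm{d}\xii\,\mathrm{d}t + C - \int_0^1\!\!\int_{\Re^d} \lvert \vm_t(\xii)\rvert^2\prob{t}(\xii)\,\mathrm{d}\xii\,\mathrm{d}t .
\]
The last two terms are independent of $\bmm_t$. The first term is nonnegative and vanishes when $\bmm_t=\vm_t$. Hence $\vm_t$ is a minimizer, and it is unique $\prob{t}\,\mathrm{d}t$-almost everywhere.

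The only delicate point is justifying these manipulations: interchanging the $\xii$-integral with the $(\z,\x)$-integral, and requiring that the finite quantities $\mathbb{E}\lvert\partial_t\bm{I}_t\rvert^2<\infty$ and $\int\lvert\vm_t\rvert^2\prob{t}<\infty$ be finite. I will state these as integrability assumptions; given them, the argument is routine. Finally, I will note that uniqueness holds only on the support of $\prob{t}$, since the loss does not see $\bmm_t$ outside that support.
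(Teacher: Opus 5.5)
Your proposal is correct and follows essentially the same route as the paper. Like the paper, you expand the square, use the delta-function identity to turn the quadratic term into an integral against $\prob{t}$ and the cross term into an integral against $\jm_t = \vm_t \prob{t}$, and treat $\int_0^1 \mathbb{E}\lvert \partial_t \bm{I}_t \rvert^2 \,\mathrm{d}t$ as a constant. The one difference is that you complete the square where the paper sets the first variation to zero; this gives global minimality and $\prob{t}$-a.e.\ uniqueness directly, rather than only identifying $\vm_t$ as a stationary point.
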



\begin{proof}
Let $\bm{\phi}(\xii)$ be any vector valued function. Then using the definition of the flux vector from \Cref{eq:defflux} we have,
\begin{equation}\label{eq:temp3}
    \int_{\Re^d}  \bm{\phi}(\xii) \cdot \jm_t(\xii) \, \mathrm{d}\xii = \int_{\Re^d \times \Re^d}  \bm{\phi}(\bm{I}_t(\z,\x)) \cdot  \frac{\partial \bm{I}_t(\z,\x)}{\partial t}  \prob{\Z}(\z) \prob{\X}(\x) \, \mathrm{d}\z \mathrm{d}\x. 
\end{equation}
We will use this relation later. 

Next, we write the loss function (\ref{eq:def-loss}) as a sum of two integrals,
\begin{equation}\label{eq:loss2}
L(\bmm_t) = \int_0^1 \mathbb{E} \bigg[ \Big\lvert\bmm_t(\bm{I}_t(\Z,\X))\Big\rvert^2  - 2 \bmm_t(\bm{I}_t(\Z,\X)) \cdot   \frac{\partial \bm{I}_t(\Z,\X)}{\partial t} \bigg] \mathrm{d}t  + \int_0^1 \mathbb{E}\bigg[ \Big\lvert\frac{\partial \bm{I}_t(\Z,\X)}{\partial t}\Big\rvert^2 \bigg] \mathrm{d}t . 
\end{equation}
Consider the first term in the first integral. From \Cref{eq:temp1}, we have 
\begin{equation}
\mathbb{E}\bigg[ \Big\lvert\bmm_t (\bm{I}_t(\Z,\X))\Big\rvert^2 \bigg] = \int_{\Re^d}  \lvert\bmm_t (\xii)\rvert^2 \prob{t}(\xii) \mathrm{d}\xii. 
\end{equation}
Consider the second term in the first integral. From (\ref{eq:temp3}) and (\ref{eq:defvel}), we have,
\begin{equation}
 \mathbb{E}\bigg[ \bmm_t(\bm{I}_t(\Z,\X)) \cdot \frac{\partial \bm{I}_t(\Z,\X)}{\partial t} \bigg]  = \int_{\Re^d} \bmm_t(\xii) \cdot \jm_t(\xii) \mathrm{d}\x = \int_{\Re^d} \bmm_t(\xii) \cdot \vm_t(\xii) \prob{t} (\xii) \mathrm{d}\xii . 
\end{equation}
Finally, the second integral does not depend on $b_t$, and we set it equal to a ``constant''  $C_1$.

Using these expressions in the loss function we have 
\begin{equation}
L(\bmm_t) = \int_0^1 \int_{\Re^d} \big( |\bmm_t(\xii)|^2 - 2 \bmm_t(\xii) \cdot \vm_t(\xii) \big) \prob{t}(\xii) \mathrm{d}\xii + C_1. 
\end{equation}
Setting the variations with respect to $\bmm_t$ equal to zero we have 
\begin{equation}
2 \int_0^1 \int_{R^d} \delta \bmm_t(\xii) \cdot \big( \bmm_{t}^*(\xii) -  \vm_t(\xii) \big) \prob{t}(\xii) \mathrm{d}\xii = 0, \qquad \forall \;\; \delta\bmm_t(\xi) 
\end{equation}
which gives us $\bmm_{t}^* = \vm_t$ and completes the proof. 
\end{proof}

\subsection{Extension to conditional densities}\label{subsec:conditional-flow-matching}

Consider the case where the target density $\prob{\X}(\x)$ is replaced by a conditional density $\prob{\X|\Y}(\x|\y)$. That is, we want to generate samples from a target density which is conditioned on the random vector $\Y = \y$. We note that in this case, for a fixed value of $\y$ the results of the previous section still apply. We write them below for completeness. 

The interpolant is still given by \Cref{eq:sint}, where as before $\Z \sim \prob{\Z}(\z)$, however, now $\X \sim \prob{\X|\Y}(\x|\y)$. Further, if we sample $\X_0 \sim \prob{\Z}(\z)$, and then evolve according to 
\begin{equation}\label{eq:ode-con}
    \frac{\mathrm{d}\X_t}{\mathrm{d}t} = \vm_t(\X_t,\y), 
\end{equation}
then we are guaranteed that $\X_1 \sim \prob{\X|\Y}(\x|\y)$. 

The appropriate loss function is given by 
\begin{equation}\label{eq:loss-con}
L(\bmm_t) = \int_0^1 \int_{\Re^d \times \Re^d } \Big\lvert \bmm_t(\bm{I}_t(\z,\x),\y)  -   \frac{\partial \bm{I}_t(\z,\x)}{\partial t} \Big\rvert^2 \prob{Z}(\z) \prob{\X|\Y}(\x|\y) \mathrm{d}\z \mathrm{d}\x \mathrm{d}t, 
\end{equation}
Further, we know that the minimizer of this loss function is equal to the desired velocity field $\vm_t$.

However, we note that this loss cannot be computed because we do not have samples from $\prob{\X|\Y}(\x|\y)$. We address this by defining a different loss,
\begin{equation}\label{eq:loss-con2}
\hat{L}(\bmm_t) = \int_0^1 \int_{\Re^d \times \Re^D \times \Re^d } \Big\lvert \bmm_t(\bm{I}_t(\z,\x),\y)- \frac{\partial \bm{I}_t(\z,\x)}{\partial t} \Big\rvert^2 \prob{\Z}(\z) \prob{\X\Y}(\x,\y) \mathrm{d}\x \mathrm{d}\y \mathrm{d}\z  \mathrm{d}t, 
\end{equation}
which is defined in terms of expectations over the densities $\prob{\Z}(\z)$ and $\prob{\X\Y}(\x,\y)$, and therefore it can be approximated by a Monte Carlo sum as long as we have samples from these densities. 

It is easy to show that for a given value of $\y$, the minimizer of $\hat{L}(\bmm_t)$ is also the minimizer of $L(\bmm_t)$. To see this, we use $\prob{\X\Y}(\x,\y) = \prob{\X|\Y}(\x|\y) \prob{\Y}(\y)$ in \Cref{eq:loss-con2} and change the order of integrations to write 
\begin{equation}\label{eq:loss-con3}
\hat{L}(\bmm_t) = \int_{\Re^D} \bigg[ \int_0^1 \int_{\Re^d \times \Re^d } \Big\lvert \bmm_t(\bm{I}_t(\z,\x),\y)- \frac{\partial \bm{I}_t(\z,\x)}{\partial t} \Big\rvert^2 \prob{\Z}(\z) \prob{\X|\Y}(\x|\y) \mathrm{d}\z \mathrm{d}\x \mathrm{d}t \bigg] \prob{\Y}(\y) \mathrm{d}\y. 
\end{equation}
Assuming $\prob{\Y}(\y) > 0 \;\; \forall \;\y$, from the equation above we conclude that for any given $\y$, the minimizer of $\hat{L}(\bmm_t)$ must minimize the term within the parenthesis. However, this term is precisely the loss $L(\bmm_t)$. Therefore the minimizer of $\hat{L}(\bmm_t)$ is equal to the minimizer of $L(\bmm_t)$, which is in turn equal to the desired velocity field $\vm_t$.

To summarize, in the conditional case, we use samples from $\prob{\Z}(\z)$ and $\prob{\X\Y}(\x,\y)$ in \Cref{eq:loss-con2} to compute the velocity field that minimizes $\hat{L}(\bmm_t)$ and use it in \Cref{eq:ode-con} to transport samples from the source density to the conditional density. Note that once the velocity field is learned it can be used for any value of $\Y = \y$.

\section{Consequences of overfitting the velocity field}\label{sec:overfitting}

In both the conditional and unconditional settings, the learning task can be formulated as a nonlinear regression problem for the velocity field. When only a finite amount of training data is available, it is standard practice to mitigate overfitting by monitoring the test loss and terminating training once the test loss ceases to decrease. In the present setting, this strategy prevents overfitting of the learned velocity field. However, an important question is how overfitting at the level of the velocity field manifests in the samples generated by that field.

In this section, we investigate this question for the conditional case using a simplified theoretical analysis together with illustrative numerical experiments. The simplification rests on assuming certain types approximations for the velocity field along the observation coordinates. The universal approximation property of neural network ensures that these approximations can be learned in practice. We demonstrate that overfitting the velocity field can lead to incorrect conditional distributions being generated. In certain regimes, the generated distribution collapses onto the inferred value from the training data whose corresponding observation vector is closest to the conditioning observation. We refer to this behavior as selective memorization. In other regimes, the generated distribution may instead collapse to a single point that interpolates the inferred variable between multiple training samples whose observation vectors are close to the conditioning observation. In both scenarios, the distribution produced by the flow matching algorithm deviates substantially from the true conditional distribution. This highlights the necessity of carefully selecting the training termination point.

\subsection{Problem setting} 
We analyze the learning problem under the assumption that only a finite amount of training data is available. As a consequence, the joint density $\prob{\X\Y}(\x,\y)$ appearing in the loss function is replaced by its empirical approximation. At the same time, we assume access to an infinite number of samples from the source distribution and from the pseudo-time variable. Under this assumption, the integrals over $z$ and $t$ in the loss are computed exactly. In practice, this regime is approximated by generating fresh i.i.d. samples of $\Z$ and $t$ in every minibatch during training. Finally, we assume that the neural network has sufficient capacity and that optimization is allowed to proceed indefinitely. These conditions ensure that, aside from the replacement of $\prob{\X\Y}(\x,\y)$ with its empirical counterpart, no additional approximations are introduced when solving the minimization problem.

For simplicity, we consider the   linear interpolant 
\begin{eqnarray}
    \X_t = \bm{I}_t(\Z,\X) = \Z (1-t) + t\X \label{eq:def-interpolant}
\end{eqnarray}
where $\Z \sim \prob{\Z}(\z)$ and $\X \sim \prob{\X|\Y}(\x|\y)$. This yields
\begin{equation}\label{eq:def-time-der}
    \frac{\partial \bm{I}_t(\X,\Z)}{\partial t} = \X - \Z.
\end{equation}
The loss function for the velocity field is given by \Cref{eq:loss-con2}. Using \Cref{eq:def-interpolant,eq:def-time-der} in \Cref{eq:loss-con2}, and changing the variable from $\z$ to $\xii = \bm{I}_t(\z,\x)$, we arrive at
\begin{equation}\label{eq:def-loss}
    \hat{L}(\bmm_t) = \frac{1}{(1-t)^d} \int_0^1 \int_{\Re^d \times \Re^D \times \Re^d }   \Big\lvert \bmm_t(\xii,\y) - \frac{\x-\xii}{1-t} \Big\rvert^2 \prob{\X\Y}(\x,\y) \prob{\Z}\left(\frac{\xii - \x t}{1-t}\right) \mathrm{d}\x \mathrm{d}\y \mathrm{d}\xii \mathrm{d}t.
\end{equation}
Since we are looking for the velocity field that minimizes this loss function, we consider arbitrary variations $\delta \bmm_t(\xii,\y)$ and set the change in the loss function to zero. This yields
\begin{equation}\label{eq:loss-variation}
    \int_0^1 \int_{\Re^d \times \Re^D \times \Re^d }  \delta \bmm_t \cdot \Big( \bmm_t^* - \frac{\x-\xii}{1-t} \Big) \prob{\X\Y}(\x,\y) \prob{\Z}\left(\frac{\xii - \x t}{1-t}\right) \mathrm{d}\x \mathrm{d}\y \mathrm{d}\xii \mathrm{d}t = 0, \;\forall \;\delta \bmm_t. 
\end{equation}
Here $\bmm_t^* (\xii,\y) $ denotes the velocity field that minimizes the loss function. 

In order to make further progress, we need to make some assumptions regarding the explicit form of $\bmm_t^*$ and $\delta \bmm_t$ as a function of $\y$. We assume that they are of the following form,
\begin{eqnarray}
    \bmm_t^*(\xii,\y) &=& \sum_k \phi_k(\y) \bmm_{t,k}^*(\xii) \label{eq:vel-expansion} \\
    \delta \bmm_t(\xii,\y) &=& \sum_j \phi_j(\y) \delta \bmm_{t,j}(\xii). \label{eq:del-vel-expansion}
\end{eqnarray}
Note that this type of architecture is often used in approximating neural operators and is referred to as the DeepONet \cite{lu2021learning}. In the context of a DeepONet, the functions $\phi_j$ are formed by the trunk network, and the coefficients $\bmm_{t,j}$ are formed by the branch network. Using \Cref{eq:vel-expansion,eq:del-vel-expansion} in \Cref{eq:loss-variation}, we arrive at 
\begin{align}
    &\int_0^1 \int_{\Re^d \times \Re^D \times \Re^d }  \sum_j (\delta \bmm_{t,j} \phi_j(\y)) \cdot \Bigg( \left(\sum_k \bmm_{t,k}^* \phi_k(\y)\right) - \frac{\x-\xii}{1-t} \Bigg) \prob{\X\Y} (\x,\y) \nonumber\\
    &\hphantom{\int_0^1 \int_{\Re^d \times \Re^D \times \Re^d }  \sum_j (\delta \bmm_{t,j} \phi_j(\y)) \cdot \Bigg( \left(\sum_k \bmm_{t,k}^* \phi_k(\y)\right)} \prob{\Z}\left(\frac{\xii - \x t}{1-t}\right) \mathrm{d}\x \mathrm{d}\y \mathrm{d}\xii \mathrm{d}t = 0, \forall \delta \bmm_{ t,j}. \label{eq:loss-variation-1}
\end{align}

In order to understand how the velocity behaves with limited data, we replace the joint density with its empirical approximation 
\begin{equation}\label{eq:def-empirical}
    \prob{\X\Y}(\x,\y) = \frac{1}{N} \sum_i \delta(\x-\x^{(i)}) \delta(\y - \y^{(i)}), 
\end{equation}
where the pair $(\x^{(i)}, \y^{(i)}), i = 1, \cdots, N$ are the training data. Using this in \Cref{eq:loss-variation-1}, we arrive at 
\begin{equation}
    \int_0^1 \int_{\Re^d}  \sum_i \sum_{j} (\delta \bmm_{t,j} \phi_j(\y^{(i)})) \cdot \Bigg( \Big(\sum_k \bmm_{t,k}^* \phi_k(\y^{(i)})\Big) - \frac{\x^{(i)}- \xii}{1-t} \Bigg) \prob{\Z}\left(\frac{\xii - \x^{(i)} t}{1-t}\right) \mathrm{d}\xii \mathrm{d}t = 0, \forall \delta \bmm_{t,j}. \label{eq:loss-variation-2}
\end{equation}
The Euler-Lagrange equations corresponding to this variational form are
\begin{equation}\label{eq:euler-lagrange}
    \sum_i   \phi_j(\y^{(i)})  \Bigg( \left(\sum_k \bmm_{t,k}^*(\xii) \phi_k(\y^{(i)})\right) - \frac{\x^{(i)}-\xii}{1-t} \Bigg) \prob{\Z}\left(\frac{\xii - \x^{(i)} t}{1-t}\right)  = \bm{0},  \quad \forall j. 
\end{equation}

At this point, we need to make assumptions regarding the functions $\phi_j$ to make progress. We consider two cases: 

\subsection{Case 1} \label{sec: zero variance} We assume that $\phi_j$ are a partition of unity, that is,
\begin{equation}\label{eq:pu}
    \sum_j \phi_j(\y) = 1, 
\end{equation}
and that they are interpolatory at the training data points $y^{(i)}$. That is,
\begin{equation}\label{eq:y-interpolant}
    \phi_j(\y^{(i)}) = \delta_{ij}. 
\end{equation}
We note that these conditions are often satisfied by the basis functions used in the finite element method (also see \Cref{fig:overfit-case-1}). We also note that when $\y$ is one-dimensional ($D = 1$) this basis can be generated by a neural network with a single layer of ReLU activations (see \cite{ray2024deep}).
\begin{figure}[t]
    \centering
    \includegraphics[width=0.8\linewidth]{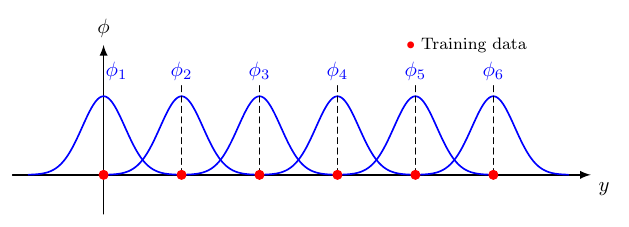}
    \caption{Visualizing Case 1 corresponding to \Cref{eq:pu}}
    \label{fig:overfit-case-1}
\end{figure}

Using \Cref{eq:y-interpolant} in \Cref{eq:euler-lagrange}, we arrive at 
\begin{equation}\label{eq:loss-variation-31}
     \Big(  \sum_{k} \bmm_{t,k}^*(\xii)\phi_k(\y^{(j)})  - \frac{\x^{(j)}-\xii}{1-t} \Big) \prob{\Z}\left(\frac{\xii - \x^{(j)} t}{1-t}\right)  = \bm{0}, \;\forall \; j.
\end{equation}
Using \Cref{eq:y-interpolant} once again in the above equation yields, 
\begin{equation}\label{eq:loss-variation-3}
     \Big(  \bmm_{t,j}^*(\xii) - \frac{\x^{(j)}-\xii}{1-t} \Big) \prob{\Z}\left(\frac{\xii - \x^{(j)} t}{1-t}\right)  = \bm{0}, \;\forall \; j.
\end{equation}
Assuming $\prob{\Z} >0$, the solution to this equation is given by 
\begin{equation}
    \bmm_{t,j}^*(\xii) = \frac{\x^{(j)}-\xii}{1-t}.
\end{equation}
Using this in \Cref{eq:vel-expansion}, we arrive at 
\begin{equation}
    \bmm_t^*(\xii,\y) = \sum_j \frac{\x^{(j)}-\xii}{1-t} \phi_j(\y).
\end{equation}
Using the partition of unity property \Cref{eq:pu}, we can further simplify this to 
\begin{equation}\label{eq:overfit-case-1}
    \bmm_t^*(\xii,y) =  \frac{\bar{\x}(\y)-\xii}{1-t},
\end{equation}
where 
\begin{equation}
    \bar{\x}(\y) = \sum_j \x^{(j)} \phi_j(\y),
\end{equation}
is a weighted sum of the training points $\x^{(j)}$, and where the weights are given by $\phi_j(\y)$. It is expected that the training data points that are closer to a given value of the observation vector, $\y$, will contribute more to this sum. 

We use the expression above for the velocity field in the probability flow ODE \Cref{eq:ode-con}, 
\begin{eqnarray}
    \frac{\mathrm{d} \X_t}{\mathrm{d}t} = \frac{\bar{\x}(\y)- \X_t}{1-t}
\end{eqnarray}
and integrate it from $t = 0$ to $t = 1$, with the initial condition at $t = 0$, $\X_t = \X_0$ to arrive at 
\begin{equation}
    \X_t = \X_0 (1-t) + \bar{\x}(\y) t.
\end{equation}
That is for any value of $\X_0$, at $t = 1$ all samples arrive to the point $\bar{\x}(\y)$, which in turn implies that generated conditional density converges to the Dirac measure at $\bar{\x}(\y)$.

\subsection{Case 2} 

\begin{figure}[!b]
    \centering
    \includegraphics[width=0.8\linewidth]{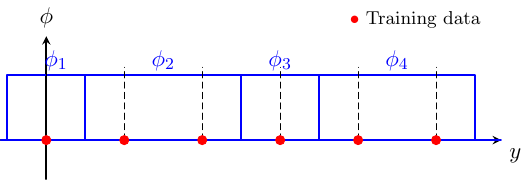}
    \caption{Visualizing Case 2 where $\phi_j$-s are non-overlapping indicator functions}
    \label{fig:overfit-case-2}
\end{figure}
Let $\phi_j(\y)$ be non-overlapping indicator functions that are unity on $\Omega_j \subset \Re^D$ and zero on the complement (see \Cref{fig:overfit-case-2}). This form for the basis functions is chosen to represent functions whose values diminish rapidly as they approach the boundary of their support. Further let $S_j$ be the set of training data points for which $\y$-coordinate is contained in $\Omega_j$. Using this in \Cref{eq:euler-lagrange}, we arrive at 
\begin{equation}\label{eq:euler-lagrange10}
    \sum_{i \in S_j}    \Big( \big(\sum_k \bmm_{t,k}^*(\xii) \phi_k(\y^{(i)})\big)   - \frac{\x^{(i)}-\xii}{1-t} \Big) \prob{\Z}\left(\frac{\xii - \x^{(i)} t}{1-t}\right)  = \bm{0},  \quad \forall \; j. 
\end{equation}
Since the functions $\phi_k$ are non-overlapping, and since $i \in S_k$, $\phi_k(\y^{(i)})$ is non-zero only when $k = j$. This yields,
\begin{equation}\label{eq:euler-lagrange1}
    \sum_{i \in S_j}    \Big(  \bmm_{t,j}^*(\xii)  - \frac{\x^{(i)}-\xii}{1-t} \Big) \prob{\Z}\left(\frac{\xii - \x^{(i)} t}{1-t}\right)  = \bm{0},  \quad \forall \; j. 
\end{equation}

This in turn implies, 
\begin{equation}
    \bmm_{t,j}^* (\xii)   =  \sum_{i \in S_j} \tilde{\rho}_{\Z}\left(\frac{\xii - \x^{(i)} t}{1-t}\right) \times \left( \frac{\x^{(i)}-\xii}{1-t}\right) \quad  \forall \; j,
\end{equation}
where 
\begin{equation}
    \tilde{\rho}_{\Z}\left(\frac{\xii - \x^{(i)} t}{1-t}\right) = \frac{\prob{\Z}\left(\frac{\xii - \x^{(i)} t}{1-t}\right)}{\sum_{k \in S_j} \prob{\Z}\left(\frac{\xii - \x^{(k)} t}{1-t}\right)}.
\end{equation}
It is easily verified that for $i \in S_j$, using the definition above, and recognizing that $\prob{\Z} > 0$ everywhere, the set of weights $\tilde{\rho}_{\Z}$ form a positive partition of unity. 

To proceed further, we need to make a choice for the source distribution. Based on what is the most popular choice, we select it to be the standard normal distribution of appropriate dimension. That is, $\prob{\Z}(\z) = \mathcal{N}(\z;\bm{0},\mathbb{I}_d)$\footnote{We use the notation $\mathbb{I}_d$ to denote the identity matrix of size $d \times d$}. With this choice, as shown in \cite{baptista2025memorization}, as $t \to 1$ the $\tilde{\rho}_{\Z}\left(\frac{\xii - \x^{(i)} t}{1-t}\right) $ tend to an indicator function for the Voronoi cell corresponding to the \supth{i} training data point. Therefore, we have 
\begin{equation}
    \bmm_{t,j}^*(\xii)   = \sum_{i \in S_j} \mathcal{I}^{(i)}(\xii) \frac{\x^{(i)}-\xii}{1-t} \quad  \forall \; j,
\end{equation}
where $\mathcal{I}^{(i)}(\xii)$ is the indicator function. Using this in the expression for the expansion for the velocity from \Cref{eq:vel-expansion},
\begin{equation}
    \bmm_{t}^*(\xii,y)   = \sum_j \left( \sum_{i \in S_j} \mathcal{I}^{(i)}(\xii) \frac{\x_1^{(i)}-\xii}{1-t}\right) \phi_j(\y).
\end{equation}
For a given value of the observation vector $\y$, let $j$ denote the index of the function $\phi_j(\y)$ which is non-zero. Since these functions form a non-overlapping partition of unity, there is only one such function and its value is equal to unity. As a result, the above expression reduces to 
\begin{equation}\label{eq:overfit-case-2}
    \bmm_{t}^*(\xii,\y)   =  \sum_{i \in S_j} \mathcal{I}^{(i)}(\xii) \frac{\x_1^{(i)}-\xii}{1-t}.
\end{equation}
With this expression for the velocity field, it was shown in \cite{baptista2025memorization} that for any initial condition $\X_0$, all trajectories of $\X_t$ will terminate at one of the points $\x^{(i)}$ contained in $S_j$. These points are a subset of the samples in the training data set. Thus, in this case, we observe a phenomenon that we refer to as selective memorization. This means that for a given value of the measurement vector, the sampling process will yield samples of the inferred vector that form a subset of the samples contained in the training data.

\subsection{Numerical evidence of the effect of finite data}\label{subsec:overfitting-experiment}
In this section, we demonstrate the effect of finite data on the estimation of the conditional density for a simple problem. In particular, we consider the forward model
\begin{equation}
    Y = X + \eta, \label{eq:forward1}
\end{equation}
where $X, Y \in \mathbb{R}$ and $\eta \sim \mathcal{N}(0, 0.25^2)$. Furthermore, we assume that the prior density of $X$, denoted by $\rho_X(x)$, is uniform between -1 and 1. In this case, the conditional distribution of X conditioned on $Y = \hat{y}$ will be a normal distribution truncated between -1 and 1 with the mean shifted to $\hat{y}$ and variance nearly equal to 0.25\textsuperscript{2}. For training, we generate five samples from $\rho_X$, and for each sample we generate the corresponding $Y$ value using \Cref{eq:forward1}. These samples are plotted in \Cref{fig:conditional_memorization_data_and_loss}(a), together with 1000 test samples and the curve $Y = X$. 
\begin{figure}[!b]
    \centering
    \includegraphics[width=1.0\linewidth]{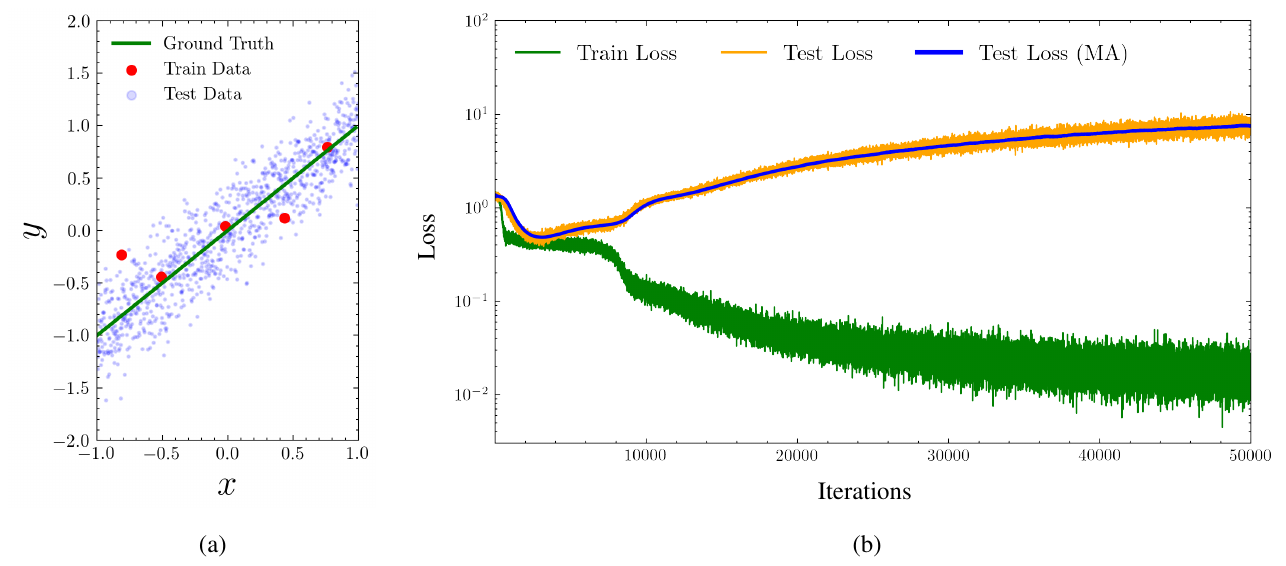}
    \caption{(a) Train and test data for the toy example used to illustrate effects of overfitting. (b) Train and test loss curve for the velocity network trained using the training data shown in (a). The blue curve above shows the moving average (MA) of the test loss. The MA is computed over a window of size 500}
    \label{fig:conditional_memorization_data_and_loss}
\end{figure}

We then use the five training samples to train a conditional flow matching algorithm to generate samples from the conditional density $\rho_{X \mid Y}$. The velocity field in the flow matching algorithm is parameterized by a neural network with Swish activation function and 3 hidden layers, each of width 32. The source distribution is chosen to be the standard normal distribution, i.e., $Z \sim \mathcal{N}(0,1)$. The training and test losses for the velocity network are shown in \Cref{fig:conditional_memorization_data_and_loss}(b). We observe that the test loss initially decreases, attains a minimum at approximately 3{,}000 iterations, and subsequently increases, exhibiting the classical overfitting behavior typical of regression problems. 

In \Cref{fig:conditional_memorization_kde}, we plot the kernel density estimate of the distribution of the samples generated by the learned velocity field at different stages of training for $Y = 0.6$. We observe that the network trained for 1,000 iterations produces a distribution that remains close to the source distribution, indicating that the network has not yet learned the true velocity field. The network trained for 3,000 iterations (corresponding to the minimum training loss) generates a distribution that is close to the true conditional distribution. In contrast, the network trained for 15,000 iterations, corresponding to significant overtraining, produces a distribution with 
substantially underestimated variance. When the network is severely overtrained (50,000 iterations), the variance becomes even smaller. These observations are consistent with the analysis presented in Case~1 of the previous section.
\begin{figure}[t]
    \centering
    \includegraphics[width=0.9\linewidth]{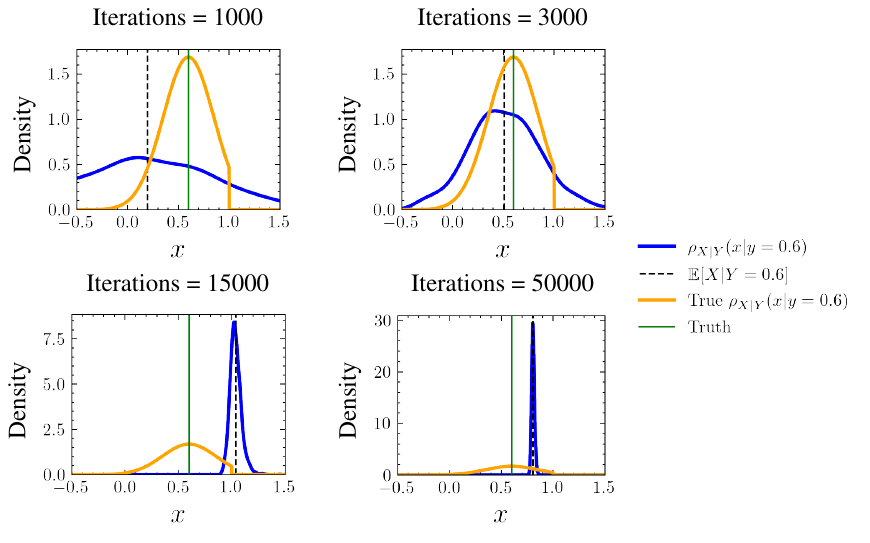}
    \caption{Kernel density estimates of the conditional distribution $\rho_{X|Y}(x \mid y = 0.6)$ estimated from samples generated using the trained velocity network at different stages of training}
    \label{fig:conditional_memorization_kde}
\end{figure}

To demonstrate that this phenomenon occurs for all values of $Y$, we consider 100 different values of $Y$ sampled uniformly in the range $[-1, 1]$. For each value of $Y$, we used the velocity network (trained for a fixed number of iterations) and evaluate the samples generated by the conditional flow matching algorithm. In \Cref{fig:conditional_memorization_posterior}, we plot the mean and the one-standard-deviation interval of these samples as functions of $Y$. We clearly observe that as the number of training iterations increases, the standard deviation decreases for all values of $Y$, thereby validating the analysis in Section~\ref{sec: zero variance}. We also observe that, as expected, increasing overtraining leads to a more complex (and incorrect) functional dependence of $\mathbb{E}[X \mid Y]$ on $Y$.
\begin{figure}[!t]
    \centering
    \includegraphics[width=\linewidth]{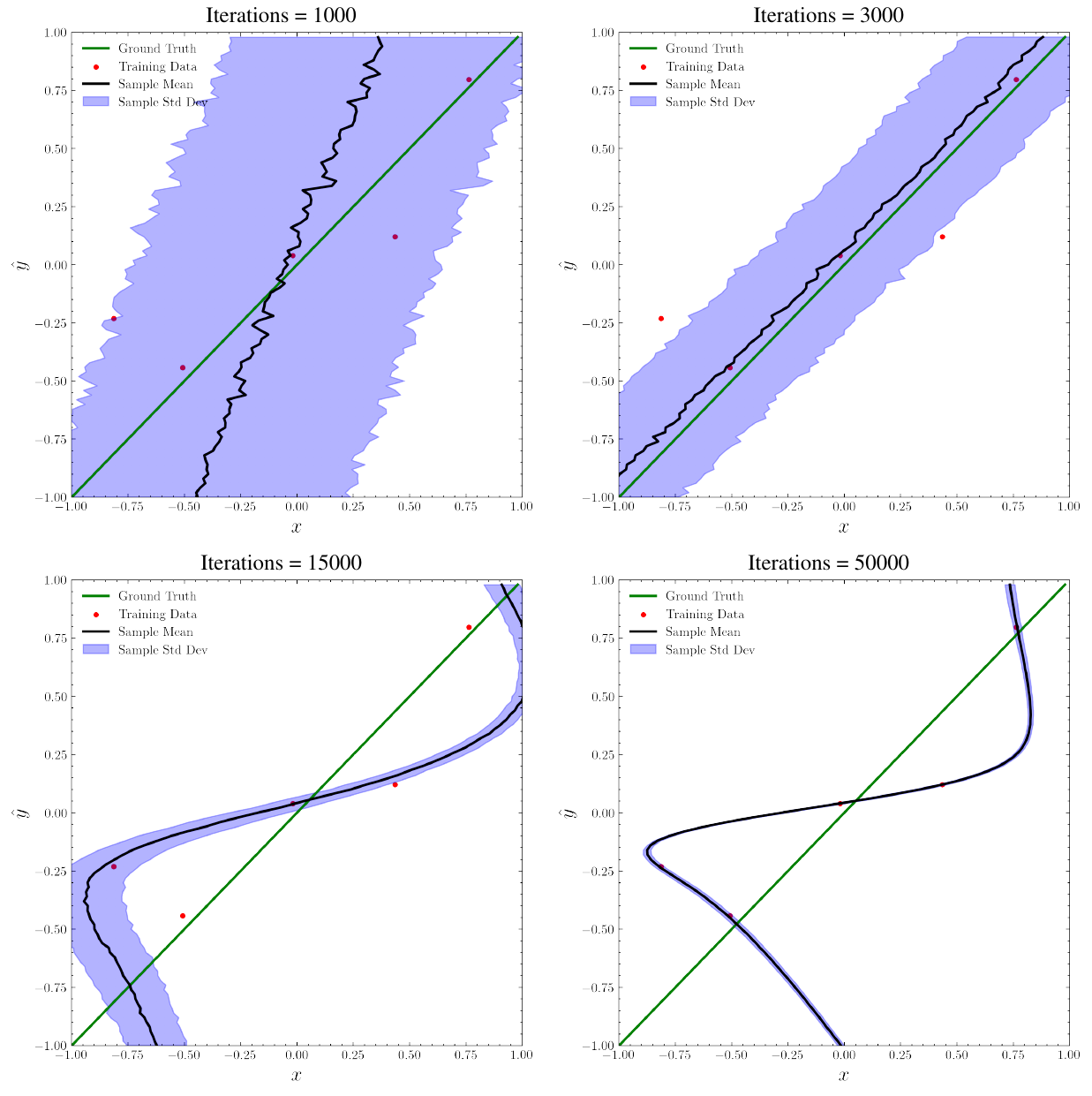}
    \caption{Mean and one-standard-deviation interval of the conditional distribution $\rho_{X|Y}$ estimated using samples generated by the trained velocity network at different stages of training}
    \label{fig:conditional_memorization_posterior}
\end{figure}

We further investigate the effects of increasing the capacity of the velocity network and the amount of training data (albeit still limited) on the performance of conditional flow matching in \ref{app:finite-data}. In summary, the results above and in \ref{app:finite-data} validate the analysis presented earlier in this section. They also suggest that a practical strategy to avoid degeneracy associated with finite data (or to avoid overtraining) is to stop training the velocity network once the test loss ceases to decrease. We adopt this strategy when training the velocity network for all problems considered in the following section.

\section{Results}\label{sec:results}

In this section, we present results for solving multiple inverse problems, including problems motivated by conditional density estimation (\Cref{subsec:cond-density}), data assimilation (\Cref{subsubsec:one-step-DA}), and physics-based inverse problems arising in fluid mechanics (\Cref{subsubsec:ADR}) and inverse elasticity (\Cref{subsubsec:sci,subsubsec:elasto2,subsubsec:TS}), 
using conditional flow matching. Across all the examples that we consider, the goal is to generate samples from the posterior distribution $\prob{\X|\Y}(\cdot|\hat{\y})$, for an observation or measurement $\hat{\y}$, using conditional flow matching given training data from the joint distribution $\prob{\X\Y}$.

\paragraph{Details of implementation, training, and data normalization} We perform all numerical experiments using PyTorch~\cite{pytorch2019}. We model the velocity field using a multilayer perceptron (MLP), for the examples in \Cref{subsec:cond-density,subsubsec:one-step-DA,subsubsec:ADR} , or a DDPM-inspired U-Net~\cite{ho2020denoising}, for the examples in \Cref{subsubsec:sci,subsubsec:elasto2,subsubsec:TS}. 
We relegate additional details regarding the architectures of the velocity network and time conditioning for the various problems to \ref{app:exp-settings}. To optimize the parameters of the velocity network, we minimize the loss function \Cref{eq:loss-con2} using the Adam and AdamW optimizer \cite{kingma2015adam} for the MLP- and U-Net-based velocity networks, respectively. We estimate the loss function \Cref{eq:loss-con2} using a mini-batch sampled from the training dataset at every iteration. We report training hyper-parameters, such as the learning rate and batch size, in \ref{app:exp-settings}. 

Before training the velocity network, we min-max normalize the training and test data between $[-1,1]$. As we discuss in \Cref{sec:overfitting}, in the case of limited data, overfitting and selective memorization can occur if the velocity field is trained for too long. So, we monitor the moving average of the test loss (over a window of size 500) during training. We also maintain the exponential moving averages (EMA) of the weights~\cite{dasgupta2025conditional} and use them to estimate the test loss over a held-out test set. Using the EMA of the weights helps dampen fluctuations in the test loss across consecutive iterations. Following the findings in \Cref{subsec:overfitting-experiment}, we present results using checkpoints close to where the moving average of the test loss saturates. In some cases, we simply terminate training after training the velocity network for an \latin{a priori} fixed number of iterations when the moving average of the test loss does not appear to saturate. In these cases, the total number of iterations reflects the compute budget available for training. 

\paragraph{Sampling using the trained velocity network}
At the chosen checkpoint, we use the EMA of the weights to sample the target posterior. We use an adaptive explicit Runge-Kutta method of order 5(4), available through \texttt{SciPy}'s~\cite{2020SciPy-NMeth} \texttt{solve\textunderscore ivp} routine to integrate \Cref{eq:ode-con}. After sampling, we re-normalize the generated samples to the appropriate physical units, and then estimate posterior statistics. We also report the number of steps 
taken by the integrator to produce realizations from the posterior, which we herein refer to as the number of sampling steps. A smaller number of steps means fewer evaluations of the velocity network and more efficient sampling. 


\paragraph{Source distributions}
In the experiments to follow, we explore the flexibility offered by the conditional flow matching framework and consider two different source distributions. We consider a multivariate standard normal distribution of $d$-dimensions, which we herein refer to as the Gaussian source. Additionally, we consider the prior distribution $\prob{\X}$, or the appropriate marginal of $\X$, as a second source distribution. The goal is to assess the benefits of data-informed source distributions. We use the accuracy of the posterior statistics (where possible), and the number $N_{\text{step}}$ of steps taken to compare the two types of source distribution. When working with $\prob{\X}$ as the source distribution, we assume that our knowledge of $\prob{\X}$ is limited to realizations of $\X$ in the training data. Accordingly, we sample a mini-batch of joint realizations of $\X$ and $\Y$ from the training dataset, and obtain realizations of $\Z$ by scrambling the realizations of $\X$ in the mini-batch to avoid instances where $\partial \bm{I}_t(\X,\Z) / \partial t = \bm{0}$. 

\paragraph{Comparison with noise-conditioned diffusion models} We adapt the numerical example in \Cref{subsubsec:sci} and applications in \Cref{subsubsec:elasto2,subsubsec:TS} 
from a previous study by \citet{dasgupta2026unifying}, where a discrete formulation of diffusion models was used to solve the related inverse problems. In these cases, we also highlight the benefits of using conditional flow matching. 

\subsection{Conditional density estimation benchmark}\label{subsec:cond-density}


In this example, which we refer to as the spiral problem, we consider the following two-dimensional joint distribution $\prob{XY}(x,y)$ adapted from \cite{dasgupta2026unifying}:
\begin{equation}\label{eq:spiral}
X = 0.1 (W \sin(W) +  C_3), \qquad Y = 0.1 (W \cos(W) + C_4),
\end{equation}
where $W = 1.5 \pi (1+ 2H)$, $H \sim \mathcal{U}(0,1)$, $C_3 \sim \mathcal{N}(0,1)$, and $C_4 \sim \mathcal{N}(0,1)$. In this case, $d = D = 1$, \ie both $X$ and $Y$ are one-dimensional. We sample 800 realizations from the joint distribution $\prob{XY}(x,y)$ to form the training dataset shown in \Cref{fig:spiral-data}.
\begin{figure}[H]
    \centering
    \includegraphics[height=2.4in]{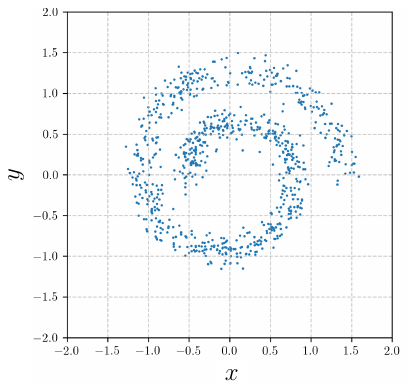}
    \caption{Training dataset for the spiral problem}
    \label{fig:spiral-data}
\end{figure}

\begin{figure}[!t]
    \centering
    \includegraphics[height=3.2in]{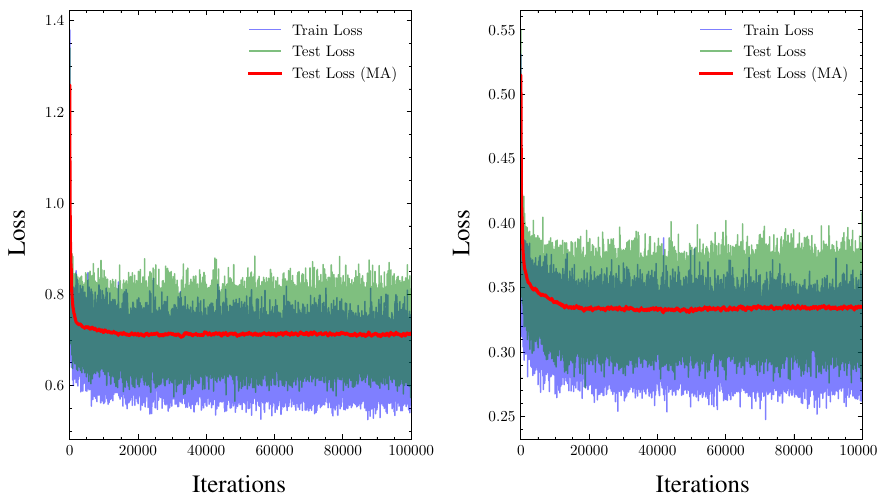}
    \caption{Training loss, test loss, and the moving average of the test loss for the velocity network trained on the spiral dataset with $Z \sim \mathcal{N}(0,1)$ (left) and $Z \sim \rho_X$ (right) as the source distributions, respectively} 
    \label{fig:spiral-loss}
    \vspace{1em}
\end{figure}
Next, we train two velocity networks on this training data corresponding to two different source distributions: $Z \sim \mathcal{N}(0, 1)$ and $Z \sim \prob{X}$, where $\prob{X}$ corresponds to the true marginal of $X$. 
\Cref{fig:spiral-loss} shows the training and test loss for the two source distributions. In both cases, we use the velocity network trained for 20,000 iterations (when the test loss saturates) to generate 10,000 samples each from the conditional distribution $\prob{X \vert Y}(x|y = \hat{y})$ for $\hat{y} \in\{-0.5, 0.0, 0.5, 1.0\}$. We start sampling with independent realizations of $\prob{X}$, not part of either the training or test set, as the initial condition for the ODE in \Cref{eq:ode-con} for the conditional flow matching model trained with $Z \sim \prob{X}$ as the source distribution.



\Cref{fig:spiral-kde} shows the histograms of samples from the conditional distribution generated using the trained velocity network for $\hat{y} \in\{-0.5, 0.0, 0.5, 1.0\}$. We also include histograms of samples obtained from the `\emph{true}' conditional distribution. We obtain these samples by retaining points in a band of width 0.1 around the
specified value of $\hat{y}$ from a test set containing 100,000 realizations from the joint density. \Cref{fig:spiral-kde} qualitatively shows that the conditional flow matching approach can be used to sample the conditional distribution with multiple modes. The Sinkhorn-Knopp algorithm~\cite{NIPS2013_af21d0c9} for computing the regularized optimal transport (OT) distance between the generated samples and the reference conditional density, which we herein refer to as the Sinkhorn distance, further quantifies the accuracy of conditional flow matching models. We tabulate the Sinkhorn distance between the true and estimated conditional distributions, averaged over the four $\hat{y}$ values, for the conditional flow matching models with $Z \sim \mathcal{N}(0, 1)$ and $Z \sim \prob{X}$ in the second column in \Cref{tab:spiral-results_comparison}. In comparison, we note that the best conditional diffusion model trained in \cite{dasgupta2026unifying} yields a Sinkhorn distance of 0.041 using a much larger number of training data points (10,000). We also report the number of sampling steps, averaged across the generated samples and four different values of $\hat{y}$, in \Cref{tab:spiral-results_comparison}. Overall, the results suggest that the choice of the source distribution has a negligible effect for this problem, \ie choosing $Z \sim \mathcal{N}(0, 1)$ or $Z \sim \prob{X}$ performs equally well in approximating the conditional distribution for different values of $Y$. \Cref{tab:spiral-results_comparison} also shows that the sampling efficiency of the conditional flow matching models with either source distribution is similar because the number of sampling steps is similar. 
\begin{figure}[!t]
\centering
    \includegraphics[width=\linewidth]{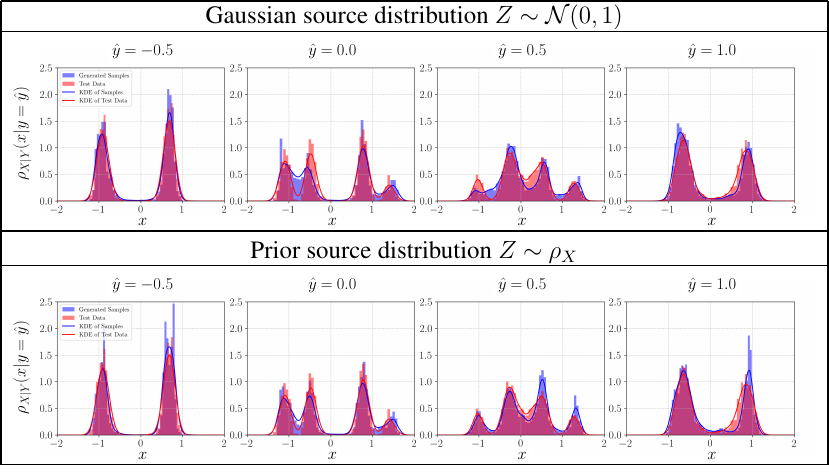}
    \caption{Histograms of samples generated using the trained velocity field compared to the samples from the true conditional distribution $\rho_{X|Y}(x|y = \hat{y})$ for $\hat{y} \in \{0.5, 0, 0.5, 1\}$ with different source distributions. The kernel density estimates of the conditional density are also shown}
\label{fig:spiral-kde}
\end{figure}
\begin{table}[!t]
    \centering
    \caption{Comparison of the two different source distributions for the spiral dataset}
    \label{tab:spiral-results_comparison}
    \begin{tabular}{lcc}
    \toprule
    \makecell[l]{Source\\distribution} & \makecell{Avg. Sinkhorn\\ distance} & \makecell{Avg. number of\\sampling steps} \\
    \midrule
    Gaussian & 0.051 & 17 \\
    Prior    & 0.056 & 19 \\
    \bottomrule
    \end{tabular}
\end{table}

\subsection{Inverse problems with synthetic data}

\subsubsection{One-step data assimilation problem}\label{subsubsec:one-step-DA}

Data assimilation, specifically filtering, is a sequential inference process that improves the predictive capability of numerical models by incorporating noisy and sparse observations made at continuous intervals. When using the Bayes filter, the data assimilation problem reduces to the solution of a probabilistic inverse problem at each instance an observation is obtained. Further, when sample based methods (like the ensemble Kalman Filter~\cite{evensen2003ensemble}, or the particle filter~\cite{doucet2000sequential}) are used, the prior and posterior distributions are both represented by samples. Samples for the prior distribution are obtained by applying forward dynamics operator to the samples from the previous assimilated state, and the posterior is approximated by generating samples conditioned on the most recent measurement. In this section, we consider a problem which corresponds to a single step in a data assimilation problem solved using a sample-based Bayes filter method. The problem is motivated by the Lorenz 63 system \cite{DeterministicNonperiodicFlow} and its details are described in \ref{app:da-details}. 

Succinctly, the dimension of the vector $\X$ to be inferred is $d = 3$, and the observation, which is a scalar ($D = 1$), is a noisy version of one of the components of this vector,  
\begin{equation}\label{eq:DA-obs_op}
    Y = X_3 + \epsilon,
\end{equation}
where $\epsilon \sim \mathcal{N}(0, 0.5^2)$ denotes the measurement noise. Since the vector $\bm{X}$ is partially observed, the posterior distribution can exhibit significant non-Gaussian structures like bimodality. This study focuses on a single data assimilation step chosen to yield a nontrivial transformation where the prior distribution is unimodal while the conditioning on the observation induces a bimodal posterior.

To obtain accurate approximations of the prior and posterior distributions, we employ the Sequential Importance Resampling (SIR) filter~\cite{doucet2000sequential} with an ensemble size of 100,000 to solve the first three steps of the data assimilation problem. For sufficiently large ensemble sizes, the SIR approximation converges to the true Bayesian posterior. The prior distribution for the inverse problem is obtained by sub-sampling 1,500 realizations from the prior distribution for the SIR approximation at the third step. The observation is given by \Cref{eq:DA-obs_op}. Once the inverse problem is solved using the conditional flow matching algorithm, the samples from the posterior distribution are compared with the 100,000 samples from the SIR approximation of the posterior distribution, which is treated as the true reference solution.


\begin{figure}[!b]
    \centering
    \includegraphics[width=\linewidth]{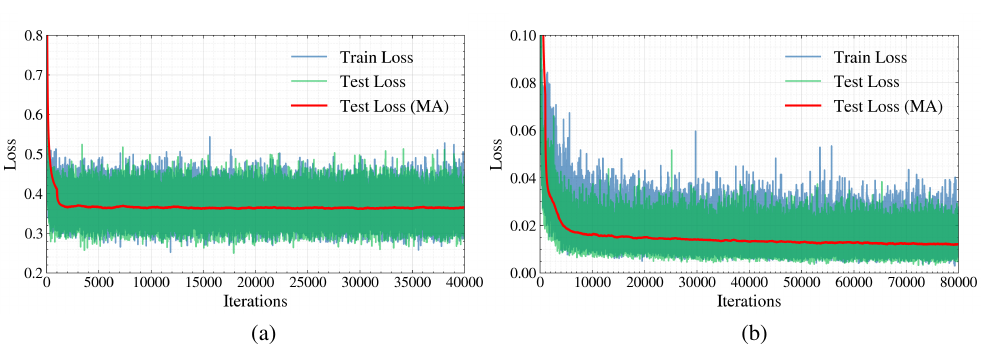}
    \caption{Training loss, test loss, and the moving average of the test loss for the velocity network trained on the one-step data assimilation example with (a) Gaussian and (b) the prior $\prob{X}$ as the source distribution}
    \label{fig:DA-loss_plots}
\end{figure}
\begin{figure}[!t]
    \centering
    \includegraphics[width=0.78\linewidth]{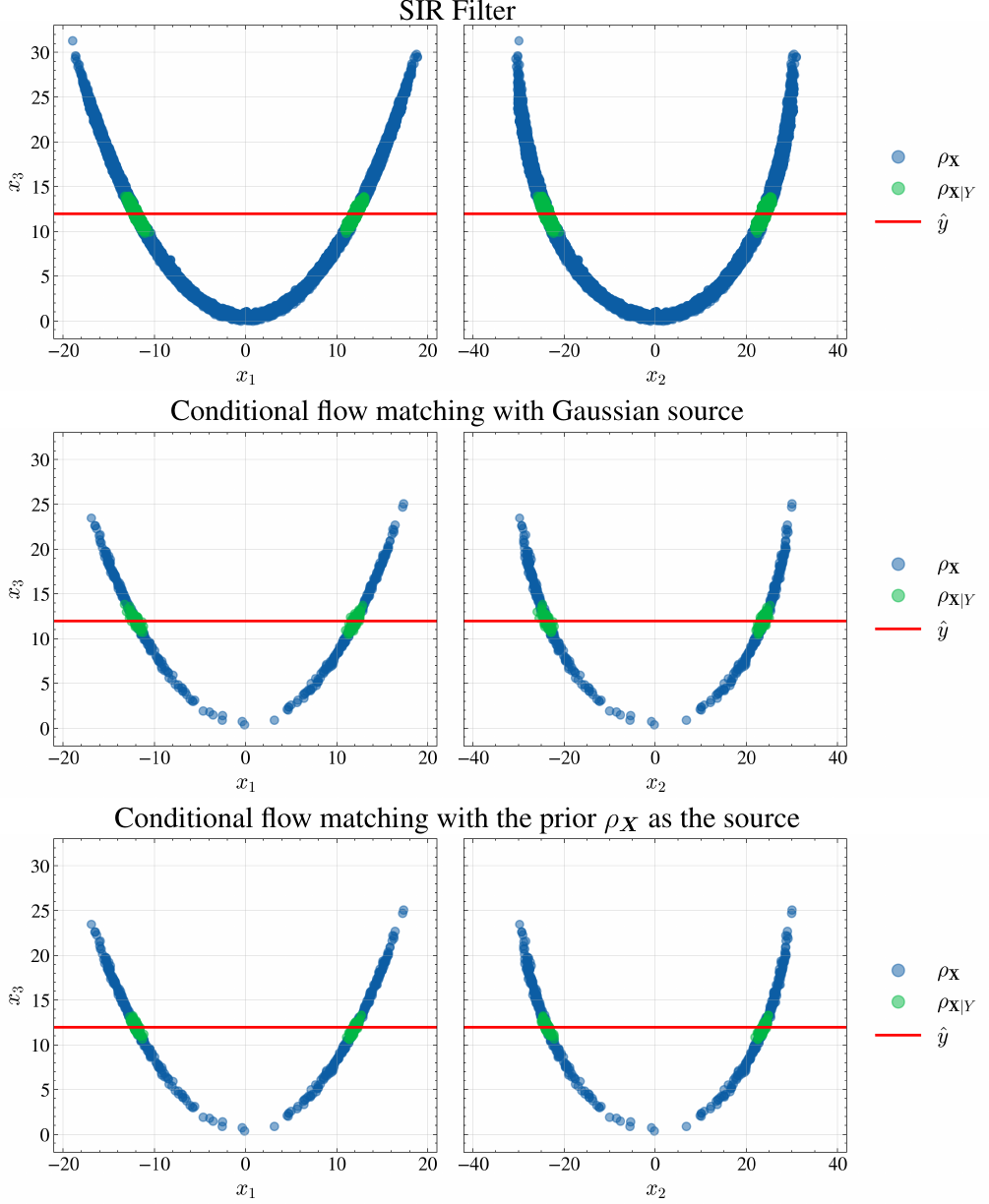}
    \definecolor{MyBlue}{RGB}{0, 127, 255}
    \definecolor{MyGreen}{RGB}{0, 184, 69}    
    \caption{Particles from the prior $\rho_{\X}$ (\textcolor{MyBlue}{$\bullet$}) and posterior $\rho_{\X|Y}$ (\textcolor{MyGreen}{$\bullet$}), and observation $\hat{y}$ ({\color{red}--}) on the $(x_1,x_3)$ and $(x_2,x_3)$ planes for the one-step data assimilation problem. First row: reference solution obtained using the SIR filter with 100,000 particles. Second and third row: 1,000 particles from $\rho_{\X}$ (part of the training dataset), and samples from $\rho_{\X|Y}$ generated by the conditional flow matching model with a Gaussian and prior source distributions, respectively}
    \label{fig:DA-distributions}
    \label{fig:DA-distributions}
\end{figure}
We train two conditional flow matching models to generate samples from the posterior distribution. The models differ only in the choice of source distribution: a multivariate Gaussian (Gaussian source) and the prior distribution itself. For both models, 1,000 training pairs and 500 testing pairs are used to learn the velocity field, and 500 samples are generated conditioned on a fixed observation $\hat{y}$. \Cref{fig:DA-loss_plots} shows the training and test loss for the two source distributions. The velocity network trained for 5,000 iterations is chosen for the Gaussian source, while the velocity network trained for 10,000 iterations is chosen when the prior is the source distribution. 
These checkpoints are selected because the moving average of the test loss transitions from an initial rapid decrease to either a near-constant level or a regime of slow, nearly constant descent.

\begin{table}[t]
    \centering
    \caption{Comparison of the two different source distributions on the one-step data assimilation example}
    \label{tab:DA-distribution_comparison}
    \begin{tabular}{lcc}
    \toprule
    Source distribution & Sinkhorn distance & Average number of sampling steps\\
    \midrule
    Gaussian & 0.045 & 17 \\
    Prior    & 0.062 & 28 \\
    \bottomrule
    \end{tabular}
\end{table}
\Cref{fig:DA-distributions} shows that both source distributions generate posterior samples that qualitatively capture the disjoint and bimodal structure of the reference solution. However, quantitative metrics in \Cref{tab:DA-distribution_comparison} reveal a meaningful difference in performance. The model trained with the multivariate Gaussian source achieves a lower Sinkhorn distance to the SIR reference posterior, indicating the generated samples are closer to the true distribution. Moreover, it requires fewer integration steps to generate samples. 
Overall, these results suggest that, for this problem, the conditional flow matching model with the Gaussian source provides a relatively more accurate approximation of the target posterior distribution compared to using the prior as the source distribution. Also, sampling the posterior distribution using the conditional flow matching model with the Gaussian as source is more efficient. This insight may help in designing recursive filters for data assimilation using conditional flow matching, which we intend to explore in a future study.

\subsubsection{Advection diffusion reaction}
\label{subsubsec:ADR}


In this section, we present the results for a nonlinear advection diffusion reaction problem defined on the rectangular domain shown in Figure~\ref{fig:dataset_description}, with dimensions $16 \times 4$ units, where the concentration of a chemical species is observed. The concetration field is governed by the following partial differential equation
\begin{eqnarray}
\label{eq:advection_diffusion_reaction}
\nabla \cdot (\bm{a} u) - \kappa \nabla^2 u  - u(r-u) = 0.
\end{eqnarray}
The velocity field, $\bm{a}$, is characterized by a horizontally directed parabolic velocity profile, attaining a maximum magnitude of 12 units. The diffusion coefficient and reaction parameter are set to $\kappa = 8$ and $r = 2$, respectively, which corresponds to a Péclet number of 6. The reaction term follows logistic growth dynamics and acts to stabilize the concentration around the value $r = 2$~\cite{lam2020}. A zero concentration boundary condition is imposed on the left boundary, while a zero-flux condition is prescribed on the right boundary. The top and bottom boundaries are allowed to have non-zero flux values, extending from the left corners up to 7 units into the domain. This flux is parameterized as a piecewise constant function over 15 segments, each of length 0.47 units. More details about this problem are available in \cite{dasgupta2026unifying}.

Collectively, the fluxes on the top and bottom boundaries (i.e., the upper and lower walls, respectively) are represented by a 30-dimensional vector $\X$ ($d = 30$), which constitutes the unknown parameter to be inferred. The observation vector $\Y$ consists of concentration measurements collected by 30 equally spaced sensors ($D = 30$), positioned 0.5 units from the top and bottom boundaries (see \Cref{fig:dataset_description}). The measurements are corrupted by additive, independent Gaussian noise with zero mean and variance $\sigma^2 = 0.01^2$.
\begin{figure}[t]
\centering
\captionsetup[subfigure]{justification=centering}
\begin{subfigure}[t]{0.48\linewidth}
    \centering
    \includegraphics[width=\linewidth]{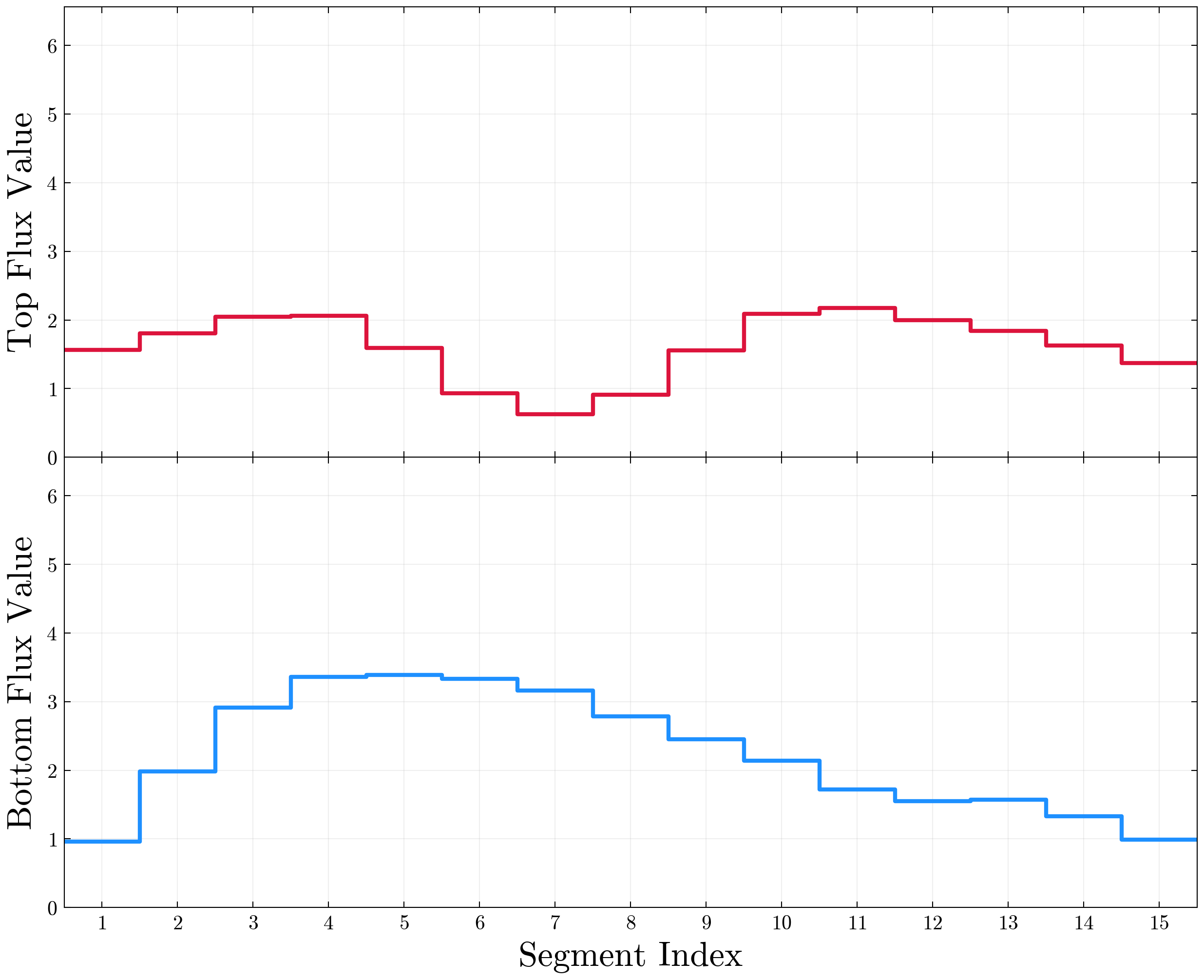}
    \caption{}
\end{subfigure}
\hfill
\begin{subfigure}[t]{0.48\linewidth}
    \centering
    \includegraphics[width=\linewidth]{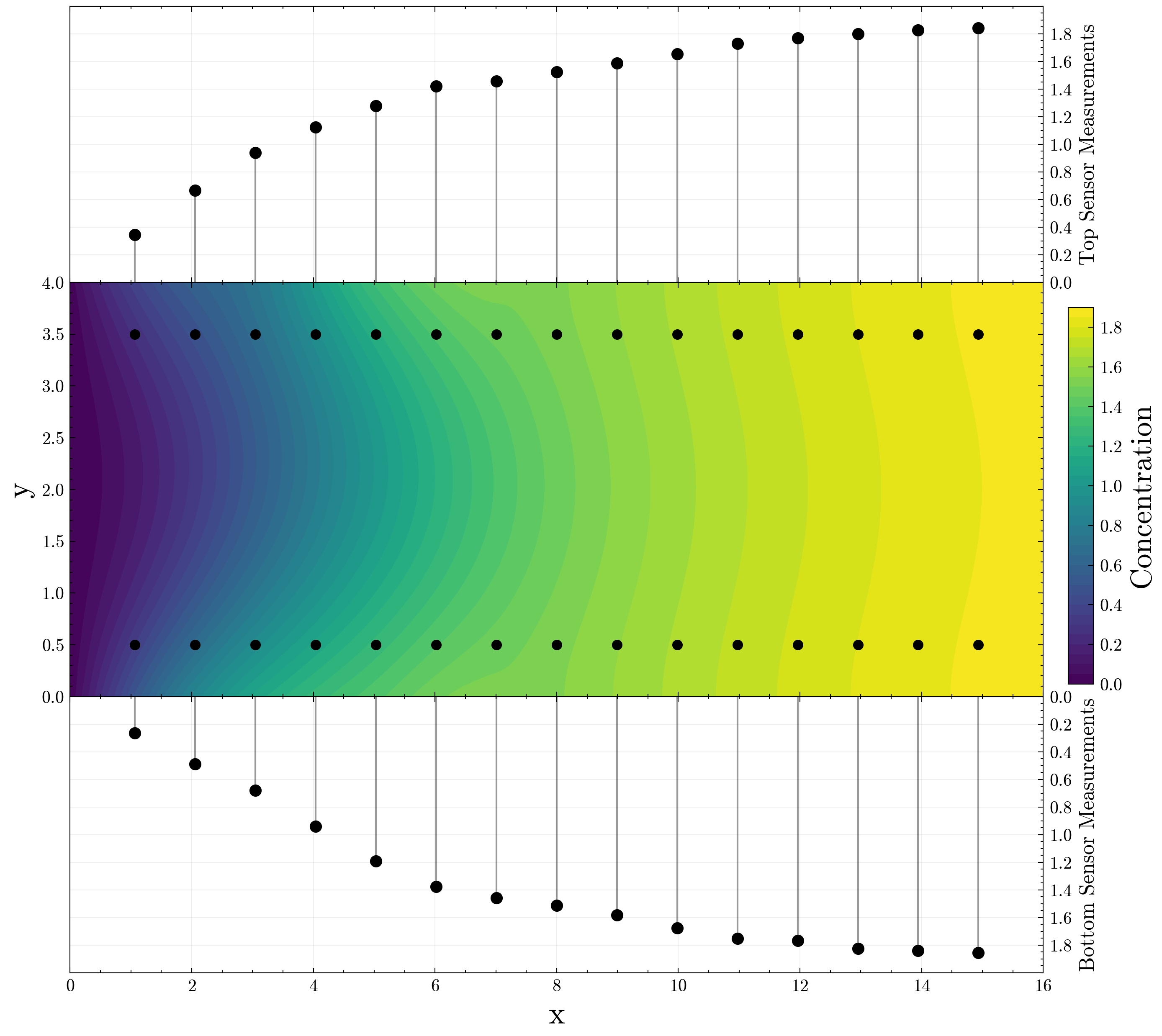}
    \caption{}
\end{subfigure}

\caption{A realization from the advection-diffusion-reaction dataset
(a) Piecewise-constant top and bottom wall fluxes. 
(b) Corresponding concentration field obtained from solving \Cref{eq:advection_diffusion_reaction} and sensor measurements (black dots)}
\label{fig:dataset_description}
\end{figure}

The prior distribution of $\bm{X}$ is modeled as a Gaussian process. The flux values associated with each segment are drawn from a multivariate normal distribution whose covariance matrix is defined by a radial basis function (RBF) kernel. The kernel depends on the Euclidean distance between segment locations and uses a length scale of 2 units, promoting stronger correlations between neighboring segments while still allowing variability across the boundary. The samples are initially generated with zero mean and subsequently shifted by adding 2 to enforce a positive mean flux. Any negative sampled values are clipped to zero to ensure non-negativity of the flux.

To compute the concentration field,  \Cref{eq:advection_diffusion_reaction} is solved numerically using the {FEniCS} \cite{Fenics2012} finite element framework on an $850 \times 200$ mesh comprising 340{,}000 P1 elements. The resulting solutions, shown in \Cref{fig:dataset_description}, indicate that the reaction mechanism drives the concentration toward the equilibrium value of two as it progresses in the downstream direction. Mesh convergence is verified to ensure that the numerical solution provides an accurate approximation of the true solution.

To construct the dataset, we generate 4,000 realizations of $\bm{X}$ and $\bm{Y}$, of which 90\% are used for training and the remaining 10\% for testing. To investigate the effect of dataset size, we also create a smaller dataset consisting of 400 realizations, using the same 90\%--10\% split between training and test sets.
In total, this results in two distinct datasets. For each dataset, we train two conditional flow matching models corresponding to different choices of source distribution: a Gaussian distribution and the prior distribution.

\begin{figure}[!t]
    \centering
    \captionsetup[subfigure]{justification=centering}

    \begin{subfigure}[t]{0.48\textwidth}
        \centering
        \includegraphics[width=\linewidth]{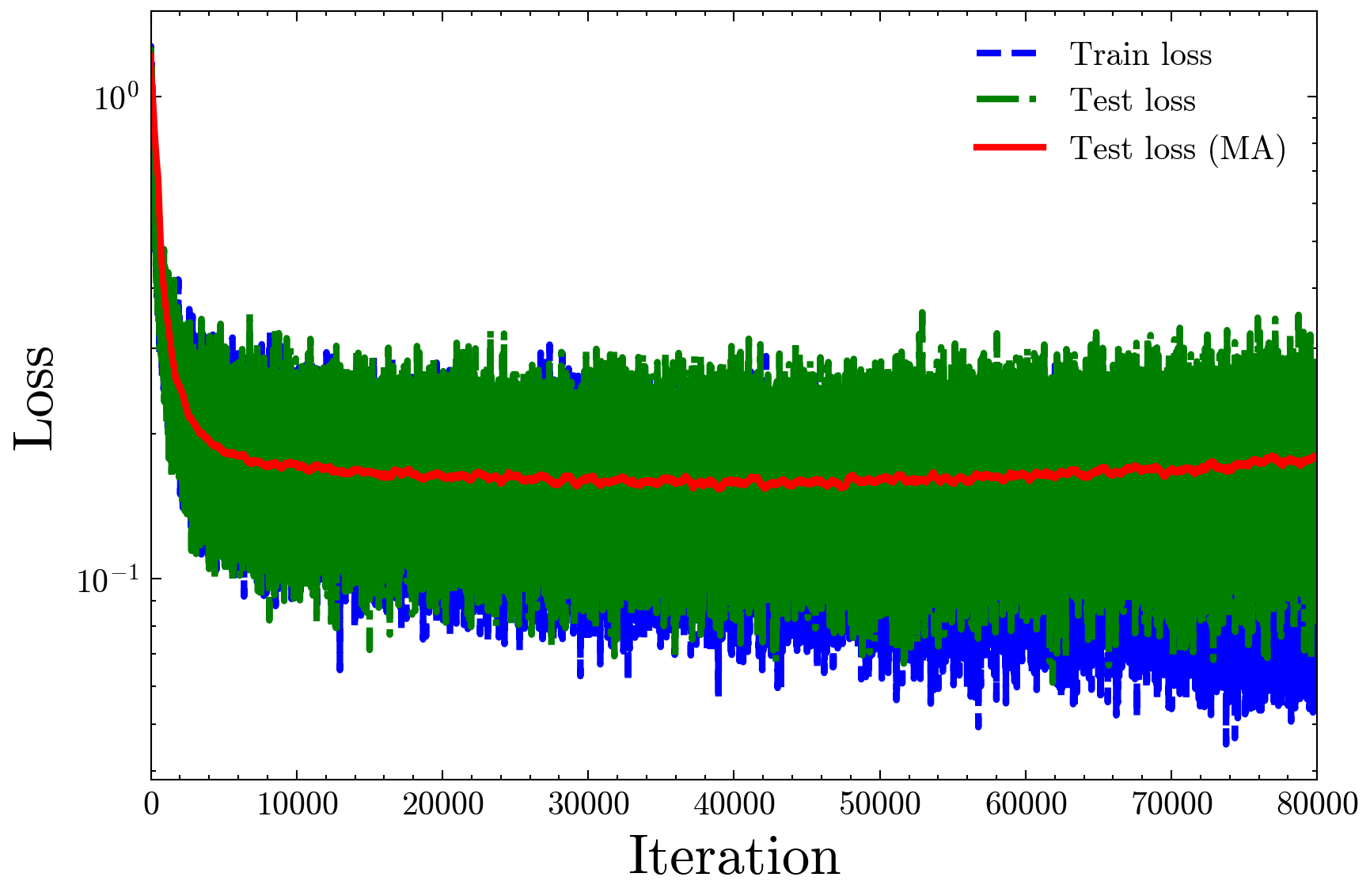}
        \caption{Gaussian source distribution}
    \end{subfigure}
    \hfill
    \begin{subfigure}[t]{0.48\textwidth}
        \centering
        \includegraphics[width=\linewidth]{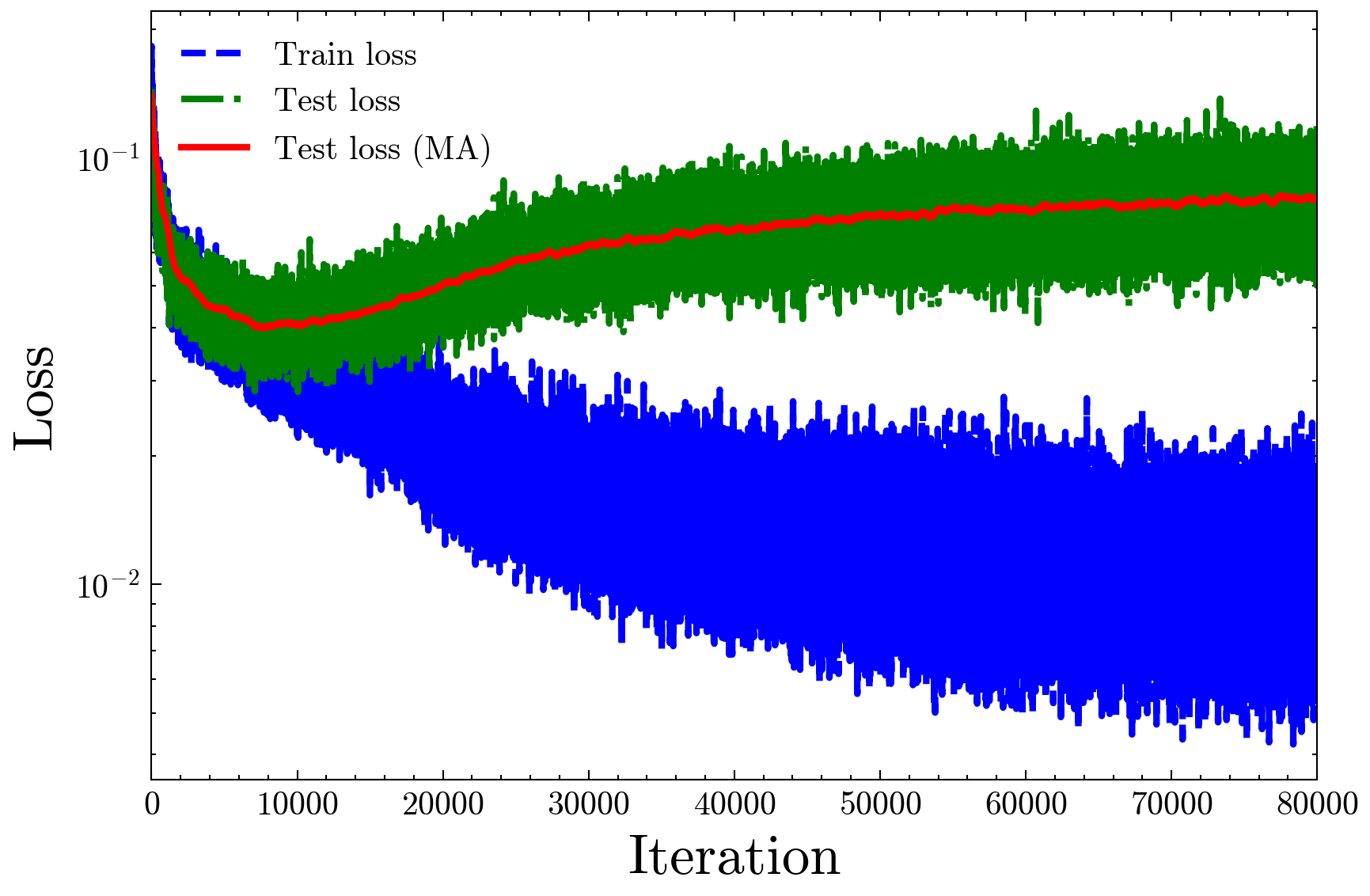}
        \caption{Prior source distribution}
    \end{subfigure}

    \caption{Training and test loss for the advection-diffusion-reaction problem with 400 training samples}
    
    \label{fig:training_400}
\end{figure}
\Cref{fig:training_400} presents the training and test loss curves for the dataset of size 400 using Gaussian and prior source distributions. In both cases, a plateau region in the test loss is observed while the training loss continues to decrease, indicating the onset of overfitting as discussed in Section~3. For the Gaussian source distribution, we choose the velocity network for 20,000 iterations, which lies within the plateau region, to sample the target posterior. When using the prior source distribution training samples, overfitting occurs earlier and the test loss exhibits a clear minimum around iteration 10,000, after which it begins to increase, indicating the onset of overfitting. We therefore select the velocity network trained for 10,000 iterations to sample the target posterior.

\begin{figure}[!t]
\centering

\includegraphics[width=\linewidth]{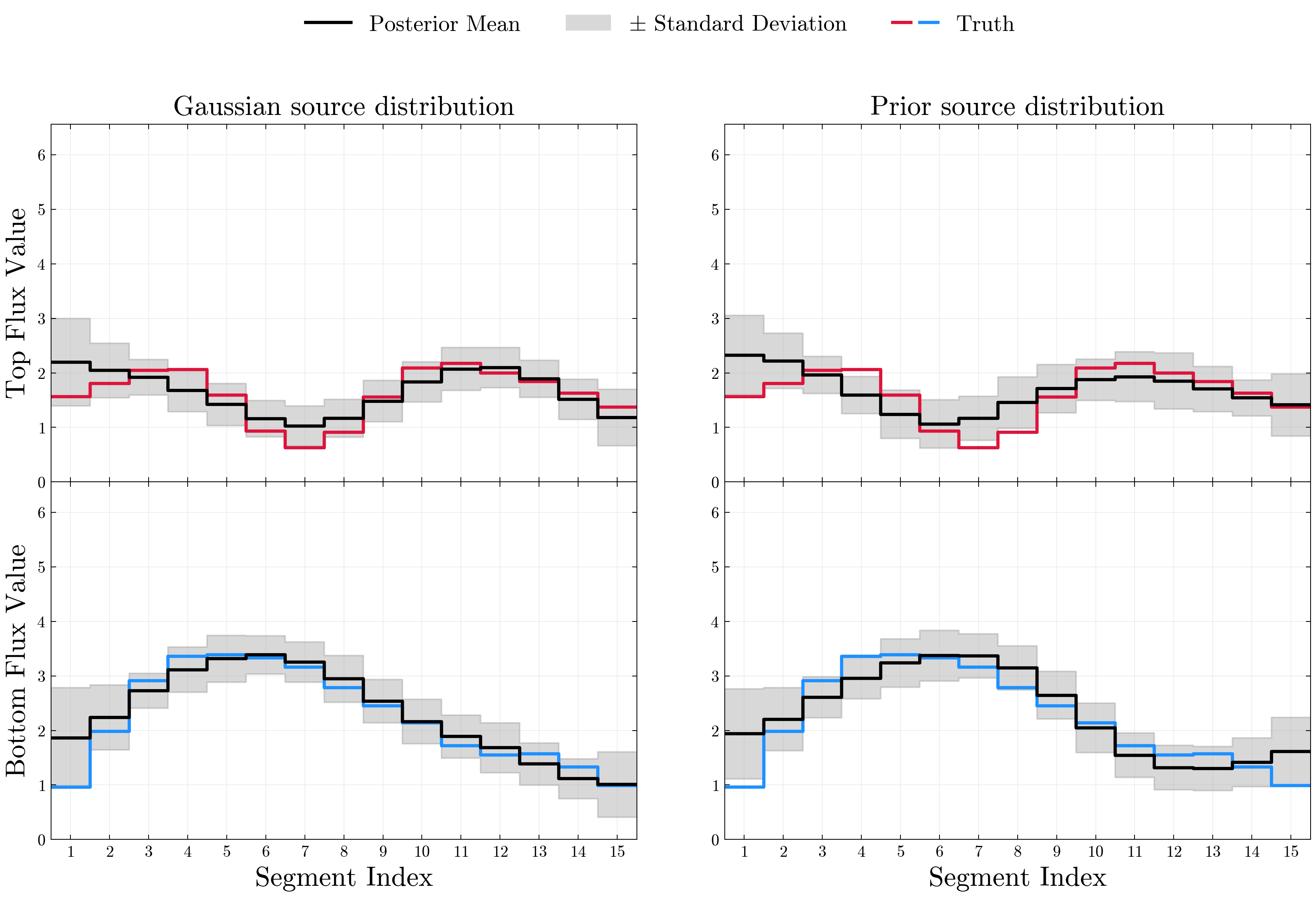}

\caption{Posterior mean and standard deviation of the inferred flux and the true flux for a test case with data size 400 and 1\% noise. Left: Gaussian source distribution. Right: Prior source distribution.}
\label{fig:posterior_mean_s400_n1}
\end{figure}
\Cref{fig:posterior_mean_s400_n1} shows the posterior mean and standard deviation of the inferred flux together with the true flux for a representative test sample  using a Gaussian or the prior as the source density when the training dataset's size is 400. For both source distributions, we observe that the mean flux is close to the true value, and that the true value is almost always contained within one standard deviation of the mean.

\Cref{fig:training_4000} shows the training and test loss curves for the larger dataset of size 4{,}000. Increasing the number of training samples delays the onset of overfitting and significantly stabilizes training (similar to \Cref{fig:conditional_memorization_data_and_loss_data} in \ref{app:finite-data}). The plateau region of the test loss becomes wider, allowing greater flexibility in selecting a stopping point. Based on the loss curves, the checkpoint at 20,000 iterations is selected for both the Gaussian and prior source distributions. 
\begin{figure}[!t]
    \centering
    \captionsetup[subfigure]{justification=centering}

    \begin{subfigure}[t]{0.48\textwidth}
        \centering
        \includegraphics[width=\linewidth]{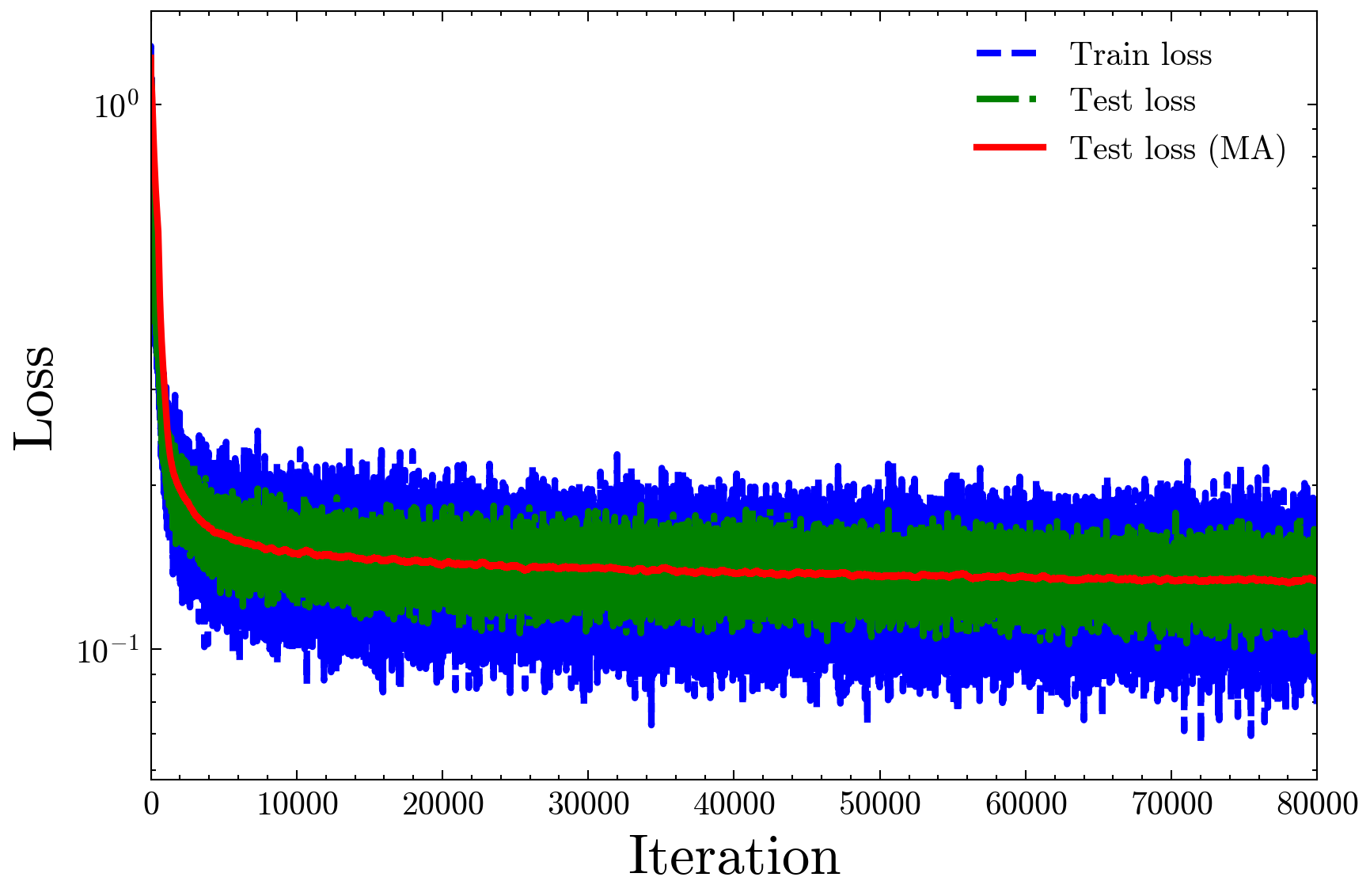}
        \caption{Gaussian source distribution}
    \end{subfigure}
    \hfill
    \begin{subfigure}[t]{0.48\textwidth}
        \centering
        \includegraphics[width=\linewidth]{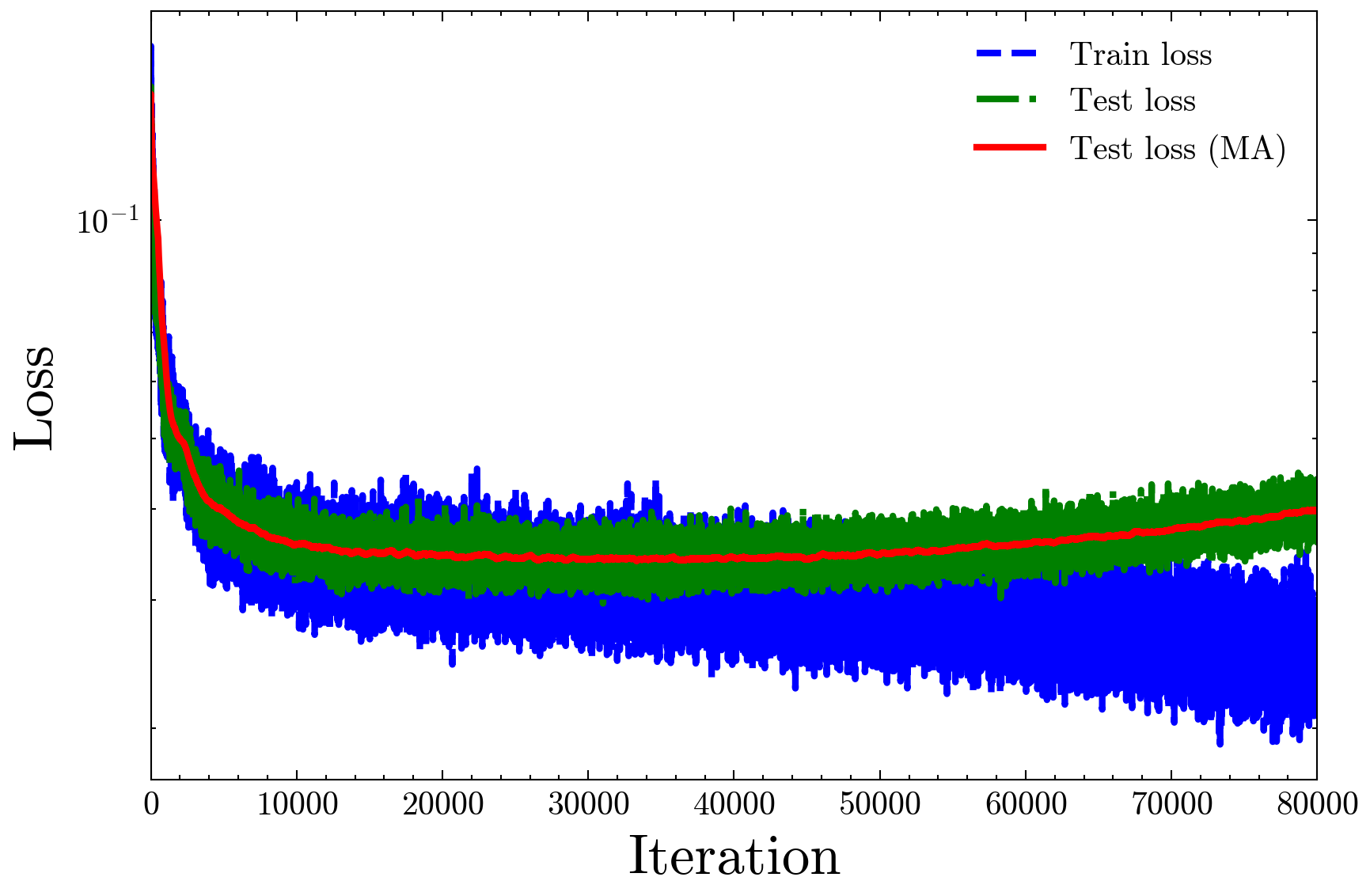}
        \caption{Prior source distribution}
    \end{subfigure}
    \caption{Training and test loss for the advection-diffusion-reaction problem with 4{,}000 training samples}
    \label{fig:training_4000}
\end{figure}

Next, we quantify the performance of the conditional flow matching models (two different source distributions and two training dataset sizes). Specifically, for each sensor measurement in the test set, we generate 100 posterior samples and compute their empirical mean and standard deviation. We then compute the mean relative error of the posterior mean with respect to the true flux, subsequently averaging these quantities for all 30 dimensions and all test samples, which results in a single scalar value for the error and standard deviation at each checkpoint.  We refer to these scalar quantities as the ``error'' and ``std'' in \Cref{tab:flowmatching_comparison}. Note that each experiment is performed using two different random seeds, and the results reported in \Cref{tab:flowmatching_comparison} are averaged over these runs. For the same amount of training data, we observe that model with the Gaussian source distribution incurs somewhat smaller error, when compared to the model with the prior as the source distribution. We also observe that as expected, increasing the amount of data reduces the error and the uncertainty, as seen in the reduced values of average error and average standard deviation.

\begin{table}[!b]
\centering
\caption{Comparison of conditional flow matching models with different types of source distributions and trained using different amounts of training data on the advection-diffusion-reaction problem}
\begin{tabular}{l|lcccc}
\toprule
\multicolumn{2}{l}{Source distribution} & \multicolumn{2}{c}{Gaussian} & \multicolumn{2}{c}{Prior} \\
\midrule
\multicolumn{2}{l}{Size of training data} & 400 & 4000 & 400 & 4000 \\
\midrule
\multirow{3}{*}{\rotatebox[origin=c]{90}{Metric}}
&Average $N_{\text{step}}$ & 13.0 & 14.7 & 9.0 & 9.4 \\
&Average error & 0.144 & 0.139 & 0.173 & 0.142 \\
&Average std & 0.061 & 0.055 & 0.066 & 0.053 \\
\bottomrule
\end{tabular}
\label{tab:flowmatching_comparison}
\end{table}
Beyond reconstruction accuracy, \Cref{tab:flowmatching_comparison} also reports the average number $N_{\text{step}}$ of sampling steps. Using the prior source distribution results in smaller number of integration steps compared to the Gaussian source. This may be explained by the closer proximity of the prior distribution to the posterior, when compared with the standard normal distribution. The cause for the closer proximity are as follows: \textit{(i)} just like the posterior distribution, samples from the prior distribution are normalized between $-1$ and $1$, where as samples from the standard normal distribution are not, and \textit{(ii)} just like the posterior, the prior distribution, which is essentially a Gaussian process, accounts for the correlation between flux values that are spatially close to each other, whereas the standard normal does not account for this correlation.


\begin{rem}
    From the experiments in \Cref{subsec:cond-density,subsubsec:one-step-DA}, we do not observe any empirical benefits from using the prior distribution $\rho_{\X}$ as the source unless it encodes useful information about the posterior such as in \Cref{subsubsec:ADR}. Hence, herein we will consider only the Gaussian source (\ie the multivariate standard normal distribution) for all the numerical examples.
\end{rem}

\subsubsection{Quasi-static elastography}\label{subsubsec:sci}


In this section, we consider the synthetic quasi-static elastography application adapted from~\cite{dasgupta2025conditional}. Quasi-static elastography is a medical imaging technique that uses ultrasound to measure tissue deformation under an external load \cite{barbone2009review}. These displacement measurements are then used to infer the spatially varying shear modulus of the tissue. In this study, the specimen consists of a stiffer circular inclusion of fixed radius embedded within a homogeneous soft background. Therefore, the objective of this inverse problem is to recover the shear modulus field from noisy measurements of the vertical displacement component.

We assume the specimen is linear and isotropic. Under quasi-static conditions and in the absence of body forces, the governing equations for the forward problem consist of the equilibrium equation
\begin{equation}\label{eq:equilibrium_equation}
    \nabla \cdot \bm{\sigma} = 0, 
\end{equation}
and the linear elastic constitutive law
\begin{equation}\label{eq:linear_constitutive_law}
    \bm{\sigma}= 2\mu \nabla^{\mathrm{s}}\uu + \lambda (\nabla\cdot\uu) \bm{I},
\end{equation}
assuming plane stress conditions. In these equations, $\bm{\sigma}$ denotes the Cauchy stress tensor, $\uu$ denotes the displacement field, $\nabla^{\mathrm{s}} \uu$ denotes the symmetric strain tensor, while $\mu$ and $\lambda$ are the Lam\'{e} parameters. The specimen measures $1 \times 1$ cm$^2$ and comprises a homogeneous background containing a uniformly stiff circular inclusion with fixed radius of 0.12 cm. The shear modulus values are fixed at 1.5 kPa for the inclusion and 1 kPa for the background. To simulate compression, the top edge is fixed in the vertical direction and is traction-free in the horizontal direction, while a downward vertical displacement of 0.01 cm is imposed on the bottom edge. The left and right edges are traction-free in both directions, and the top-left corner is pinned to prevent rigid-body motion.

The joint distribution $\prob{\X\Y}(\x, \y)$ represents pairs of shear modulus fields $\X$ and corresponding noisy vertical displacement field $\Y$. A realization of $\X$ is a $56 \times 56$ discretized spatial map of the shear modulus over the specimen. The parametric prior distribution controls the location of the inclusion’s center: the coordinates of the center are sampled independently from uniform distributions $\mathcal{U}(0.2, 0.8)$ cm along both spatial directions, ensuring that the inclusion remains fully inside the domain. Therefore, the parametric prior in this experiment is two-dimensional. For a given realization $\x$ of $\X$, the corresponding measurement $\y$ is generated by solving the forward elasticity problem using the finite element method. The resulting vertical displacement field is discretized on the same $56 \times 56$ Cartesian grid as the shear modulus field, so that $d = D = 56 \times 56$. Subsequently, homoscedastic zero-mean Gaussian measurement noise with standard deviation $\sigma_{\eta} \times u_{\max}$ is added to the displacement field, where $u_{\max}$ denotes the maximum vertical displacement across all training samples. The noise is truncated to the interval $[-3\sigma_{\eta} u_{\max},\, 3\sigma_{\eta} u_{\max}]$. We consider the noise level $\sigma_{\eta} = 0.05$. The training dataset consists of 10,000 paired samples drawn from the joint distribution $\prob{\X\Y}$. \Cref{fig:sci_train_samples} shows five data points randomly sampled from the training dataset. An additional 1,000 samples are used for testing. 
\begin{figure}[!t]
    \centering
    \includegraphics[width=0.85\linewidth]{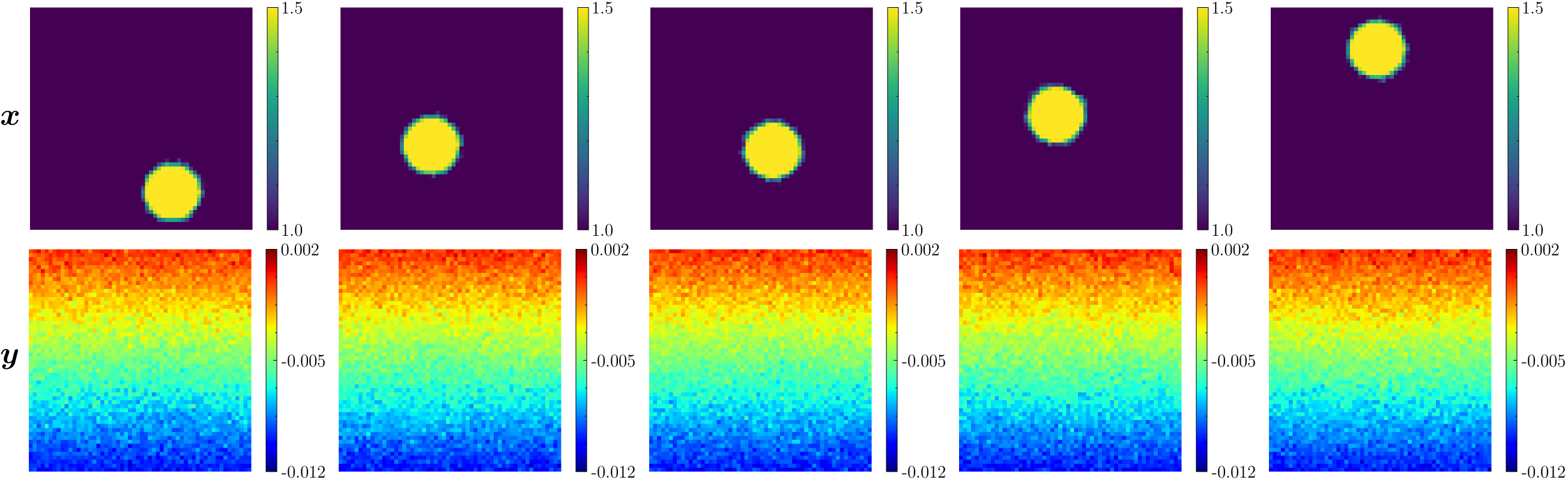}
    \caption{Five realizations of $\mathbf{X}$ and $\mathbf{Y}$ sampled from the joint distribution of the training dataset for the synthetic quasi-static elastography application. 
    The first row shows the shear modulus field, and the second row shows the corresponding noisy vertical displacement measurements}
    \label{fig:sci_train_samples}
\end{figure}

\begin{figure}[!b]
    \centering
    \includegraphics[width=0.70\linewidth]{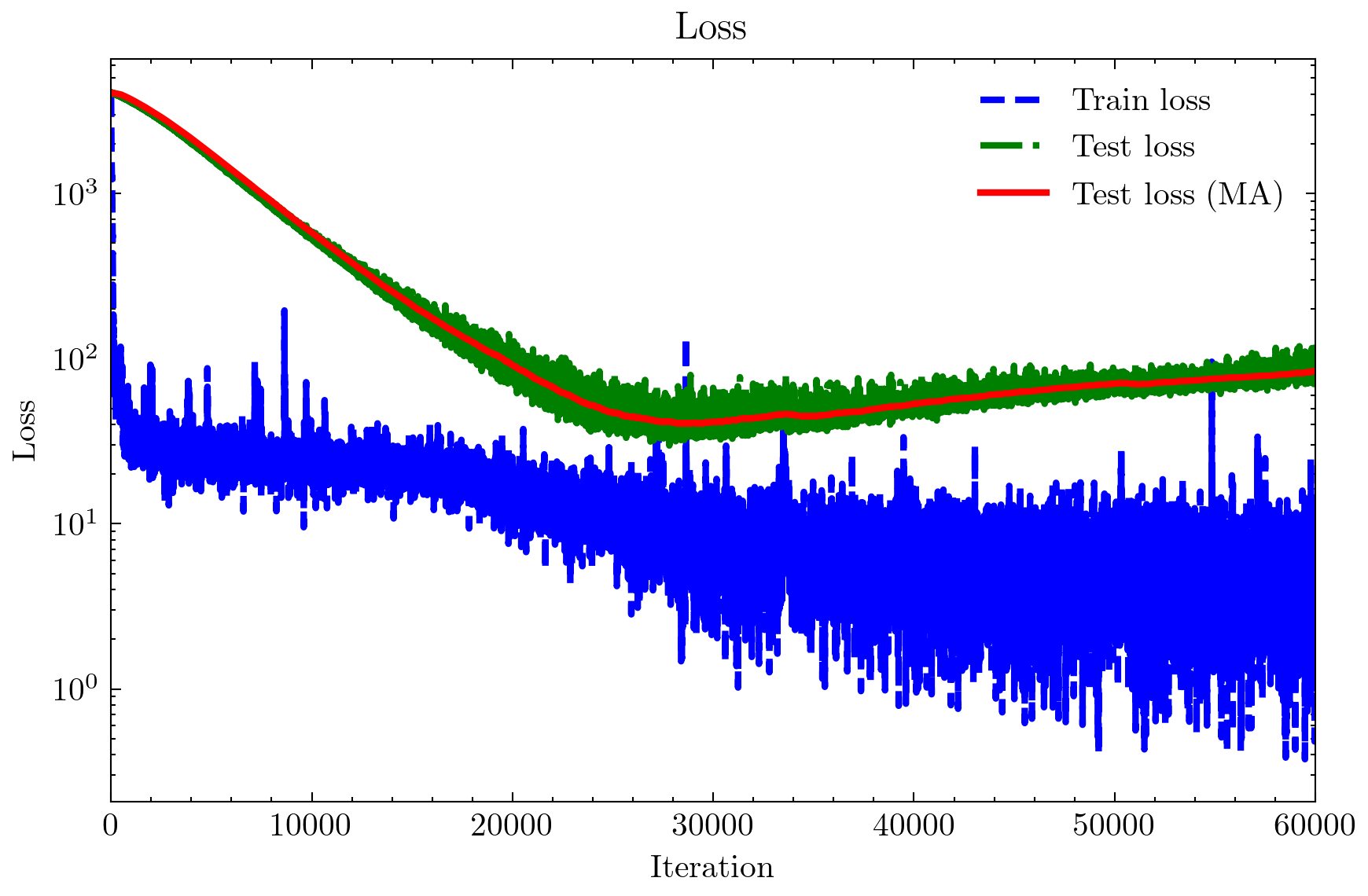}
    \caption{Training loss, test loss, and moving average of the test loss for the synthetic quasi-static elastography application}
    \label{fig:sci-loss_gaussian}
\end{figure}
Next, we train the velocity network with a Gaussian source distribution using the training dataset. \Cref{fig:sci-loss_gaussian} shows the training and test losses, together with the moving average of the test loss (averaged over the last 500 iterations). We use the velocity network trained for 24{,}000 iterations to generate samples from the posterior distribution because the moving average of the test loss reaches its minimum around this checkpoint.

\begin{figure}[!b]
    \centering
    \includegraphics[width=\linewidth]{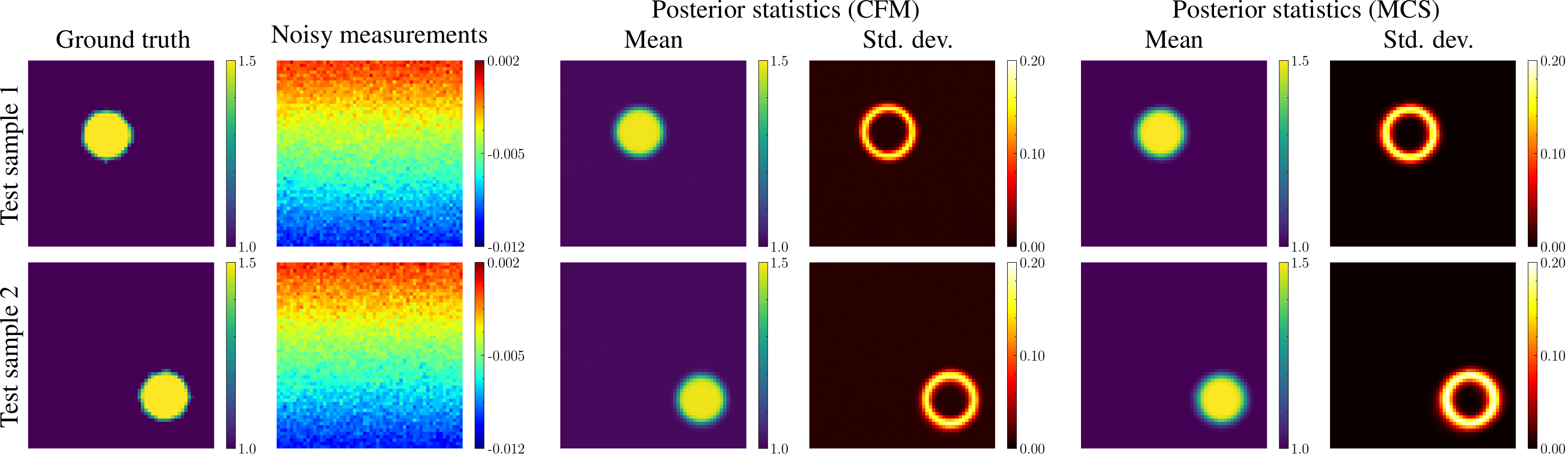}
    \caption{Posterior statistics estimated using the MCS and trained velocity network on two test samples for the synthetic quasi-static elastography application}
    \label{fig:sci-results}
\end{figure}
\begin{table}[!b]
\centering
\caption{RMSE between statistics estimated using tconditional flow matching and MCS, and the number of sampling steps for the test samples in \Cref{fig:sci-results}} 
\label{tab:rmse_samples}
\begin{tabular}{cccc}
\toprule
\multirow{2}{*}{\makecell{Test\\sample}}& \multicolumn{2}{c}{RMSE in posterior statistics} & \multirow{2}{*}{\makecell{Average number\\of sampling steps}}\\
\cline{2-3}
 & Mean & Std. dev & \\
\midrule
1 & 0.014 & 0.012 & 5 \\
2 & 0.012 & 0.011 & 7 \\
\bottomrule
\end{tabular}
\end{table}
Using the trained model at the selected checkpoint, we generate 4{,}000 realizations from the posterior distribution and compute the corresponding posterior statistics for two test samples that were not part of the training dataset. We also use Monte Carlo simulation (MCS) to compare the estimated posterior statistics with a reference solution. We use a sample of size 500{,}000, significantly larger than the training dataset, to control the variance in the MC estimates of the posterior statistics. The posterior statistics estimated using MCS serve as the reference statistics in this experiment. Additional details regarding the MCS procedure can be found in \cite{dasgupta2025conditional}. \Cref{fig:sci-results} compares the posterior statistics estimated using conditional flow matching and MCS on two test samples that were not part of the training dataset. The first and second columns in \Cref{fig:sci-results} shows the `true' shear modulus field and the corresponding noisy measurements, respectively. In \Cref{fig:sci-results}, the third and fourth column shows the posterior statistics (pixel-wise mean and standard deviation) estimated from the realizations generated using trained velocity network, while the last two columns show the reference posterior statistics estimated using MCS. These reuslts show that the conditional flow matching method with an optimally trained velocity network is able to capture the posterior statistics in both test cases. Although the conditional flow matching method yields a pixel-wise standard deviation in the posterior that is slightly lower than the reference, it correctly reflects greater uncertainty near the inclusion’s boundary. \Cref{tab:rmse_samples} assesses the quality of the statistics estimated by the conditional flow matching method by reporting their root mean squared error (RMSE) relative to the reference statistics computed via MCS.  \Cref{tab:rmse_samples} also reports the average number of sampling steps, indicating that only a few integration steps are necessary to sample the target posterior. 

\begin{rem}
    In earlier work by \citet{dasgupta2025conditional}, a discrete-time conditional diffusion model was used to solve the same inverse problem. Achieving comparable inference quality in \cite{dasgupta2025conditional} required 640 sampling steps, which is substantially more than the number of steps reported in \Cref{tab:rmse_samples}. This highlights an advantage of the continuous-time conditional flow matching formulation, whereby adaptive integration of \Cref{eq:ode-con} helps  significantly improve sampling efficiency.
\end{rem}

We also sampled from the trained velocity field after 60{,}000 iterations. \Cref{fig:sci-results-overfit} shows the estimated statistics computed using realizations from the target posterior in each case. As can be seen, the standard deviation is diminished. Comparing \Cref{fig:sci-results-overfit,fig:sci-results}, we find that the pixel-wise standard deviation estimated with an over-trained velocity network is substantially smaller than that obtained with an optimally trained velocity network. This further highlights the importance of early stopping of training to avoid degeneracy in the posterior approximation. 
\begin{figure}[H]
    \centering
    \includegraphics[width=0.85\textwidth]{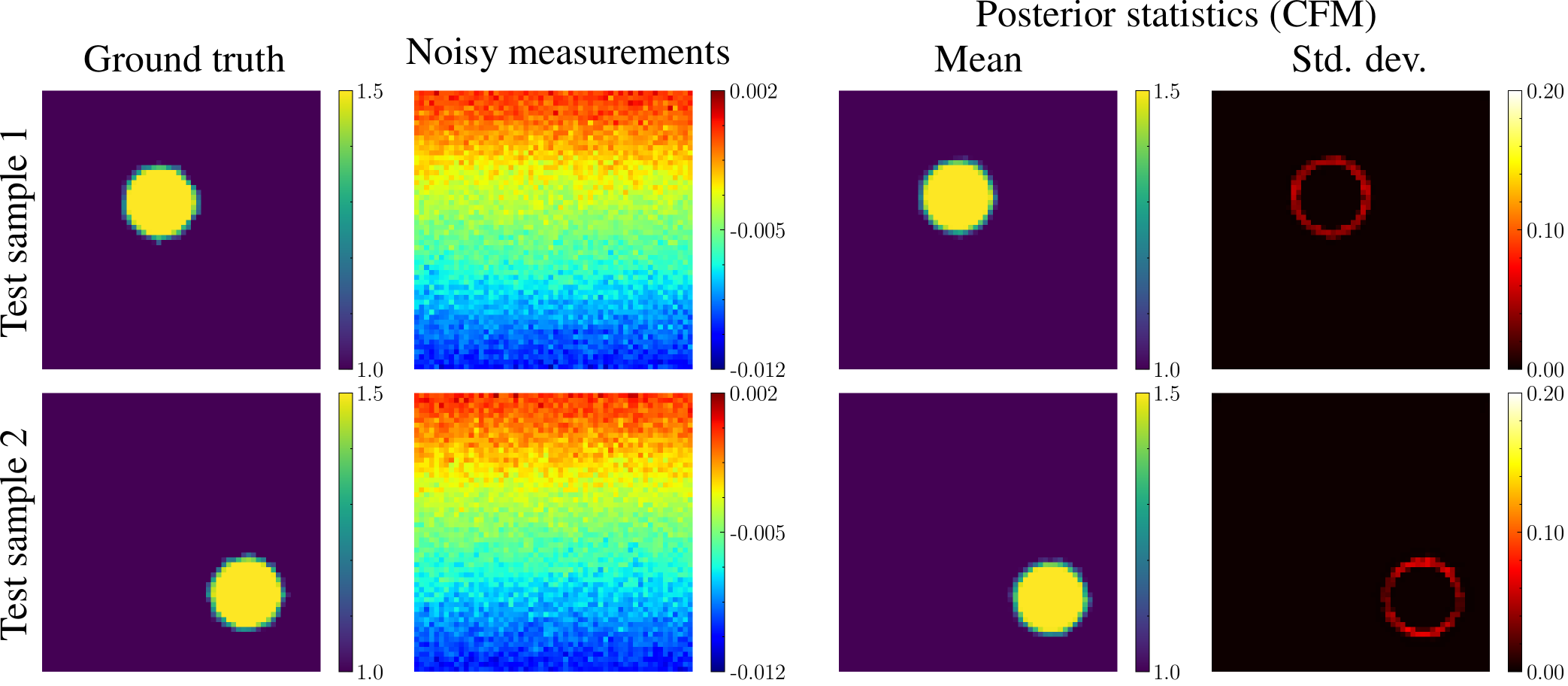}
    \caption{Posterior statistics estimated using a severely trained velocity network ($60{,}000$ iterations) on two test samples for the quasi-static elastography application}
    \label{fig:sci-results-overfit}
\end{figure}

\subsection{Applications with experimental data}

\subsubsection{Quasi-static elastography of a circular inclusion within homogeneous media}\label{subsubsec:elasto2}

In this example, we apply conditional flow matching to the inverse elasticity problem arising in quasi-static elastography. We adapt this example from~\cite{ray2022efficacy,dasgupta2025conditional}. Specifically, we wish to infer the spatial distribution of the shear modulus of a specimen using noisy full-field measurements of the vertical displacements. 
We assume the relation between the displacements and shear modulus for an elastic, isotropic, and incompressible medium in the absence of body forces to follow the equilibrium and constitutive equations, \Cref{eq:equilibrium_equation,eq:linear_constitutive_law}, respectively, %
under plane stress assumptions~\cite{ray2022efficacy}. 
The specimen is 34.608$\times$26.297 mm\textsuperscript{2}. We assume the left and right edges to be traction free, and the top and bottom surfaces to be traction free along the horizontal direction. We subject the top and bottom edges of the specimen to vertical displacements of 0.084 mm and 0.392 mm, respectively, to simulate compression of the specimen. 

\begin{table}[!t]
    \caption{Details of random variables comprising the parametric prior distribution for $\X$ in the quasi-static elastography application}
    \label{tab:elasto2_prior}
    \centering
    \begin{tabular}{lc}
        \toprule
        Random variable & Distribution \\
        \toprule
        Distance of the inclusion's center from the left edge (mm) & $\mathcal{U}(7.1,19.2)$ \\
        Distance of the inclusion's center from the bottom edge (mm) & $\mathcal{U}(7.1,27.6)$ \\
        Radius of the inclusion (mm)  & $\mathcal{U}(3.5,7.0)$ \\
        Ratio between the inclusion's and background's shear modulus (mm) & $\mathcal{U}(1,8)$ \\
        \bottomrule
    \end{tabular}
\end{table}
\begin{figure}[!t]
    \centering
    \includegraphics[width=0.84\textwidth]{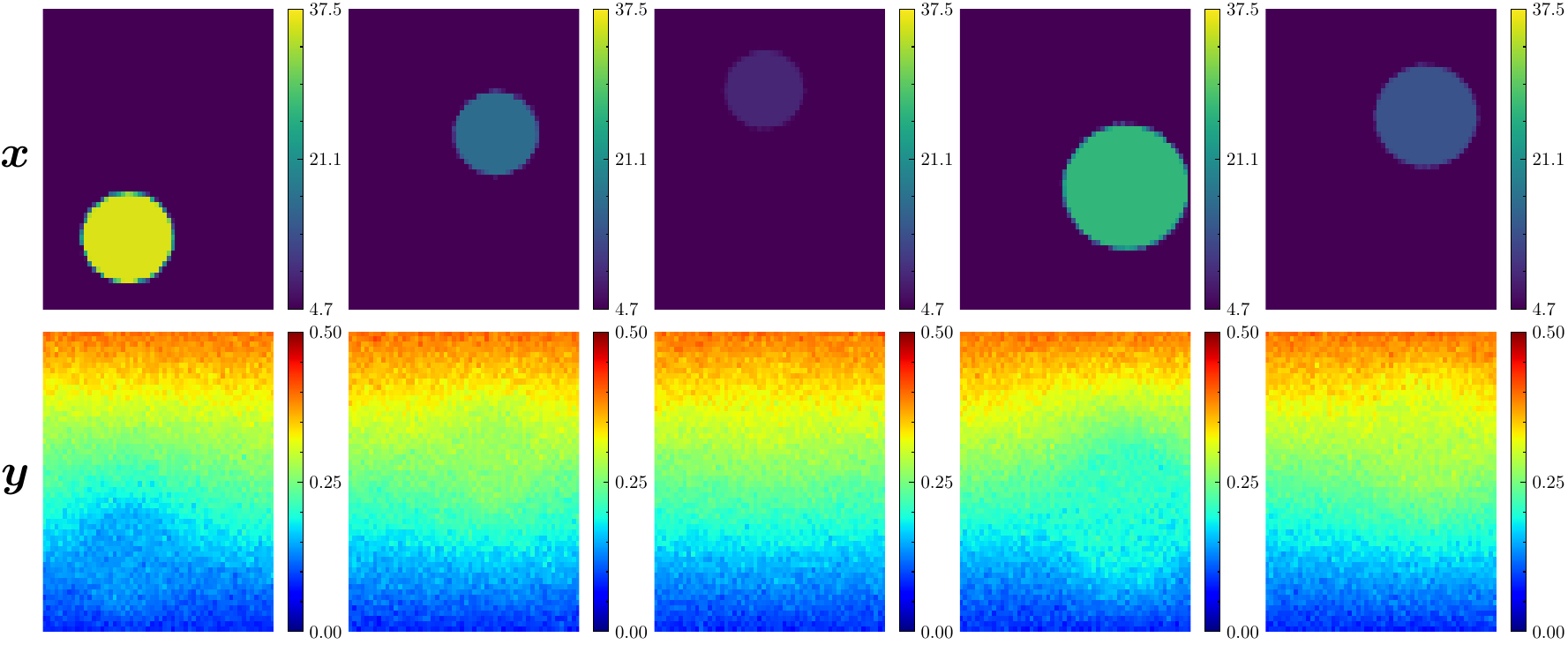}
    \caption{Five typical realizations of $\X$ and $\Y$ sampled from the training dataset for the quasi-static elastography application with experimental data. The first row shows the spatial distribution of the shear modulus field, and the second row shows the corresponding measurements of the noisy vertical displacement field}
    \label{fig:elasto2_train_samples}
\end{figure}
A realization of $\X$ corresponds to the spatial distribution of the shear modulus of a specimen, which we discretize over a 56$\times$56 Cartesian grid. These realizations are sampled from a suitable parametric prior designed to model stiff circular inclusions in a uniform background of 4.7 kPa, consisting of four random variables that control the coordinates of the center of the inclusion, the radius of the inclusion, and the ratio of the shear modulus of the inclusion with respect to the shear modulus of the background. \Cref{tab:elasto2_prior} provides details regarding these random variables. We propagate a realization of $\X$ through a finite element model with linear triangular elements, and add independent, homoscedastic Gaussian noise with standard deviation equal to 0.001 mm to the predicted discrete vertical displacement field to obtain the corresponding realization of $\Y$. The training dataset consists of 8000 such iid pairs of $\X$ and $\Y$ sampled from the joint $\prob{\X\Y}$. \Cref{fig:elasto2_train_samples} shows five data points randomly sampled from the training dataset. 

Next, we train the velocity network using the training dataset and a Gaussian source distribution. \Cref{fig:elasto2_losses} shows the training and test losses, together with the moving average of the test loss. We choose to sample the posterior distribution with the velocity network trained for 50,000 iterations, in the region where the test loss saturates. 
\begin{figure}[t]
    \centering
    \includegraphics{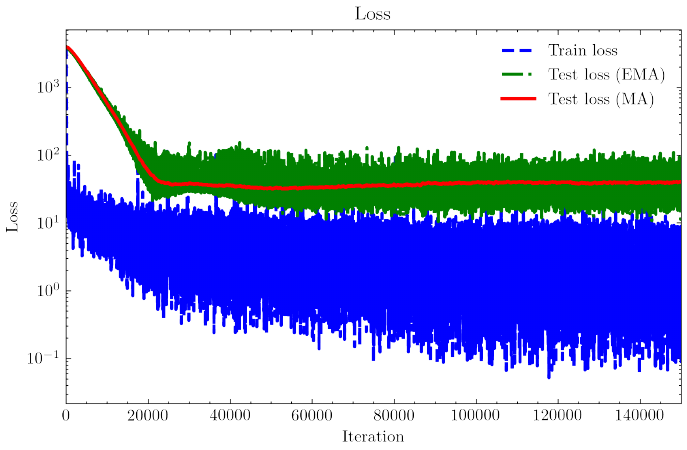}
    \caption{Training loss, test loss, and the moving average of the test loss for the velocity network in the quasi-static elastography application with experimental data}
    \label{fig:elasto2_losses}
\end{figure}

\paragraph{Validation on synthetic test data}  We use the trained velocity networks at the chosen checkpoint to generate 800 realizations from the posterior distribution corresponding to two test measurements not in the training dataset and infer the spatial distribution of the shear modulus in both test specimens. \Cref{fig:elasto2_gaussian_test_results} shows the results of these tests for the Gaussian source distribution. The first column in \Cref{fig:elasto2_gaussian_test_results} shows the `true' spatial distribution of the shear modulus in the two test specimens, whereas the second column in \Cref{fig:elasto2_gaussian_test_results} shows the corresponding full-field noisy measurements of vertical displacement. The third and fourth columns in \Cref{fig:elasto2_gaussian_test_results} show the estimated pixel-wise posterior mean and standard deviation of the posterior realizations for the two test cases. The last column in \Cref{fig:elasto2_gaussian_test_results} shows the absolute pixel-wise error between the estimated pixel-wise posterior mean and the corresponding ground truth. \Cref{tab:elasto2_test_results_rmse} tabulates the root mean squared error (RMSE) between the estimated pixel-wise posterior mean and the ground truth across the two test samples. From \Cref{tab:elasto2_test_results_rmse} and \Cref{fig:elasto2_gaussian_test_results}, we conclude that the inferred spatial distribution of the shear modulus of both test specimens is close to the ground truth. Moreover, we observe that the pixel-wise standard deviation is larger along the periphery of the inclusion rather than in its interior, which indicates greater uncertainty about the inclusion's location relative to its stiffness. \Cref{tab:elasto2_test_results_rmse} also provides the average number of sampling steps for both test cases. The average is taken across all the posterior samples in each test case. \Cref{tab:elasto2_test_results_rmse} shows that only a few steps is necessary for posterior sampling using the conditional flow matching model, which again confirms that generating posterior samples is efficient. In contrast,  \citet{dasgupta2025conditional} reports using 1280 steps to sample the posterior using a discrete version of the conditional diffusion model for the same problem. 
\begin{figure}[t]
    \centering
    \includegraphics[width=0.85\textwidth]{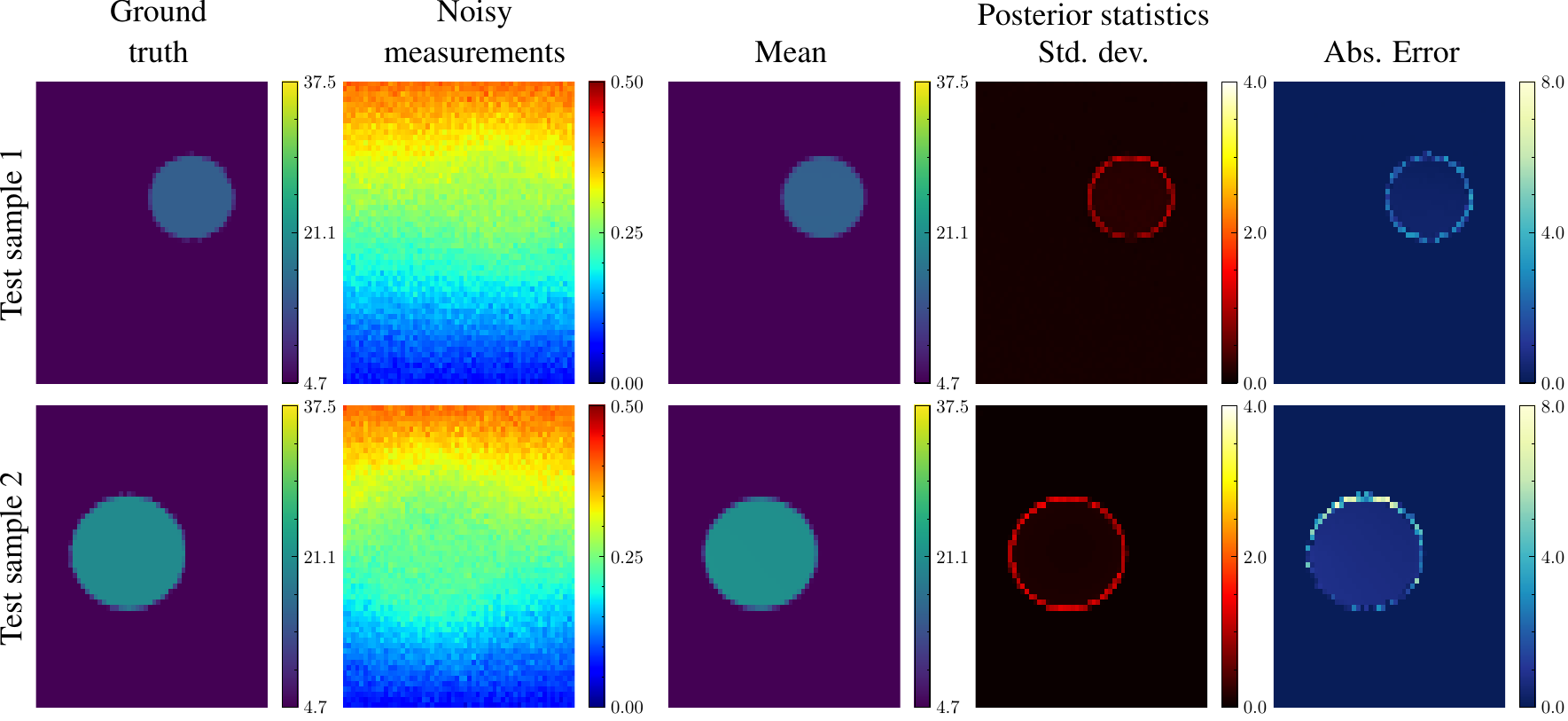}
    \caption{Posterior statistics estimated using the trained velocity network on two test samples for the quasi-static elastography application with experimental data}
    \label{fig:elasto2_gaussian_test_results}
\end{figure}
\begin{table}[!t]
    \caption{RMSE between the posterior mean and the ground truth and  average number of sampling steps for two test samples for the quasi-static elastography application with experimental data}
    \label{tab:elasto2_test_results_rmse}
    \centering
    \begin{tabular}{ccc}
    \toprule
    Test sample & RMSE & Avg. number of sampling steps \\
    \midrule
     1  &  0.300 & 9\\
     2  &  0.563 & 16\\
    \bottomrule
    \end{tabular}
\end{table}  
\begin{figure}[!t]
    \centering
    \includegraphics[width=0.85\textwidth]{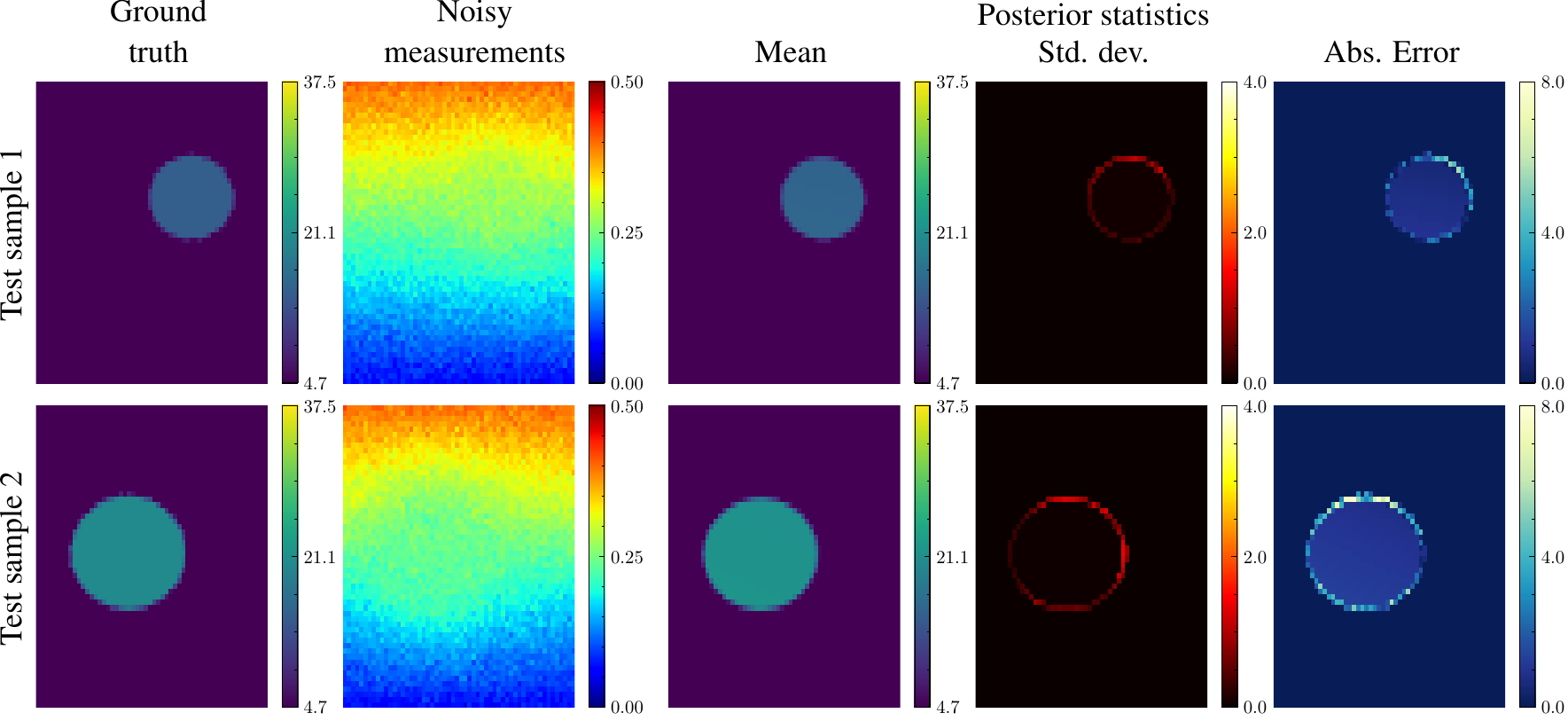}
    \caption{Posterior statistics estimated using a severely over-trained velocity network on two test samples shown in \Cref{fig:elasto2_gaussian_test_results}}
    \label{fig:elasto2_gaussian_test_results2}
\end{figure}


Further, \Cref{fig:elasto2_gaussian_test_results2} shows the results obtained using a velocity network trained for 300,000 iterations using a Gaussian source distribution. \Cref{fig:elasto2_gaussian_test_results2} further confirms the effect of overtraining, which manifests as a reduction in the pixel-wise standard deviation among the samples from the posterior. The average pixel-wise standard deviation for test samples 1 and 2 drops from 0.093 and 0.047, respectively, to 0.015 and 0.022, respectively, from the optimally trained case to the over-trained case. Moreover, over-training also leads to inferior inference quality: the RMSE between the posterior mean and ground truth increases to 0.375 and 0.723 for the two test cases.

\paragraph{Application on experimental test data} We use the trained velocity network to infer the spatially varying shear modulus of a tissue-mimicking phantom, consisting of a stiff inclusion embedded inside a softer substrate~\cite{pavan2012nonlinear}. A mixture of gelatin, agar, and oil was used to manufacture the phantom. In the laboratory experiment, the phantom was gently compressed, and the vertical displacement was measured using ultrasound scans. These measurements are shown in the first column of \Cref{fig:elasto2_gaussian_exp_results}. The second and third columns in \Cref{fig:elasto2_gaussian_exp_results} show the pixel-wise posterior mean and standard deviation estimated from 800 posterior realizations. From the estimated posterior mean, we extract the inclusion’s average stiffness and diameter to be 12.67 kPa and 10.8 mm, respectively. These estimates are close to the corresponding experimental measurements of 10.7 kPa and 10.0 mm, respectively, and consistent with the findings from a previous study by \citet{dasgupta2025conditional}, where a conditional diffusion model, trained on the same training data, estimated the average stiffness and diameter of the inclusion to be 12.94 kPa and 10.8 mm, respectively. In this case involving experimental data, the conditional flow matching models require 8 sampling steps on average. 
\begin{figure}[!t]
    \centering
    \includegraphics[width=0.50\textwidth]{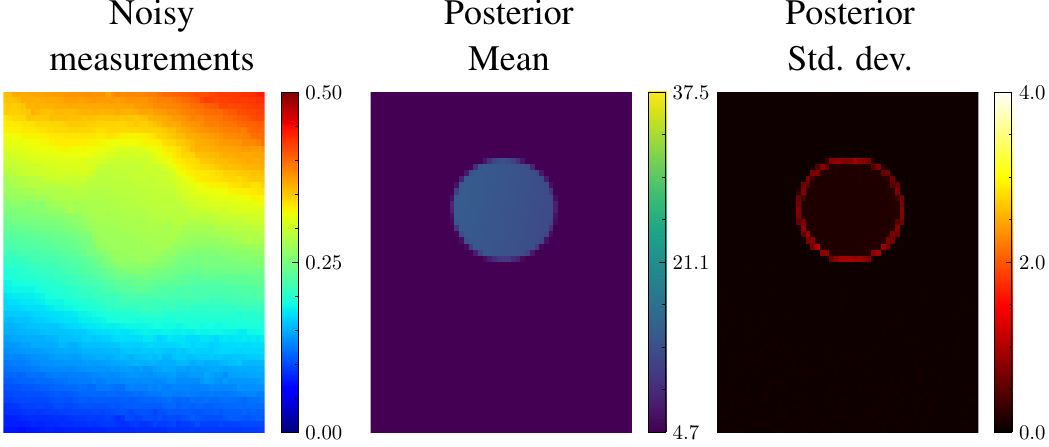}
    \caption{Posterior statistics estimated using the trained velocity network on experimental data for the quasi-static elastography application}
    \label{fig:elasto2_gaussian_exp_results}
\end{figure}

\subsubsection{Optical coherence elastography of tumor spheroids}\label{subsubsec:TS}

This application concerns the mechano-microscopy of tumor spheroids --- a collection of tumor cells. Mechano-microscopy is a type of phase-sensitive compression optical coherence elastography (OCE) that uses optical coherence microscopy (OCM), which is a high-resolution variant of optical coherence tomography  (OCT)~\cite{kennedy2014review,kennedy2014optical}. The objective is to infer the mechanical state of a tumor spheroid specimen from backscattered light as the specimen undergoes compression~\cite{kennedy2014review,kennedy2014optical}. We adapt this application from \cite{foo2024tumor,dasgupta2025conditional} and avoid providing an extensive background on OCE for brevity; instead, interested readers may refer to these references for relevant details. Briefly, this inverse problem involves inferring the spatial distribution of the Young's modulus in a tumor spheroid specimen, $\X$,  from the noisy phase difference measurements, $\Y$. For details on the physical experiment, specimen preparation, and data acquisition (omitted here for brevity), we refer readers to \cite[Appendix C]{dasgupta2025conditional}. We briefly discuss the forward physics and measurement model next. 

\begin{figure}[t]
	\centering
	\includegraphics[height=2in]{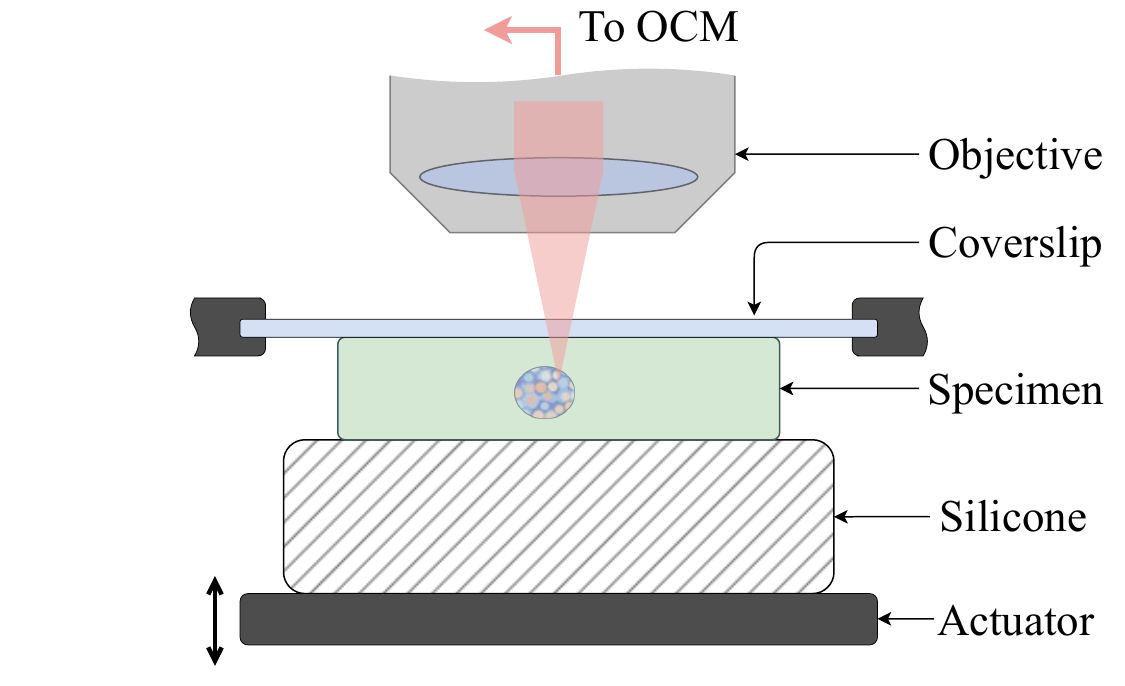}
	\caption{Experimental setup in the tumor spheroid application (adapted from \cite{dasgupta2025conditional})}
	\label{fig:ts_experimental_setup}
\end{figure}
\Cref{fig:ts_experimental_setup}, adapted from \cite{dasgupta2025conditional}, shows a schematic of the experimental setup. Under compression, the forward physics is governed by the equilibrium equation \Cref{eq:equilibrium_equation} for a linear, elastic, isotropic, nearly incompressible (the Poisson’s ratio is set to 0.49) solid undergoing finite deformation. Each specimen represents a tumor spheroid placed in hydrogel. The physical domain is 256$\times$256~$\mu$m\textsuperscript{2}. The Young's modulus field is obtained by sampling from a parametric prior (see \cite{dasgupta2025conditional} for details). The parametric prior consists of nine truncated Gaussian random variables controlling the width, height and location of the spheroid, the number of nuclei per square area, the radius of each nuclei, and the Young's modulus of each nuclei, cytoplasm and surrounding hydrogel. Moreover, the locations of the nuclei are decided using a custom algorithm described in \cite[Appendix C]{dasgupta2025conditional}. 

Using the Young's modulus field, realized following the above procedure, a CAD model is developed for the specimen. Then we conduct finite element analysis (FEA), using commercial software, to simulate uniaxial compression on the specimen under plain strain conditions. The boundary conditions are as follows: we specify the displacement along the top edge, the bottom-left corner of the specimen is fixed, and the vertical displacement along the bottom edge is constrained. The horizontal displacement of the top edge is zero, while the vertical displacement is sampled from a truncated Gaussian distribution. The FEA analysis yields the vertical component of the displacement field, which is manipulated to obtain the phase field (see \cite[Appendix C]{dasgupta2025conditional} for the relation between vertical displacement and phase). To this phase field, we add non-homogeneous, non-Gaussian measurement noise with a depth-dependent statistics. Finally, the noisy phase field is wrapped so that all measurements are between $(-\pi, \pi]$. See \cite[Appendix C]{dasgupta2025conditional} for more details regarding the FEA and measurement noise model.

In summary, a realization of $\X$ is the Young's modulus field discretized over a 256$\times$256 Cartesian grid and the corresponding realization of $\Y$ is the noisy wrapped phase difference field at the same grid points. Also note, $d = D = $ 256$\times$256 in this application. Following \cite{dasgupta2025conditional}, the synthetic dataset for this example consists of 30,000 paired realizations of $\X$ and $\Y$. Of these, 5000 realizations are simulated as described above. The rest are obtained via data augmentation. From this synthetic dataset, 24000 images are used for training and 6000 images for testing. Additionally, the Young's modulus field is rescaled as follows:
\begin{equation}
	E^\prime = \log_{10} \left( \frac{E}{E_{\text{hydrogel}}}\right),
\end{equation}
where $E^\prime$ is the rescaled Young's modulus. \Cref{fig:ts_sample} illustrates five randomly selected paired realizations from the dataset. 
\begin{figure}[t]
    \centering
    \includegraphics[width=0.85\linewidth]{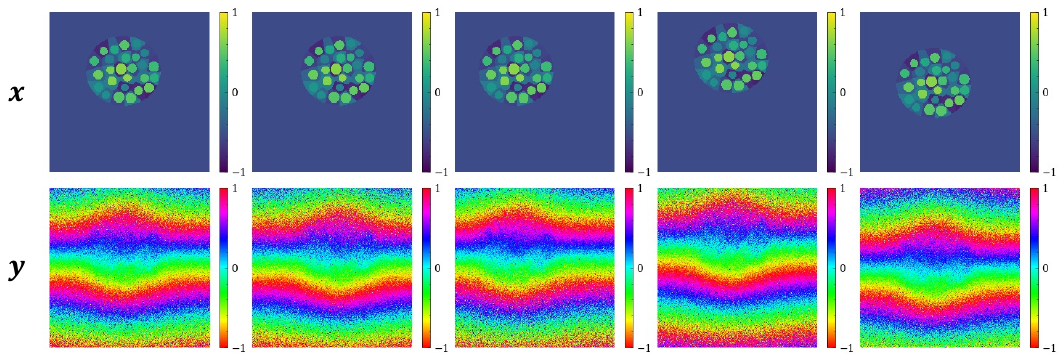}
    \caption{Five realizations of $\X$ and $\Y$ sampled from the joint distribution forming the training dataset for the tumor spheroid application. In the first row are instances of the log-normalized Young's modulus fields, and in the second row are corresponding instances of the noisy measurements. All values have been normalized to [-1,1]}
    \label{fig:ts_sample}
\end{figure}

The velocity network is trained using a Gaussian source distribution for 160,000 iterations. \Cref{fig:tumor_losses} presents both the training and test loss curves, along with the moving average of the test loss. We observe that the moving average test loss achieves its minimum value around 90,000 iterations and then increases slightly, reducing again after 140,000 iterations. We chose the velocity network trained for 90,000 iterations to sample the posterior because the moving averages of the test loss around 90,000 and 160,000 iterations are similar. 
\begin{figure}[!b]
   \centering
    \includegraphics[width=0.8\linewidth]{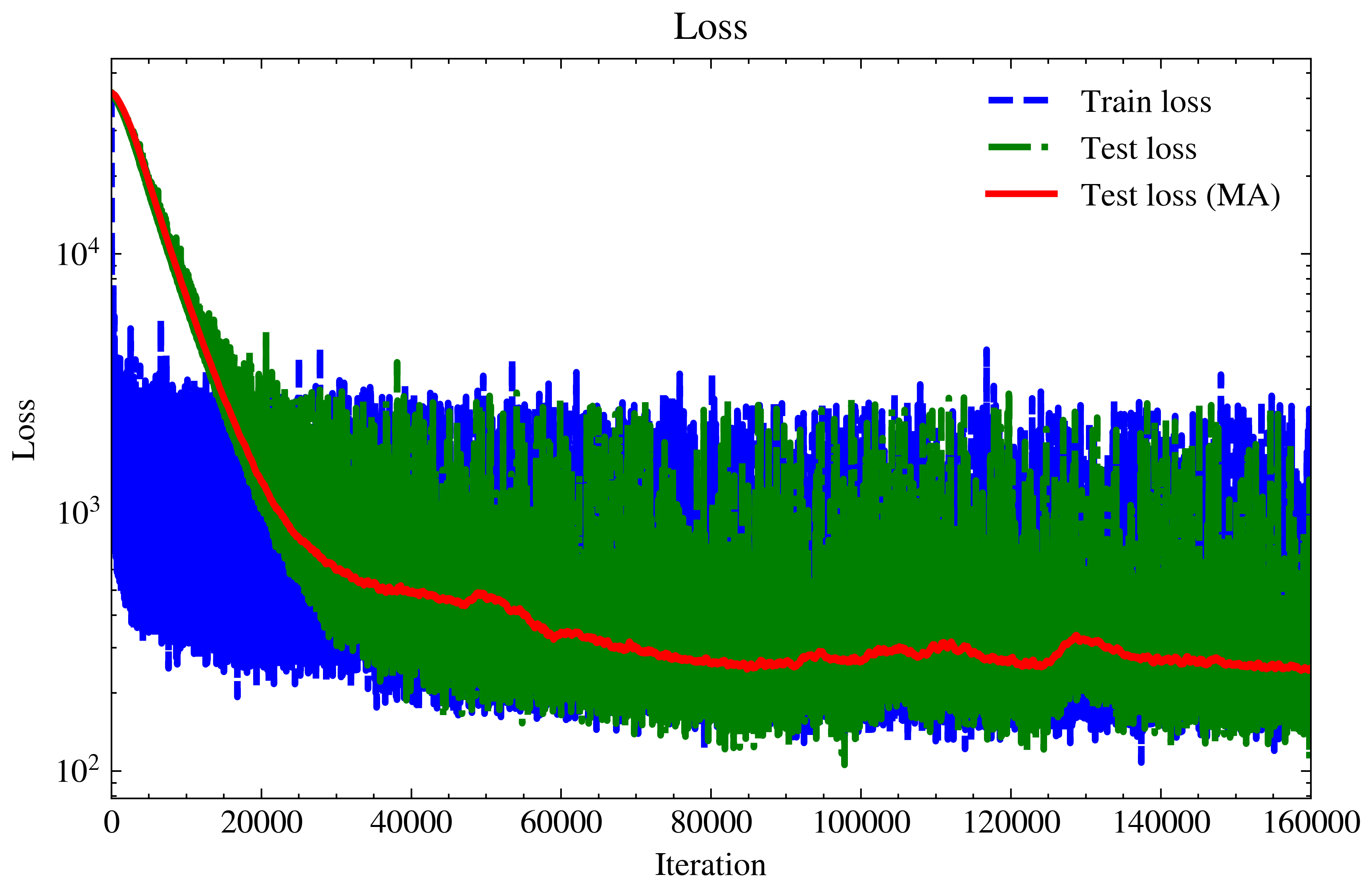}
   \caption{Training and test losses for the velocity network used in the tumor spheroid application, with moving average for the testing loss}
   \label{fig:tumor_losses}
\end{figure}

\paragraph{Validation on synthetic test data} We use the trained velocity network to sample 200 realizations of the Young's modulus field from the posterior distribution corresponding to two synthetic measurements from the test dataset, for which the corresponding true Young's modulus field (ground truth) is known. We note that it is possible to recover the Young's modulus field up to a multiplicative constant, the hydrogel modulus $E_{\text{hydrogel}}$, for which we assume a value of $10 \text{kPa}$ for all test cases.

The first two columns of \Cref{fig:tumor_results}, labeled as synthetic data, present the results for the examined synthetic cases, with the measurements shown in the first row, followed by three generated realizations of the Young's modulus field, followed by posterior statistics including the pixel-wise posterior mean and standard deviation, and finally the ground truth Young's modulus field shown in the last row. We observe that in both the synthetic test cases assessed, the estimated posterior mean captures the size and placement of the tumor spheroid. The estimated posterior mean also captures several of the stiffer nuclei present in the ground truth. Further examining the posterior standard deviation, we observe large uncertainties corresponding to the modulus field within each spheroid, and greater uncertainty associated with the boundaries of each nucleus. We can attribute such large uncertainties to the severely ill-posed nature of the inverse problem~\cite{ferreira2012uniqueness}. 
\begin{figure}
    \centering
    \includegraphics[width=0.88\linewidth]{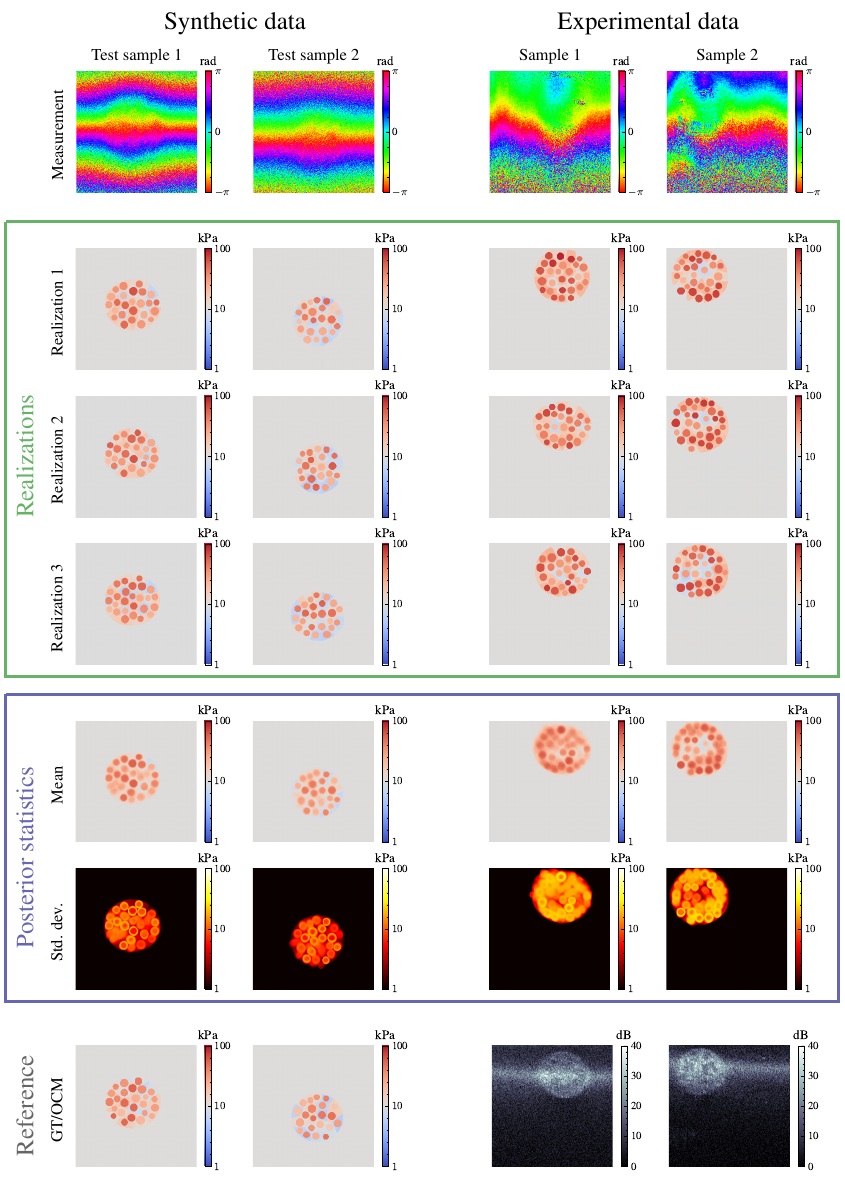}
    \caption{Posterior statistics estimated using the trained velocity network for select synthetic and experimental cases for the tumor spheroid application.}
    \label{fig:tumor_results}
\end{figure}

\paragraph{Application on experimental test data} The trained velocity network is then applied to experimentally-obtained measurements, for which we have access only to OCM images that were co-registered during the acquisition of the measured phase difference images for comparison. In this case, we note that the OCM images provide only coarse-scale information about the size and location of the tumor spheroids. Additionally, the experimental phase difference measurements are much noisier and more heterogeneous than the synthetically-generated measurements used for training. As a result of the increased noise in the measurement, we expect to observe higher uncertainty in the inferred Young's modulus field than in the synthetic cases. As with the synthetic cases, results are obtained up to the multiplicative constant $E_{\text{hydrogel}}$, for which we again use a value of $10 \text{kPa}$.

The third and fourth columns of \Cref{fig:tumor_results} present results for two experimental measurements. As before, the first row shows the conditional measurements, followed by three realizations, and the posterior statistics, including the pixel-wise mean and standard deviation. The final row shows the OCM images for each case, which we can use to assess the location and size of the tumor spheroids. We note that the generated samples are more diverse than in the synthetic cases, and as a result, the estimated posterior mean appears fuzzier, with less distinct nuclei indicated. This higher level of uncertainty can be attributed to the noisier measurements used for conditioning, and is also apparent in the posterior standard deviation images. However, comparing the OCM images with the estimated posterior mean, we remark that the conditional flow matching approach reproduces the locations and sizes of the tumor spheroids from experimental measurements despite the velocity network being trained on synthetic data.

\Cref{tab:tumor_test_results} additionally provides the average number of sampling steps required for the synthetic and experimental test cases examined here. For each case the average number of steps is computed across all posterior samples. We see that the number of steps is similar across both synthetic and experimental cases. Again, the number of steps is relatively small even though in this case $d=D=256\times256$, which indicates that posterior sampling can be very efficient for high-dimensional problems. In contrast,  \citet{dasgupta2025conditional} report using 3400 sampling steps to generate realizations from the target posterior using a discrete version of the conditional diffusion model for the same problem. 
\begin{table}[H]
    \caption{Average number of sampling steps for the tumor spheroid application}
    \label{tab:tumor_test_results}
    \centering
    \setlength\tabcolsep{0pt}
    \begin{tabular*}{0.6\textwidth}{@{\extracolsep{\fill}}*{3}{c}}
        \toprule
        \multirow{2}{*}{Measurement Type} & \multicolumn{2}{c}{Avg. number of sampling steps} \\
        \cline{2-3}
         & Test sample 1 & Test sample 2 \\
        \midrule
        Synthetic & 12 & 12 \\
        Experimental & 13 & 12\\
        \bottomrule
    \end{tabular*}
\end{table}

\section{Conclusions}\label{sec:conclusions}
In this work, we investigated the use of conditional flow matching for solving physics-constrained Bayesian inverse problems. By casting posterior inference as a conditional generative modeling task, the proposed framework learns a velocity field that transports samples from a chosen source distribution directly to the posterior distribution conditioned on observed measurements. The formulation requires only samples from the joint distribution of inferred variables and measurements and interfaces with the forward model in a black-box manner. As a result, it accommodates nonlinear, high-dimensional, and potentially non-differentiable forward operators without imposing restrictive assumptions on the likelihood.

An important contribution of this study is a theoretical and numerical examination of the behavior of the learned velocity field in the regime of finite training data. We demonstrated that, when trained beyond the point of optimal generalization, the velocity network can induce degenerate posterior distributions. Depending on the functional structure used to represent the conditioning variable, this degeneracy manifests either as variance collapse or as selective memorization of training samples. A simplified analysis elucidates the mechanism underlying this behavior and its connection to overfitting in regression problems. We also showed that monitoring the test loss and terminating training according to standard early-stopping criteria effectively mitigates these pathologies.

Through a series of benchmark and physics-based inverse problems, including multimodal conditional density estimation, a one-step data assimilation problem, and PDE-constrained parameter identification, we demonstrated that conditional flow matching can accurately approximate complex posterior distributions. We also examined the influence of different source distributions and found that their impact on accuracy and sampling efficiency can be problem dependent. Overall, the results establish conditional flow matching as a flexible, scalable, and practically effective approach for amortized Bayesian inference in physics-constrained inverse problems.

\section{Acknowledgments}
	
This work was initiated while AD was a postdoctoral researcher at the University of Southern California. The authors acknowledge support from ARO grant W911NF2410401 and ARO cooperative agreement W911NF-25-2-0183. The authors also acknowledge the Center for Advanced Research Computing (CARC, \href{https://carc.usc.edu}{carc.usc.edu}) at the University of Southern California for providing computing resources that have contributed to the research results reported within this publication. The authors also acknowledge Brendan Kennedy and Ken Foo from the University of Western Australia for their contribution to the results presented in \Cref{subsubsec:TS}. AD was additionally supported by the John von Neumann Fellowship at Sandia National Laboratories. This material is also based upon work supported by the U.S. Department of Energy, Office of Science, Office of Advanced Scientific Computing Research under Award Number 25-028431. 

Sandia National Laboratories is a multi-mission laboratory managed and operated by National Technology \& Engineering Solutions of Sandia, LLC (NTESS), a wholly owned subsidiary of Honeywell International Inc., for the U.S. Department of Energy’s National Nuclear Security Administration (DOE/NNSA) under contract DE-NA0003525. This written work is authored by an employee of NTESS. SAND2026-19587O. The employee, not NTESS, owns the right, title and interest in and to the written work and is responsible for its contents. Any subjective views or opinions that might be expressed in the written work do not necessarily represent the views of the U.S. Government. The publisher acknowledges that the U.S. Government retains a non-exclusive, paid-up, irrevocable, world-wide license to publish or reproduce the published form of this written work or allow others to do so, for U.S. Government purposes. The DOE will provide public access to results of federally sponsored research in accordance with the DOE Public Access Plan \url{https://www.energy.gov/downloads/doe-public-access-plan}.

\appendix
\setcounter{section}{0}
\renewcommand{\appendixname}{Appendix}
\renewcommand{\thesection}{Appendix~\Alph{section}}
\renewcommand{\thesubsection}{\Alph{section}\arabic{subsection}}
\renewcommand{\thefigure}{\Alph{section}\arabic{figure}}
\setcounter{figure}{0}
\renewcommand{\thetable}{\Alph{section}\arabic{table}}
\setcounter{table}{0}

\section{Details of the architecture for the velocity networks and training hyperparameters}\label{app:exp-settings}

\paragraph{Velocity networks modeled using MLPs} We use MLPs to model the velocity networks for the numerical examples in \Cref{subsec:cond-density,subsubsec:one-step-DA,subsubsec:ADR}. Following a previous study~\cite{dasgupta2026unifying}, we encode dependence on time $t$ using the Fourier features $[t - 0.5, \cos (2\pi t), \sin(2 \pi t), -\cos(4 \pi t)]$ which are concatenated with the spatial inputs $\X$ and $\Y$ before the first hidden layer. \Cref{tab:vel-network-MLP} provides additional details regarding the architectures of the velocity networks used for the various problems. 
\begin{table}[H]
    \caption{Details of the architecture of the velocity network for the numerical examples in \Cref{subsec:cond-density,subsubsec:one-step-DA,subsubsec:ADR}}
    \label{tab:vel-network-MLP}
    \centering
    \begin{tabular}{l c c c}
        \toprule
        \makecell[l]{Dataset/\\Inverse Problem} & \makecell{Width of\\hidden layers} & \makecell{Number of\\hidden layers} & \makecell{Activation\\function} \\
        \midrule
        Spiral (\Cref{subsec:cond-density}) & ~32 & 3 & ReLU \\
        One step data assimilation (\Cref{subsubsec:one-step-DA}) & 256 & 4 & ReLU \\ 
        Advection-diffusion-reaction (\Cref{subsubsec:ADR}) & 256 & 5 & ReLU \\
        \bottomrule
    \end{tabular}
\end{table}

\paragraph{Velocity networks modeled using DDPM-inspired U-Nets}
We model the velocity network using a DDPM-inspired U-Net~\cite{ho2020denoising} for the inverse problems discussed in 
\Cref{subsubsec:sci,subsubsec:elasto2,subsubsec:TS}. 
The U‑Net architecture consists of a encoder–decoder structure with skip connections between matching spatial scales. First, we append the input $\x_t$ with the measurements $\y$ along the channel dimension to introduce conditioning on the measurements. Next, an input block increases the number of channels in the input by a user-specified factor; we refer to the number of channels in the output of the input block to the number of model channels. The remainder of the encoder block contains multiple time-conditioned ResNet blocks that process spatial features at each resolution. Between every block the spatial resolution is halved, down to a resolution of 8 ultimately. These ResNet blocks are conditioned on the time $t$ via learned embeddings that are projected and then injected using scale and shift modulation. A bottleneck stage follows the encoder block at the lowest resolution and uses a ResNet block before upsampling begins. The decoder mirrors the encoder with upsampling and skip-feature concatenation, as is typical in U-Net architectures. The decoder ends with a convolutional head that predicts the velocity field with the appropriate number of channels. Additionally, we insert self-attention blocks at intermediate resolutions to help capture long-range spatial dependencies. \Cref{tab:vel-network-UNet} provides a few details regarding the U-Net architecture so that our experiments can be replicated. We refer interested readers to \cite{ho2020denoising} for additional details regarding the U-Net architecture. 
\begin{table}[t]
    \centering
    \caption{Details of the architecture of the velocity network for the numerical examples in 
    \Cref{subsubsec:sci,subsubsec:elasto2,subsubsec:TS}}
    \label{tab:vel-network-UNet}
    \begin{tabular}{l c c c c}
        \toprule
        \makecell[l]{Inverse Problem/\\Application} & \makecell{Number of\\residual blocks} & \makecell{Number of\\model channels} & \makecell{Channel\\multiplier} & \makecell{Attention\\resolution} \\
        \midrule
        \makecell[l]{Quasi-static\\elastography (\Cref{subsubsec:sci})} & 2& 128&[1, 2, 3, 4] &16\\ 
        \makecell[l]{Quasi-static\\elastography (\Cref{subsubsec:elasto2})} & 2& 128 & [1, 2, 3, 4] &8 \\
        Tumor spheroids (\Cref{subsubsec:TS}) & 2 & 128 & [1, 1, 2, 2, 4, 4] & 16 \\
        \bottomrule
    \end{tabular}
\end{table}

\paragraph{Additional details regarding training velocity networks} \Cref{tab:training-hyper-parameters} provides details of the training-related hyper-parameters we use for the various experiments. \Cref{tab:compute_cost_breakdown} provides an estimate of the wall times, which serves as a proxy for the computational cost, for completing 100 training iterations of the velocity network for the various examples presented in \Cref{sec:results} and the type of resource used for training. We only report the time necessary for training because sampling times are a very small fraction of the total wall time for training. 
\begin{table}[H]
    \centering
    \caption{Training hyper-parameters for training velocity networks for conditional flow matching}
    \label{tab:training-hyper-parameters}
    \begin{tabular}{l c c c c}
        \toprule
        \makecell[l]{Dataset/Inverse\\Problem/Application} & \makecell{Learning\\Rate} & \makecell{Batch\\Size} & \makecell{Number of\\ iterations} & \makecell{EMA\\coefficient}\\
        \midrule
        Spiral (\Cref{subsec:cond-density}) & 0.001~~ & 1000 & 100,000 & 0.9999 \\
        One step data assimilation (\Cref{subsubsec:one-step-DA})& 0.001~~ & ~500 & 100,000 & 0.9000 \\ 
        Advection-diffusion-reaction (\Cref{subsubsec:ADR}) & 0.001~~ & 1000 & ~~20,000 & 0.9999 \\
        Quasi-static elastography (\Cref{subsubsec:sci}) &0.0001 & ~256 & ~~60,000 & 0.9999 \\ 
        Quasi-static elastography (\Cref{subsubsec:elasto2}) & 0.0001& ~256 & 150,000 & 0.9999\\
        Tumor spheroids (\Cref{subsubsec:TS}) & 0.0001 & ~~16 & 160,000 & 0.9999 \\
        \bottomrule
    \end{tabular}
\end{table}
\begin{table}[H]
    \centering
    \caption{Approximate computational cost of training velocity networks for 100 iterations}
    \label{tab:compute_cost_breakdown}
    \begin{tabular}{@{\extracolsep{2pt}} l cc}
        \toprule
        \makecell[l]{Dataset/Inverse\\problem/Application} & \makecell{Approximate\\Wall time} & \makecell{Resource Type\\(NVIDIA GPU)} \\
        \midrule
        Spiral (\Cref{subsec:cond-density}) & 0.4 s & \makecell{Quadro RTX 8000} \\
        One step data assimilation (\Cref{subsubsec:one-step-DA}) & 0.4 s & \makecell{V100} \\
        Advection-diffusion-reaction (\Cref{subsubsec:ADR}) & 0.04 s & \makecell{L40s} \\
        Quasi-static elastography (\Cref{subsubsec:sci}) & 215 s&A100-80GB  \\
        Quasi-static elastography (\Cref{subsubsec:elasto2}) & 160 s & \makecell{A100-80GB} \\
        Tumor spheroids (\Cref{subsubsec:TS})& 100 s &  \makecell{A100-80GB} \\
        \bottomrule
    \end{tabular}
\end{table}

\section{Additional results on the effects of overfitting with finite training data}\label{app:finite-data}

We investigated the effects of overfitting the velocity network to a small amount of data in \Cref{subsec:overfitting-experiment}. In this appendix, we further investigate the effects of overfitting with a larger velocity network and more training data. 

\begin{figure}[!b]
    \centering
    \includegraphics[width=0.7\linewidth,trim={200 20 0 0},clip]{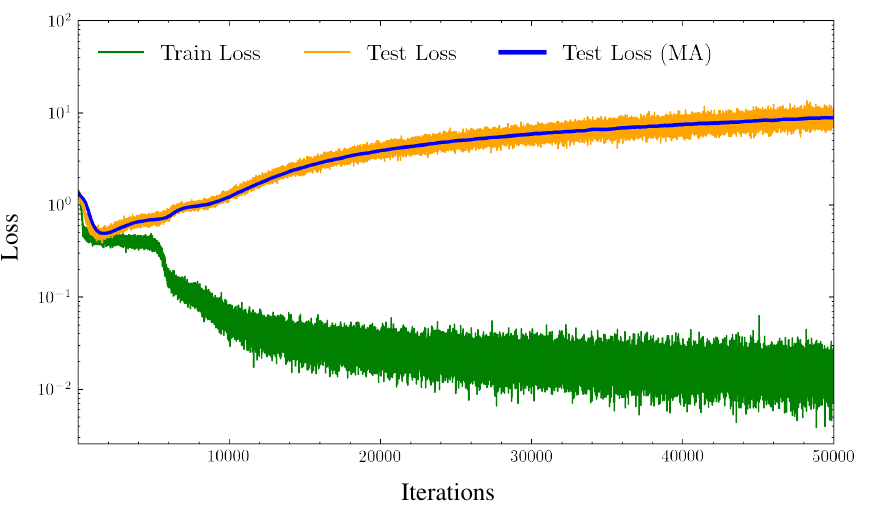}
    \caption{Train and test loss curve for the velocity network with 3 hidden layers each of width 64 trained using the training data shown in \Cref{fig:conditional_memorization_data_and_loss}(a). The blue curve above shows the moving average (MA) of the test loss}
    \label{fig:conditional_memorization_data_and_loss_size}
\end{figure}
\subsection{Increasing the size of the network}
First, we use the same training data as in \Cref{subsec:overfitting-experiment} but increase the size of the velocity network by doubling the width of each layer from 32 in \Cref{tab:vel-network-MLP} to 64. This yields a velocity network with more learnable parameters and, therefore, greater capacity. As before, we use the standard normal distribution, i.e., $Z \sim \mathcal{N}(0,1)$, as the source distribution. The training and test losses for the velocity network are shown in \Cref{fig:conditional_memorization_data_and_loss_size}. As in \Cref{subsec:overfitting-experiment}, we observe that the test loss initially decreases, attains a minimum at approximately 2{,}000 iterations, and subsequently increases. Therefore, the nature of the loss curve in \Cref{fig:conditional_memorization_data_and_loss_size} is similar to \Cref{fig:conditional_memorization_data_and_loss}(b). However, the inflection in the moving average of the test loss happens sooner for the larger network. This is consistent with overfitting behavior observed in regression problems, wherein an over-parameterized function exhibits stronger overfitting compared to an under-parameterized function; it is well known that under-parameterization acts as a source of regularization. 

\begin{figure}[!t]
    \centering
    \includegraphics[width=\linewidth]{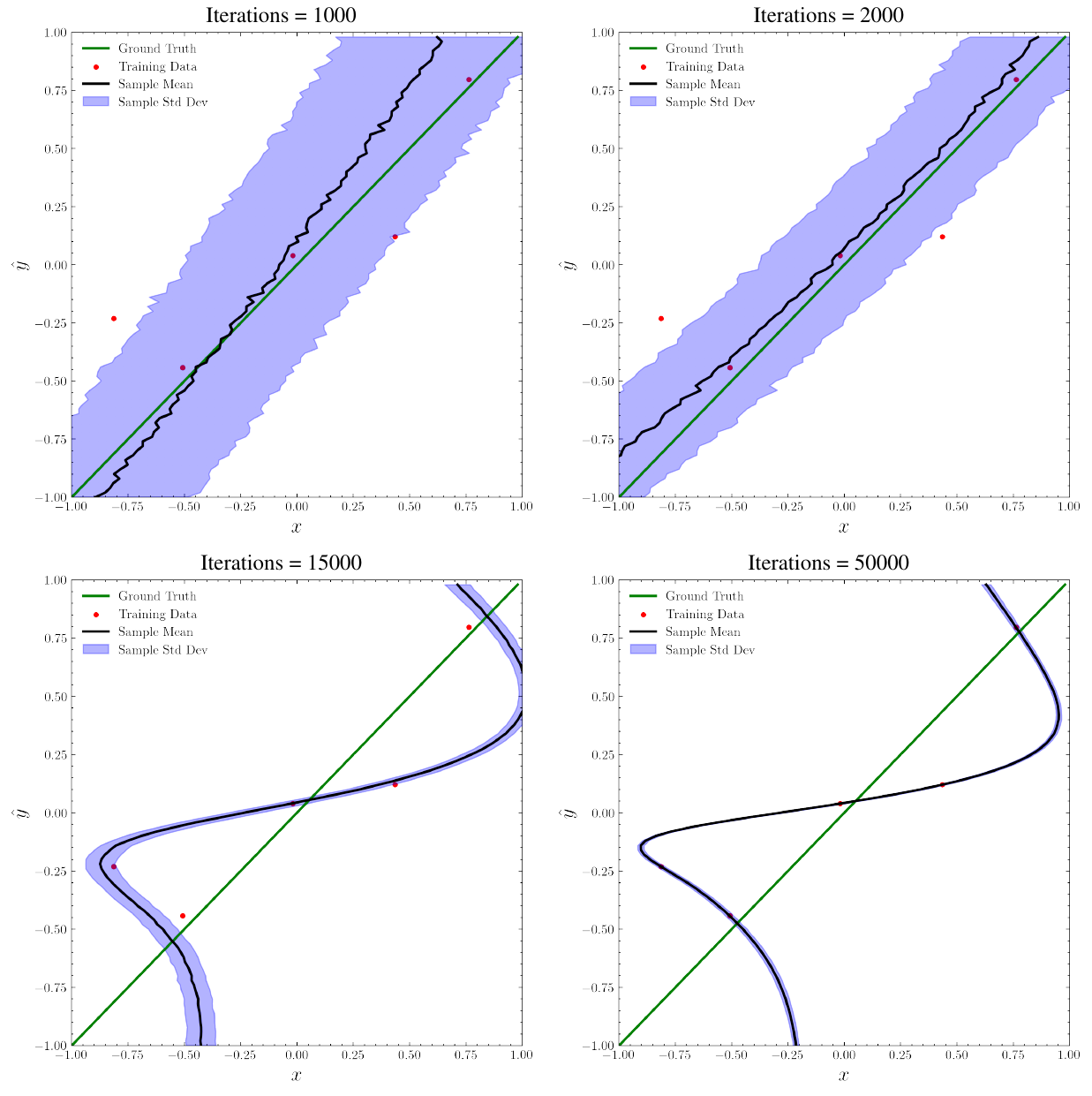}
    \caption{Mean and one-standard-deviation interval of the conditional distribution $\rho_{X|Y}$ estimated using samples generated by the trained velocity network with 3 hidden layers each of width 64 at different stages of training}
    \label{fig:conditional_memorization_posterior_size}
\end{figure}
Next, we use the trained velocity network (trained for a fixed number of iterations) to sample the conditional distribution  $\rho_{X \mid Y}$ for different values of $Y$. In each case, we estimate the mean and standard deviation from the samples. \Cref{fig:conditional_memorization_posterior_size} shows the estimated mean and the one-standard-deviation interval of the realizations as functions of $Y$. As in \Cref{fig:conditional_memorization_posterior}, we observe that as the number of training iterations increases, the standard deviation decreases for all values of $Y$. It is also noteworthy that the network learns a dependence of $\mathbb{E}[X \mid Y]$ on $Y$ similar to that shown in \Cref{fig:conditional_memorization_posterior}, though the reasons for this behavior warrant further investigation beyond the scope of the present work. We also note two differences between \Cref{fig:conditional_memorization_posterior,fig:conditional_memorization_posterior_size}. First, after 1000 iterations, the standard deviation of the posterior estimated using the velocity network with more parameters is smaller than that of the original velocity network. This is consistent with the earlier inflection in the test loss for the over-parameterized velocity network. The standard deviation estimated using the velocity networks trained for 15,000 iterations also shows similar behavior. Nonetheless, the estimated standard deviation vanishes in both cases when the networks are severely trained (50,000 iterations).

\begin{figure}[!b]
    \centering
    \includegraphics[width=\linewidth]{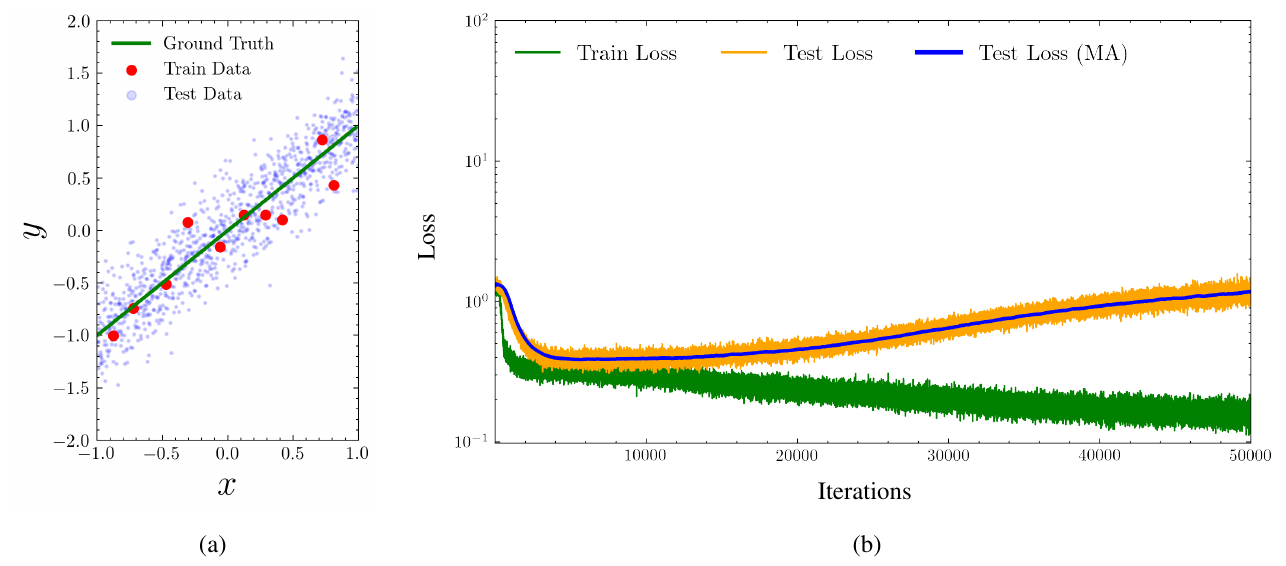}
    \caption{(a) Train and test data for the toy example used to illustrate the effects of overfitting with more data. (b) Train and test loss curve for the velocity network trained using the training data shown in \Cref{fig:conditional_memorization_data_and_loss_data}(a). The blue curve above shows the moving average (MA) of the test loss}
    \label{fig:conditional_memorization_data_and_loss_data}
\end{figure}
\subsection{Increasing the number of training data points}
Next, we use the same velocity network as in \Cref{subsec:overfitting-experiment} (meaning the size of the network is the same as we report in \Cref{tab:vel-network-MLP}) but increase the number of training data points from 5 to 10. \Cref{fig:conditional_memorization_data_and_loss_data}(a) shows the training data points, 1000 test samples, and the curve $Y=X$. Like \Cref{subsec:overfitting-experiment}, we use the standard normal distribution, i.e., $Z \sim \mathcal{N}(0,1)$, as the source distribution. The training and test losses for the velocity network are shown in \Cref{fig:conditional_memorization_data_and_loss_data}(b). In this case, we observe that the test loss initially decreases, attains a minimum around 4,000 iterations, saturates and then slowly increases up until to 10,000 iterations, and increases sharply beyond that. Significantly, the introduction of additional training data points delays the point at which the moving average of the test loss attains a minimum value. In this case, the moving average of the test loss attains its minimum value close to 4,000 iterations compared to 3,000 iterations with 5 training data points. Moreover, once the test loss reaches its minimum, it rises more sharply with five training data points than in the present case. Again, this behavior is expected as the introduction of additional training data points helps delay the onset of overfitting.

\begin{figure}[!t]
    \centering
    \includegraphics[width=\linewidth]{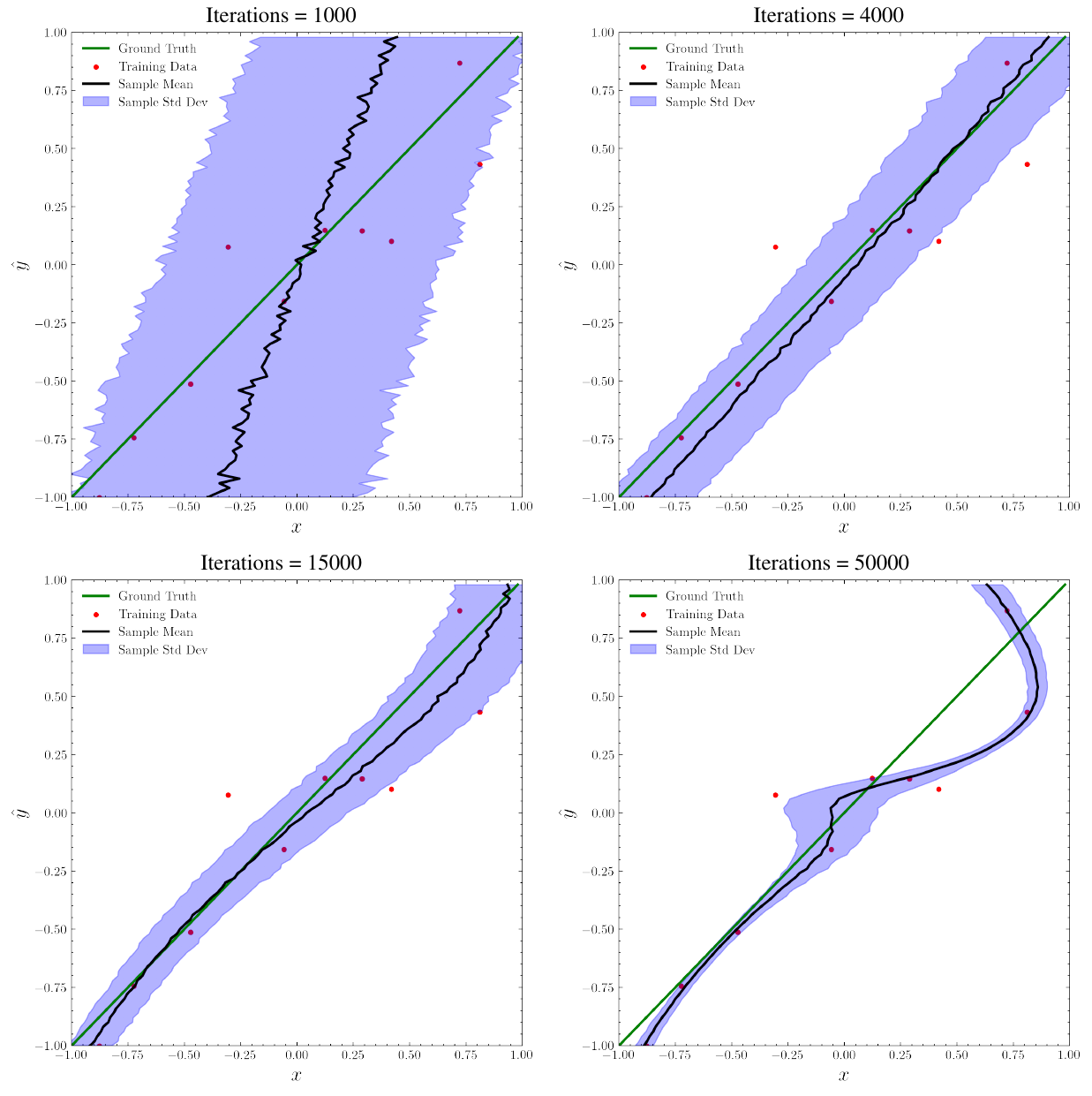}
    \caption{Mean and one-standard-deviation interval of the conditional distribution $\rho_{X|Y}$ estimated using samples generated by the trained velocity network at different stages of training and using more training data}
    \label{fig:conditional_memorization_posterior_data}
\end{figure}
Next, we use the trained velocity network (trained for a fixed number of iterations) to sample the conditional distribution  $\rho_{X \mid Y}$ for different values of $Y$, and subsequently estimate the mean and standard deviation from the samples. \Cref{fig:conditional_memorization_posterior_data} shows the estimated mean and the one-standard-deviation interval of the realizations as functions of $Y$. Like \Cref{fig:conditional_memorization_posterior,fig:conditional_memorization_data_and_loss_size}, we observe from \Cref{fig:conditional_memorization_posterior_data} that the standard deviation decreases for all values of $Y$ as the number of training iterations increases. A good approximation to the posterior is achieved at 4,000 iterations, near the point at which the moving average of the test loss attains its minimum value. Also, the bias in the estimated mean obtained using the velocity network trained for 4,000 iterations with 10 training data points is lower than the velocity network trained for 3,000 iterations with 5 training data points. Interestingly, the standard deviation does not disappear for all values of $Y$ when the velocity network is trained for 50,000 iterations in this case. The bottom-right subplot of \Cref{fig:conditional_memorization_posterior_data} reveals the local nature of overfitting: the estimated standard deviation is small in regions with sparse training data coverage, but comparatively larger near the cluster of training points in the range $0.25 \leq x \leq 0.50$. 

\section{Details of the data assimilation problem}\label{app:da-details}

We consider the discrete-time probabilistic state–space model 
\begin{equation}\label{eq:DA-SSM}
    \bm{x}_k \sim \prob{\X_k|\X_{k-1}}(\x_k|\x_{k-1}), \quad 
    \bm{y}_k \sim \prob{\Y_k|\X_k}(\y_k|\x_{k})
\end{equation}
where $\x_k \in \mathbb{R}^{d}$ denotes an $d$-dimensional state vector at data assimilation step $k \in \mathbb{Z}^{+}$, and $\y_k \in \mathbb{R}^{D}$ denotes the corresponding $D$-dimensional observation. The distribution $\prob{\X_k | \X_{k-1}}(\x_k|\x_{k-1})$ characterizes the evolution of the state vector, while $\prob{\Y_k|\X_k}(\y_k|\x_{k})$ defines the observation likelihood. Given the state–space model in \Cref{eq:DA-SSM}, Bayesian filtering provides the filtered or posterior distribution 
\begin{equation}\label{eq:Bayesian-filter-update}
    \prob{\X_k|\Y_{1:k}}(\x_k|\y_{1:k}) \propto \prob{\Y_k|\X_{k}}(\y_k|\x_{k}) \prob{\X_{k}|\Y_{1:k-1}}(\x_k | \y_{1:k-1}) 
\end{equation}
where $\y_{1:k}$ denotes the realized observations up to data assimilation step $k$. Also, in \Cref{eq:Bayesian-filter-update},
\begin{equation}\label{eq:Bayesian-filter-propagate}
    \prob{\X_{k}|\Y_{1:k-1}}(\x_k | \y_{1:k-1}) = \int \prob{\X_k|\X_{k-1}}(\x_k|\x_{k-1}) \prob{\X_k|\Y_{1:k-1}}(\x_k|\y_{1:k-1})\mathrm{d}\x_k
\end{equation}
acts as the prior distribution. For nonlinear and non-Gaussian systems, \Cref{eq:Bayesian-filter-update} is generally intractable, and practical data assimilation methods therefore approximate the posterior distribution $\prob{\X_k|\Y_{1:k}}(\x_k|\y_{1:k})$ in \Cref{eq:Bayesian-filter-update} and the prior distribution $\prob{\X_{k}|\Y_{1:k-1}}(\x_k | \y_{1:k-1})$ in \Cref{eq:Bayesian-filter-propagate} using an ensemble of realizations or particles. 

For this problem, we consider one step of the Lorenz-63 system~\cite{DeterministicNonperiodicFlow} --- a widely used benchmark in data assimilation due to its nonlinear, low-dimensional chaotic dynamics. The dynamics are governed by the following equations:
\begin{equation}\label{eq:DA-L63}
    \dot{x}_1 = \sigma (x_2-x_1), \; 
    \dot{x}_2 = \rho x_1 - x_2 - x_1 x_3, \;
    \dot{x}_3 = x_1 x_2 - \beta x_3,
\end{equation}
where $\dot{x}$ denotes the derivative derivative of the state variable with respect to physical time. We set the parameter values $\sigma=10$, $\rho=28$, and $\beta=8/3$, which ensures the system is in a chaotic regime~\cite{DeterministicNonperiodicFlow}. We use $[-1.27323174, -0.00702107,  0.74486393]$ as the initial condition, where each component was sampled independently from $\mathcal{N}(0, 1)$. We integrate \Cref{eq:DA-L63} using a simple forward Euler integration scheme with a step size of 0.01 s and zero-mean Gaussian process noise with covariance matrix $0.01^2\mathbb{I}_3$. We also consider the following observation operator (same as \Cref{eq:DA-obs_op}):
\begin{equation}\label{eq:DA-obs_op2}
    y = x_3 + \epsilon,
\end{equation}
where $\epsilon \sim \mathcal{N}(0, 0.5^2)$ denotes the measurement noise. We assume the observations are made at regular intervals of 0.1 s. Appropriate discretization of \Cref{eq:DA-L63} for forward Euler integration, coupled with the additive process noise, and the observation operator \Cref{eq:DA-obs_op2} yields a state-space model consistent with \Cref{eq:DA-SSM}. Note that observations are made every 10 integration steps for the dynamical system. 

Herein, we use $\x = [x_1, x_2, x_3]^T$ to denote a realization of the state vector $\X$ at data assimilation step $k$. So, $\X$ is three-dimensional, \ie $d=3$, while the observation $Y$ is a scalar, \ie $D=1$. Since the state is partially observed, the posterior distribution can exhibit significant non-Gaussian structures like bimodality. This study focuses on a single data assimilation step, at $k=3$, chosen to yield a nontrivial transformation where the prior distribution is unimodal while the conditioning on the observation induces a bimodal posterior; see \Cref{fig:DA-distributions}. 


\bibliographystyle{elsarticle-num-names} 
\bibliography{references}






\end{document}